\documentclass{article}

\usepackage{iclr2027_conference,times}
\iclrfinalcopy
\usepackage{amsmath,amsfonts,bm}
\usepackage{amsthm}
\def\eqref#1{equation~\ref{#1}}
\def\1{\bm{1}}

\def\vb{{\bm{b}}}

\def\vs{{\bm{s}}}

\def\vu{{\bm{u}}}
\def\vv{{\bm{v}}}
\def\vw{{\bm{w}}}

\def\vz{{\bm{z}}}

\def\mJ{{\bm{J}}}

\def\mU{{\bm{U}}}
\def\mV{{\bm{V}}}

\DeclareMathAlphabet{\mathsfit}{\encodingdefault}{\sfdefault}{m}{sl}
\SetMathAlphabet{\mathsfit}{bold}{\encodingdefault}{\sfdefault}{bx}{n}

\newcommand{\R}{\mathbb{R}}

\newtheorem{theorem}{Theorem}
\usepackage[utf8]{inputenc}
\usepackage[T1]{fontenc}
\usepackage{hyperref}
\usepackage{url}
\usepackage{booktabs}
\usepackage{amsfonts}
\usepackage{nicefrac}
\usepackage{caption}
\usepackage{microtype}
\usepackage{xcolor}
\definecolor{class1}{HTML}{404040} 
\definecolor{class2}{HTML}{0072B2} 
\definecolor{class3}{HTML}{D55E00} 
\definecolor{class4}{HTML}{009E73}
\definecolor{class5}{HTML}{CC79A7}
\definecolor{class6}{HTML}{F0E442} 
\usepackage{algorithm}
\usepackage{algpseudocode}
\usepackage{graphicx}
\usepackage{subcaption}
\usepackage{cleveref}
\usepackage{pgfplots}
\pgfplotsset{compat=1.18}
\usepgfplotslibrary{groupplots}

\title{Improving Generative Model Self-training with Geometrically Modified Outputs}

\author{Patrick Batsell, Thomas Walker, \& Richard Baraniuk \\
Department of Electrical and Computer Engineering\\
Rice University\\
Houston, TX 77005, USA \\
\texttt{\{pb52, tw78, richb\}@rice.edu} \\}
\begin{document}

\maketitle

\begin{abstract}
  Self-training generative models -- the continued improvement of a model using its own outputs -- is becoming increasingly important as high-quality training data becomes scarce. However, naïvely finetuning on model-generated samples leads to degradation through model collapse and the model autophagy disorder. Negative-guidance self-training methods turn this degradation into a useful signal, using a model finetuned on its own outputs to guide the original model toward improved generation. Existing methods, however, take the negative signal in standard model outputs as given. We instead ask whether this signal can be explicitly strengthened. We introduce Geometrically Modified Outputs (GMOs), which reweight the singular values of the generator’s input-output Jacobian to increase the influence of its leading singular directions. This geometric modification amplifies the mode-seeking behavior and distortions of standard outputs, providing a stronger and more targeted negative signal for self-training. Across a range of one-step generative models, GMOs consistently improve the performance of negative-guidance methods, including Neon and SIMS, compared with using standard model outputs.
\end{abstract}

\begin{figure}[ht]
    \centering

    \begin{subfigure}[b]{0.32\textwidth}
        \centering
        \includegraphics[width=\textwidth]{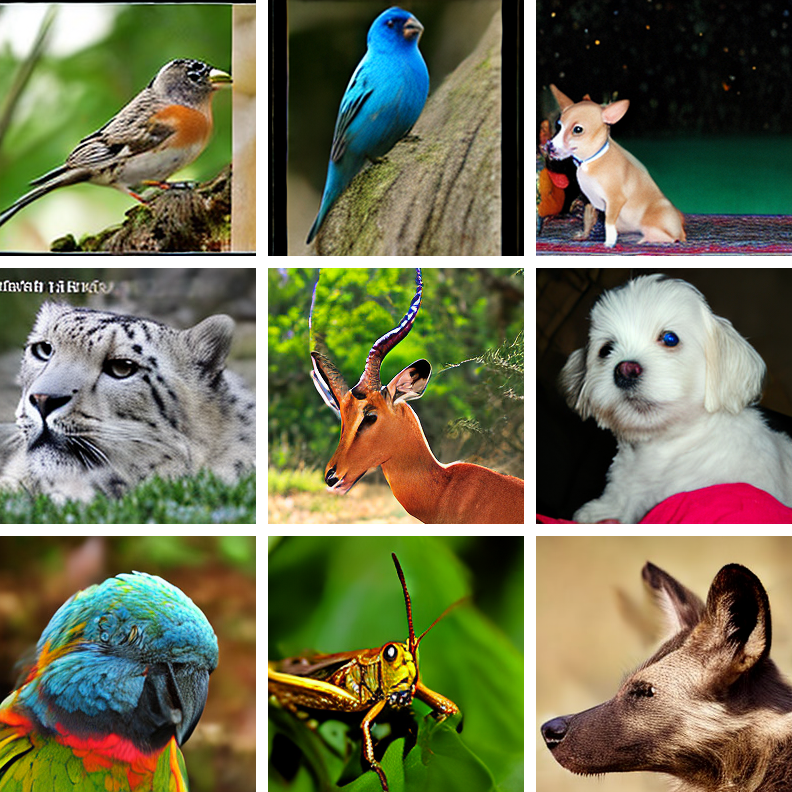}
        \caption*{Standard Outputs}
    \end{subfigure}
    \hfill
    \begin{subfigure}[b]{0.32\textwidth}
        \centering
        \includegraphics[width=\textwidth]{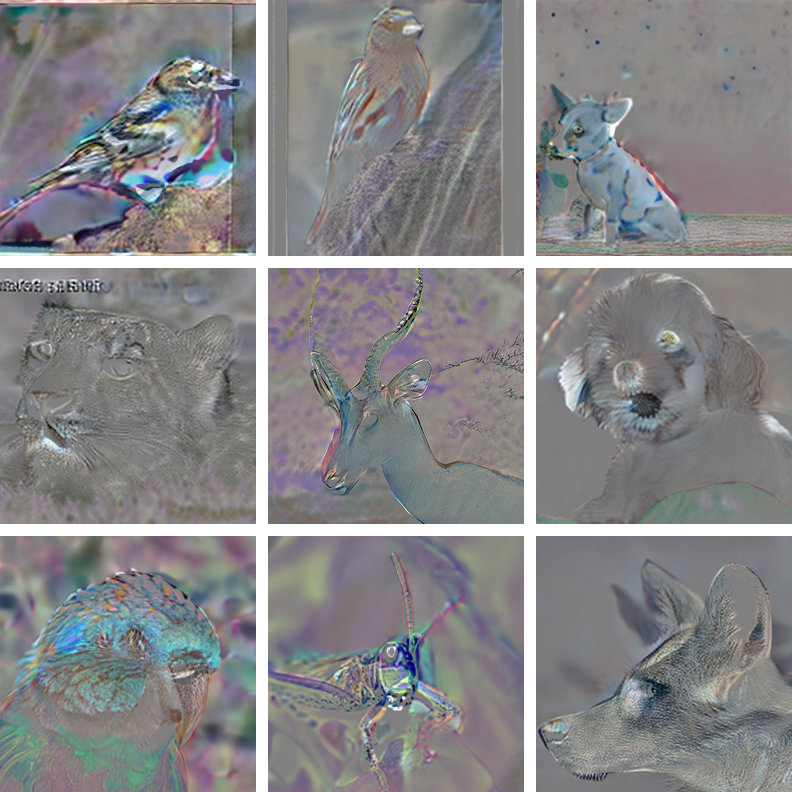}
        \caption*{Perturbation}
    \end{subfigure}
    \hfill
    \begin{subfigure}[b]{0.32\textwidth}
        \centering
        \includegraphics[width=\textwidth]{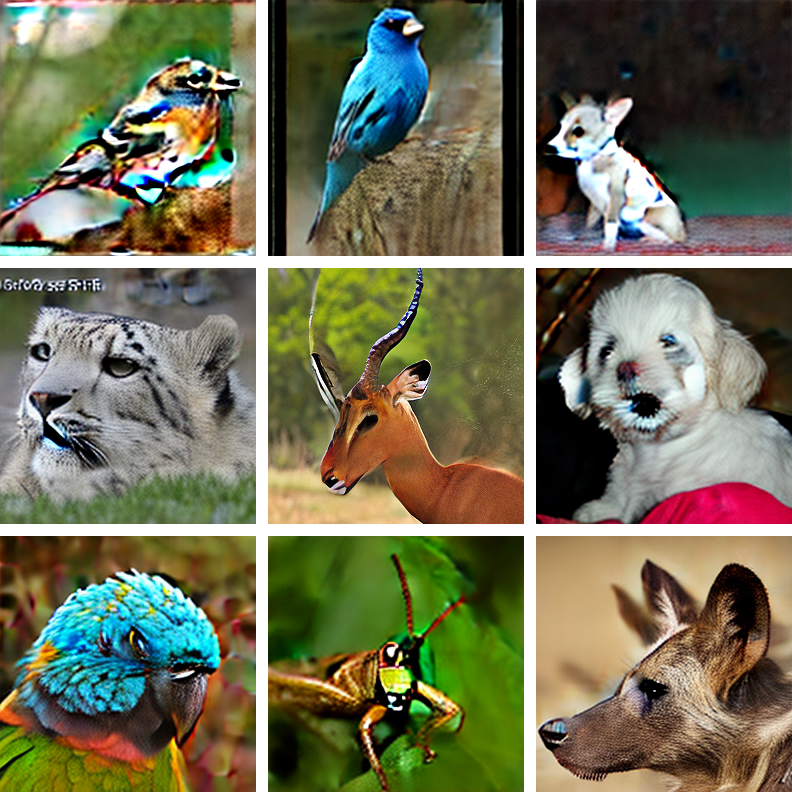}
        \caption*{Geometrically Modified Outputs}
    \end{subfigure}

    \caption{
    \textbf{Geometrically Modified Outputs (GMOs) improve the performance of one-step generative models through self-training.}
    Here we present examples of GMOs for the IMM~\citep{zhouInductiveMomentMatching2025} one-step generative model trained on Imagenet256~\citep{krizhevskyImageNetClassificationDeep2012}.
    In the first panel, we show the model's standard outputs; in the center panel, we show the perturbations of these outputs that yield the corresponding GMOs in the right panel (see \Cref{fig:model_outputs_more} for more examples). 
    Using GMOs in self-training algorithms more effectively improves generative models.
    }
    \label{fig:model_outputs}
\end{figure}

\section{Introduction}

The performance of generative models for image generation has benefited greatly from increasing amounts of compute and high-quality training data~\citep{kaplanScalingLawsNeural2020,henighan2020scalinglawsautoregressivegenerative}.
Although the path for increasing compute is relatively clear~\citep{TrendsArtificialIntelligence}, significantly expanding the quantity of high-quality training data represents a fundamental challenge~\citep{muennighoff2025scalingdataconstrainedlanguagemodels,villalobosWillWeRun2024}.
Consequently, the field of self-training--the continued development of a model using only its own outputs--is gaining significant interest~\citep{kimRefiningGenerativeProcess2023,alemohammadSelfimprovingDiffusionModels2024,yuanSelfplayFinetuningDiffusion2024,alemohammadNeonNegativeExtrapolation2026,zhengDirectDiscriminativeOptimization2025,fengModelCollapseScaling2025,karrasGuidingDiffusionModel2024}.

Although the utilization of synthetic data has been established in adjacent fields, such as for data augmentation when training classifiers~\citep{wangBetterDiffusionModels2023,heSyntheticDataGenerative2023} and for bootstrapping reinforcement learning models through self-play~\citep{silverGeneralReinforcementLearning2018}, the situation is more nuanced with generative models.
Na\"{i}ve self-training--finetuning a generative model on its own outputs--is detrimental as it leads to the ``model autophagy disorder'' (MAD)~\citep{alemohammadSelfconsumingGenerativeModels2024} and ``model collapse''~\citep{shumailovAIModelsCollapse2024}.
These phenomena describe the degradation of model quality due to the appearance of unwanted artifacts in its outputs and a reduction in the diversity of those outputs.
Common strategies implemented by self-training algorithms include, self-play~\citep{yuanSelfplayFinetuningDiffusion2024}, direct discriminative optimization guidance~\citep{zhengDirectDiscriminativeOptimization2025}, and output verification~\citep{fengModelCollapseScaling2025}, with the strongest results observed using negative guidance~\citep{alemohammadSelfimprovingDiffusionModels2024,alemohammadNeonNegativeExtrapolation2026,karrasGuidingDiffusionModel2024}.

In this paper, we ask: {\em Can the effectiveness of self-training algorithms using negative guidance be improved by augmenting model outputs to amplify the negative signal?}
% In particular, we consider the Neon self-training algorithm~\citep{alemohammadNeonNegativeExtrapolation2026} which finetunes generative models on their outputs to obtain a negative signal to improve performance.
% Unlike other self-training methods, Neon is simple in design and generally applicable across different architectures.

We answer this question affirmatively with Geometrically Modified Outputs (GMOs), by demonstrating that they can be used to improve the effectiveness of the SIMS~\citep{alemohammadSelfimprovingDiffusionModels2024} and Neon self-training algorithm~\citep{alemohammadNeonNegativeExtrapolation2026}.
% Neon finetunes generative models on their outputs to obtain a negative signal to improve performance, and has been shown to be an effective and generally applicable self-training algorithm~\citep{alemohammadNeonNegativeExtrapolation2026}.

GMOs leverage the observation of \citet{batsellGeometricPerspectiveRecursive2026}, and verified in \Cref{fig:naive_self_training}, that the effective ranks of the input-output Jacobians of generative models collapse during na\"{i}ve self-training.
Examples of how GMOs compare with standard model outputs are shown in \Cref{fig:model_outputs,fig:model_outputs_more}.
GMOs represent a geometric perspective on the self-training problem, which has not been explicitly leveraged in prior works~\citep{kimRefiningGenerativeProcess2023,alemohammadSelfimprovingDiffusionModels2024,yuanSelfplayFinetuningDiffusion2024,alemohammadNeonNegativeExtrapolation2026,zhengDirectDiscriminativeOptimization2025,fengModelCollapseScaling2025,karrasGuidingDiffusionModel2024}.

% Our contributions are the following:

% \textbf{[C1.]} We propose Geometrically Modified Outputs (GMOs), a structural augmentation of model outputs designed to amplify the negative signal derived by utilizing the generator's geometry. 
% With \Cref{alg:gmos}, we provide an efficient, scalable algorithm for computing GMOs using only Jacobian-vector products (see \Cref{sec:computational_complexity}).

% \textbf{[C2.]} \fix{We demonstrate that GMOs yield net improvements in FID while improving image quality and diversity simultaneously (see \Cref{fig:gmo_alpha_sweep} and \Cref{tab:prdc}).}
% This provides a fundamentally stronger negative signal than Gaussian noise\fix{: because the degradation induced by GMOs is more directed, optimal checkpoints are reached with less finetuning} (see Figures \ref{fig:b2_imagenet256} and \ref{fig:compute_weight_merging}).

% \textbf{[C3.]} We show that integrating GMOs into the Neon self-training algorithm consistently improves FID scores across highly optimized one-step architectures, including MeanFlow, AlphaFlow, and IMM (see Table \ref{tab:sota}). 
% Furthermore, we demonstrate that GMOs are transferable, successfully improving larger parameter models and multi-step inference regimes without requiring recomputation (see \Cref{fig:transfer}).

\section{The Geometry and Self-training of Generative Models}

In this section, we introduce relevant notation for the generative models we consider (\Cref{sec:generative_models}), we describe what we mean by the {\em geometry} of generative models (\Cref{sec:geometry}), and we introduce the {\em self-training} problem (\Cref{sec:self_training}).

\subsection{Generative Models}\label{sec:generative_models}

We consider generative models $G_{\theta}$ with parameters $\theta$ that are resultant of a training algorithm $\mathcal{A}$ being applied to a set of training data $\mathcal{D}$ drawn from a distribution $p_{\text{data}}$.
In this context, $\mathcal{D}$ represents high-quality real data.
Generative model architecture can differ in the inference routines they use for generating outputs.
Generally, we let $\mathcal{I}$ represent this inference routine and let $q_{\theta,\kappa}$ denote the induced sampling distribution when $\mathcal{I}$ is implemented with hyperparameter $\kappa$.

In this paper, we are concerned with one-step inference routines~\citep{song2023consistencymodels,fransOneStepDiffusion2025,gengMeanFlowsOnestep2025,zhouInductiveMomentMatching2025,dengGenerativeModelingDrifting2026,yue2026image,yang2026representation}.
More specifically, $\mathcal{I}$ generates an output $\vs\in\mathbb{R}^d$ based on a latent vector $\vz\in\mathbb{R}^h$, which we summarize as a map $g_{\theta}:\mathbb{R}^h\to\mathbb{R}^d$.

\subsection{The Geometry of Generative Models}\label{sec:geometry}

A standard output of a one-step generative model is generated by sampling a latent vector $\vz\in\mathbb{R}^h$ and applying the map $g_{\theta}(\vz)=\vs\in\mathbb{R}^d$.
We can decompose this output as $\vs=\mJ_{\vz}\vz+\vb_{\vz}$ where $\mJ_{\vz}\in\mathbb{R}^{d\times h}$ is the input-output Jacobian of $g_{\theta}$ at $\vz$, and $\vb_{\vz}:=\vs-\mJ_{\vz}\vz$ is the offset of $g_{\theta}$ at $\vz$.
We take the {\em local geometry} of $g_{\theta}$ at $\vz$ to be the spectral properties of $\mJ_{\vz}$.
Let $\vu^{(k)}_{\vz}\in\mathbb{R}^d$ and $\vv^{(k)}_{\vz}\in\mathbb{R}^h$ denote the $k^{\text{th}}$ left and right singular vectors of $\mJ_{\vz}$, respectively, with $\sigma^{(k)}_{\vz}\in\mathbb{R}$ being the corresponding singular value.
Let $\mU_{\vz}\in\mathbb{R}^{d\times r}$ and $\mV_{\vz}\in\mathbb{R}^{h\times r}$ be the corresponding matrices of left and right singular vectors, such that $\mJ_{\vz}=\mU_{\vz}\mathrm{diag}\left(\sigma_{\vz}^{(1)},\dots,\sigma_{\vz}^{(r)}\right)\mV_{\vz}^{\top}$.

\subsection{Self-training Generative Models}\label{sec:self_training}

Self-training is the problem of constructing a learning algorithm that uses only the model's outputs to yield a generative model $G_{\theta^\prime}$ that outperforms $G_{\theta}$.
More formally, self-training involves applying a learning algorithm $\tilde{\mathcal{A}}$ on a set of data $\mathcal{S}$ sampled from $q_{\theta,\kappa}$ to identify a set of parameters $\theta^{\prime}$ such that $G_{\theta^\prime}$ performs better than $G_{\theta}$.
Typically, self-training algorithms are constrained by a compute budget $\mathcal{B}$, which we measure as the cumulative number of images observed when using $\tilde{\mathcal{A}}$.

Many approaches exist for tackling the self-training problem.
The na\"{i}ve approach is to continue the learning algorithm $\mathcal{A}$ on $\mathcal{S}$; however, several studies observe that this leads to the degradation of the generative model~\citep{shumailovAIModelsCollapse2024,alemohammadSelfconsumingGenerativeModels2024}.
Principled methods for overcoming this, include self-play~\citep{yuanSelfplayFinetuningDiffusion2024}, direct discriminative optimization guidance~\citep{zhengDirectDiscriminativeOptimization2025}, negative guidance~\citep{alemohammadSelfimprovingDiffusionModels2024,alemohammadNeonNegativeExtrapolation2026,karrasGuidingDiffusionModel2024}, and output verification~\citep{fengModelCollapseScaling2025}.

% {\color{blue}
% The degradation that self-training induces has a geometric sign: finetuning a generator on its own outputs is mode-seeking.
% The model sharpens toward its dominant modes, and the spectrum of the generator's input-output Jacobian concentrates accordingly (see \Cref{sec:mode_seeking}). Methods that use this mode-seeking degradation as a negative signal take the signal as given, using the model's raw outputs; in this work we instead construct and amplify the negative signal itself.
% Each modified output is the sample the model would produce after a controlled amount of spectrum collapse, so a model finetuned on these outputs has moved further along the mode-seeking direction, and extrapolating away from it counteracts the collapse.
% The negative signal is therefore reliable whenever self-training degrades the model through mode-seeking, which is the regime this literature addresses and what we observe on all architectures tested.
% }

In this paper, we consider whether {\em the effectiveness of self-training algorithms using negative guidance can be improved by augmenting model outputs to amplify the negative signal.}

In particular, we focus on the Neon self-training algorithm~\citep{alemohammadNeonNegativeExtrapolation2026}, but we also consider SIMS~\citep{alemohammadSelfimprovingDiffusionModels2024}.
We focus on Neon as it has been demonstrated to efficiently and effectively improve the performance of state-of-the-art generative models more effectively than other self-training algorithms utilizing negative guidance~\citep{alemohammadNeonNegativeExtrapolation2026}.
Neon works by finetuning a base model $G_{\theta}$ on $\mathcal{S}$ to get $G_{\tilde{\theta}}$.
It then uses the direction between $\theta$ and $\tilde{\theta}$ as a negative signal to generate parameters $\theta^\prime=(1+w)\theta-w\tilde{\theta}$ for some $w\in\mathbb{R}^+$.
This process is summarized in \Cref{alg:neon}.

\begin{algorithm}[ht]
\caption{Neon}\label{alg:neon}
\begin{algorithmic}[1]
\Require Base model $G_{\theta}$, Inference routine $\mathcal{I}$ with hyperparameters $\kappa$, Weight-merging parameter $w$, Training budget $\mathcal{B}$
\State Sample $\mathcal{S}$ from $q_{\theta,\kappa}$.
\State $G_{\tilde{\theta}}\leftarrow\mathrm{FineTune}\left(G_{\theta},\mathcal{S},\mathcal{B}\right)$
\State $\theta^\prime\leftarrow(1+w)\theta-w\tilde{\theta}$
\end{algorithmic}
\end{algorithm}

Neon's guarantee that this extrapolation improves the model relies on $\mathcal{S}$ being \emph{mode-seeking}: concentrated toward the high-density regions of the model's own distribution~\citep{alemohammadNeonNegativeExtrapolation2026}.

\section{Geometrically Modified Outputs}\label{sec:gmos}

We propose Geometrically Modified Outputs (GMOs) as a strategy for amplifying the mode-seeking nature of model outputs with the intention of improving negative guidance self-training algorithms.
GMOs do this by amplifying the influence of the top singular vectors of the model's generator map.
This is motivated by the observations of \citet{batsellGeometricPerspectiveRecursive2026} which show that under na\"{i}ve self-training, the input-output Jacobians of generators become increasingly collapsed.
More specifically, their effective ranks\footnote{The effective rank of a matrix is defined as the exponential of the entropy of its singular values.} collapse and low-quality artifacts appear in the more prominent singular vectors.
In \Cref{fig:naive_self_training}, we verify the collapse of the effective rank for a MeanFlow generative model~\citep{gengMeanFlowsOnestep2025} na\"{i}vely self-training on ImageNet256~\citep{krizhevskyImageNetClassificationDeep2012}.
Moreover, we verify that GMOs exhibit amplified mode-seeking in \Cref{sec:mode_seeking}.

\begin{figure}[ht]
    \centering
    \centering
    \begin{tikzpicture}
    \begin{groupplot}[
        group style={
            group size=2 by 1,
            horizontal sep=2.0cm,
        },
        width=6cm,
        height=4.2cm,
        xlabel={$\mathcal{B}$},
        tick label style={font=\footnotesize},
        label style={font=\small},
        title style={font=\small},
        every axis plot/.append style={
            line width=1.1pt,
            mark=none,
            line cap=round,
            line join=round
        }
    ]

    \nextgroupplot[
        ylabel={FID}
    ]
    \addplot[color=class1] table [x=compute, y=0, col sep=comma] {data/naive_self_training_fid.csv};
    
    \nextgroupplot[
        ylabel={Effective Rank},
        legend style={
            at={(0.95,0.95)}, 
            anchor=north east, 
            font=\scriptsize,
            fill=white, 
            fill opacity=0.8, 
            draw opacity=1, 
            text opacity=1
        }
    ]
    \addplot[color=class1] table [x=compute, y=0, col sep=comma] {data/naive_self_training_rank.csv};

    \end{groupplot}
    \end{tikzpicture}
    \caption{
    \textbf{The geometry of a generative model collapses under na\"{i}ve self-training.}
    Here we finetune a MeanFlow SiT-B/2 model pre-trained on ImageNet256 for 50 epochs on $30,000$ of its own outputs.
    Throughout training, we monitor the FID~\citep{heuselGANsTrainedTwo2017} of the model (left), and its effective rank on a collection of 16 fixed latent vectors (right).
    }
    \label{fig:naive_self_training}
\end{figure}

\subsection{GMO Generation}

Consider a standard output $\vs=\mJ_{\vz}\vz+\vb_{\vz}\in\mathbb{R}^d$ generated from a latent vector $\vz\in\mathbb{R}^h$.
The corresponding $\alpha$-GMO is given by $\tilde{\vs}=\tilde{\mJ}_{\vz}\vz+\vb_{\vz}$, where $\tilde{\mJ}_{\vz}=\mU\mathrm{diag}\left(\tilde{\sigma}_{\vz}^{(1)},\dots,\tilde{\sigma}_{\vz}^{(r)}\right)\mV_{\vz}^{\top}$ with
\begin{equation*}
    \tilde{\sigma}_{\vz}^{(k)}=\begin{cases}\sqrt{(1-\alpha)\left(\sigma_{\vz}^{(1)}\right)^2+\alpha\left\Vert\mJ_{\vz}\right\Vert_F^2}&k=1\\\sqrt{1-\alpha}\sigma_{\vz}^{(k)}&k\geq2.\end{cases}
\end{equation*}
This reweighting preserves total spectral energy while transferring
an $\alpha$ fraction of the trailing energy to the leading component (Appendix~\ref{app:spectral_energy}).

Intuitively, the $\alpha$-GMO increases the influence that the top singular vectors have on the generated output.
With $\alpha$ equal to zero, the generated output remains unchanged, but with $\alpha$ equal to one, the output is entirely determined by the top singular vectors.
From the specific perspective of improving Neon, GMOs can be thought of as amplifying the ``mode-seeking'' nature of model outputs.
We explore this in \Cref{sec:mode_seeking}.

With \Cref{alg:gmos}, we provide an exact implementation of $\alpha$-GMOs that requires only the computation of the top spectral statistics of $\mJ_{\vz}$.

\begin{theorem}\label{thm:gmo_algorithm}
    For a generator $g:\mathbb{R}^h\to\mathbb{R}^d$, a vector $\vz\in\mathbb{R}^h$ and $\alpha\in[0,1]$, \Cref{alg:gmos} returns the $\alpha$-GMO of $\vs=g(\vz)$.
\end{theorem}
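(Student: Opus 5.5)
The plan is to rewrite the $\alpha$-GMO in closed form, so that it depends on $\mJ_{\vz}$ only through the standard output, a few Jacobian-vector products, and the top singular triple $(\sigma^{(1)}_{\vz},\vu^{(1)}_{\vz},\vv^{(1)}_{\vz})$ together with $\Vert\mJ_{\vz}\Vert_F^2$. Then I would check line by line that \Cref{alg:gmos} computes exactly this expression.

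\textbf{Closed form.} Write $\tilde{\mJ}_{\vz}=\sqrt{1-\alpha}\,\mJ_{\vz}+\bigl(\tilde\sigma^{(1)}_{\vz}-\sqrt{1-\alpha}\,\sigma^{(1)}_{\vz}\bigr)\vu^{(1)}_{\vz}\vv^{(1)\top}_{\vz}$. This holds because every singular value with $k\ge 2$ is scaled uniformly by $\sqrt{1-\alpha}$, and only the leading rank-one term changes. Substituting into $\tilde\vs=\tilde{\mJ}_{\vz}\vz+\vb_{\vz}$ and using $\vb_{\vz}=\vs-\mJ_{\vz}\vz$ gives
\begin{equation*}
\tilde\vs=\vs-\bigl(1-\sqrt{1-\alpha}\bigr)\mJ_{\vz}\vz+\Bigl(\sqrt{(1-\alpha)(\sigma^{(1)}_{\vz})^2+\alpha\Vert\mJ_{\vz}\Vert_F^2}-\sqrt{1-\alpha}\,\sigma^{(1)}_{\vz}\Bigr)\bigl(\vv^{(1)\top}_{\vz}\vz\bigr)\vu^{(1)}_{\vz}.
\end{equation*}
The ingredients are as follows.
\begin{itemize}
\item $\mJ_{\vz}\vz$ is a single JVP.
\item $\sigma^{(1)}_{\vz},\vu^{(1)}_{\vz},\vv^{(1)}_{\vz}$ come from power iteration on $\mJ_{\vz}^\top\mJ_{\vz}$, which needs only JVP/VJP products. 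I will assume the algorithm performs this step, or that its result is exact as stated.
\item $\Vert\mJ_{\vz}\Vert_F^2=\sum_i\Vert\mJ_{\vz}\bm e_i\Vert^2$ comes from JVPs, or from whatever exact expression the algorithm uses.
\end{itemize}

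\textbf{Matching the algorithm.} I would trace \Cref{alg:gmos} and show that its returned quantity equals the display above, handling each coefficient separately. I will also check the edge cases.
\begin{itemize}
\item At $\alpha=0$ both correction terms vanish, so $\tilde\vs=\vs$.
\item At $\alpha=1$ the output is $\vb_{\vz}+\Vert\mJ_{\vz}\Vert_F(\vv^{(1)\top}_{\vz}\vz)\vu^{(1)}_{\vz}$, consistent with the definition.
\end{itemize}
I will also note that the sign ambiguity of the singular pair is harmless, because $\vu\vv^\top$ is invariant under $(\vu,\vv)\mapsto(-\vu,-\vv)$. If the top singular value is repeated, the definition itself depends on the choice of basis. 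In that case I would state that the algorithm returns the GMO for the chosen top pair.

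\textbf{Main obstacle.} The main difficulty is the exactness claim for the spectral statistics. Power iteration only converges, and the Frobenius norm needs $h$ JVPs, or else it is an estimate if it is done stochastically. I would handle this by treating the returned top triple and norm as exact inputs, which is what the theorem statement implicitly requires, and then prove the algebraic identity. If the algorithm instead forms $\Vert\mJ_{\vz}\Vert_F^2$ via an identity such as $\mathrm{tr}(\mJ_{\vz}^\top\mJ_{\vz})$ computed exactly, I would verify that identity directly.
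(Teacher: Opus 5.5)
Your proposal is correct and is essentially the paper's proof. The paper expands $\mJ_{\vz}\vz=\sum_{k}\sigma^{(k)}_{\vz}(\vv^{(k)\top}_{\vz}\vz)\vu^{(k)}_{\vz}$, rewrites the tail $k\geq 2$ as $\mJ_{\vz}\vz$ minus the leading term, and obtains $\tilde{\mJ}_{\vz}\vz=\sqrt{1-\alpha}(\vs-\vb_{\vz})+(\tilde{\sigma}^{(1)}_{\vz}-\sqrt{1-\alpha}\sigma^{(1)}_{\vz})(\vv^{(1)\top}_{\vz}\vz)\vu^{(1)}_{\vz}$, which is your matrix identity applied to $\vz$; line 7 of the algorithm then follows because $\vs-\vb_{\vz}=\mJ_{\vz}\vz$.

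Your stated main obstacle does not arise: the algorithm and the paper's proof take $\mathrm{TopSigVec}$ and $E=\Vert\mJ_{\vz}\Vert_F^2$ as exact quantities, and power iteration and Hutchinson's estimator appear only as separate approximations.
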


\textit{Proof.} See \Cref{sec:proof}. \qed

\begin{algorithm}[ht]
\caption{Geometrically Modifying Outputs}\label{alg:gmos}
\begin{algorithmic}[1]
\Require Generator $g:\R^h\to\R^d$, $\alpha\in[0,1]$.
\State Sample $\vz\in\R^h$
\State $\vs\leftarrow g(\vz)$
\State $\vb_{\vz}\leftarrow\vs-\mJ_{\vz}\vz$
\State $(\vu,\vv,\sigma)\leftarrow\mathrm{TopSigVec}(g,\vz)$
\State $E\leftarrow\left\Vert\mJ_{\vz}\right\Vert_F^2$
\State $\tilde{\sigma}\leftarrow\sqrt{(1-\alpha)\sigma^2+\alpha E}$
\State $\tilde{\vs}\leftarrow\sqrt{1-\alpha}\left(\vs-\vb_{\vz}\right)+\left(\tilde{\sigma}-\sqrt{1-\alpha}\sigma\right)\left(\vv^\top\vz\right)\vu+\vb_{\vz}$
\end{algorithmic}
\end{algorithm}

\subsection{Computational Complexity of Generating GMOs}\label{sec:computational_complexity}

Computing exact GMOs involves manipulations involving the input-output Jacobian $\mJ_{\vz}\in\mathbb{R}^{d\times h}$.
For high-dimensional generative models, where both the latent dimension $h$ and data dimension $d$ are large, this poses a severe computational bottleneck.

Here, we provide a breakdown of the computational complexity of exactly implementing \Cref{alg:gmos} and compare it with an approximate implementation using power iteration and Hutchinson's estimator.
A point to note is that, this analysis is only applicable once for a given model, as given a latent vector $\vz$, lines $6$ and $7$ of \Cref{alg:gmos} can be applied to multiple different $\alpha$ values to generate GMOs for different $\alpha$ values simultaneously. 

Let $\mathcal{O}(C_f)$ denote the time complexity of a single forward pass of the generator $g(\vz)$.

\paragraph{Exact Implementation.}

To compute GMOs exactly, one must instantiate the full Jacobian matrix $\mJ_{\vz}$. 
Using automatic differentiation, this requires either $h$ forward-mode passes (Jacobian-Vector Products, JVPs) or $d$ reverse-mode passes (Vector-Jacobian Products, VJPs). 
This scales linearly with the dimensions, resulting in a time complexity of $\mathcal{O}(\min(d, h) \cdot C_f)$. 
The memory complexity to store the dense matrix is $\mathcal{O}(d \cdot h)$.
Once $\mJ_{\vz}$ is materialized, computing the exact singular value decomposition to find $\vu$, $\vv$, and $\sigma$ requires $\mathcal{O}\left(\min\left(d^2h, dh^2\right)\right)$ operations.
Computing the exact Frobenius norm requires an additional $\mathcal{O}(d\cdot h)$ operations.

In total, the exact time complexity is on the order of $\mathcal{O}\left(\min(d,h)\cdot C_f+\min\left(d^2h, dh^2\right)\right)$, with spatial memory scaling at $\mathcal{O}(d\cdot h)$.

\begin{figure}[t]
    \centering
    \begin{tikzpicture}
        \begin{groupplot}[
            group style={
                group size=4 by 1,
                horizontal sep=0.2cm
            },
            width=0.3\textwidth,
            height=4.2cm,
            xlabel={Iteration},
            tick label style={font=\footnotesize},
            label style={font=\small},
            title style={font=\small, yshift=-6pt},
            every axis plot/.append style={
                line width=1.1pt,
                mark=none,
                line cap=round,
                line join=round
            }
        ]
        \nextgroupplot[
            ymax=1.0,
            ymin=0.00001,
            ymode=log,
            ylabel={Relative Error},
            title={$\sigma_1$}
        ]
        \addplot[class2] table[x=step, y=avg_rel_err_sigma1, col sep=comma] {data/avg_errors.csv};
        \nextgroupplot[
            ymode=log,
            yticklabels={},
            title={$\vu_1$ and $\vv_1$},
            legend style={
                at={(0.95,0.95)}, 
                anchor=north east, 
                font=\scriptsize,
                fill=white, 
                fill opacity=0.8,
                draw opacity=1, 
                text opacity=1
            }
        ]
        \addplot[class3] table[x=step, y expr={1-\thisrow{avg_cos_sim_left}}, col sep=comma] {data/avg_errors.csv};
        \addlegendentry{$\vu_1$}
        \addplot[class3, dashed] table[x=step, y expr={1-\thisrow{avg_cos_sim_right}}, col sep=comma] {data/avg_errors.csv};
        \addlegendentry{$\vv_1$}
        \nextgroupplot[
            ymax=1.0,
            ymin=0.00001,
            ymode=log,
            yticklabels={},
            title={$E$}
        ]
        \addplot[class4] table[x=step, y=avg_rel_err_frob, col sep=comma] {data/avg_errors.csv};
        
        \nextgroupplot[
            xshift=1.2cm,
            ybar,
            bar width=12pt,
            ymin=0,
            ymax=23,
            xlabel={},
            ylabel={Time (s)},
            enlarge x limits=0.5,
            xtick={1, 2},
            xticklabels={Exact, Approximate},
            nodes near coords,
            every node near coord/.append style={font=\scriptsize},
        ]
        \addplot[fill=gray!40, draw=black] coordinates {(1, 19.3) (2, 8.63)};
        
        \end{groupplot}
    \end{tikzpicture}
    \caption{
    Here we consider computing spectral statistics from the input-output Jacobians of a MeanFlow one-step generator~\citep{gengMeanFlowsOnestep2025} with a UNet architecture~\citep{ronnebergerUNetConvolutionalNetworks2015,song2019generative} trained on CIFAR10~\citep{krizhevskyLearningMultipleLayers2009}.
    Ten latent vectors are sampled, and $E$, $\sigma_1$, $\vu_1$, and $\vv_1$ are either computed exactly or using approximations.
    Power iteration is implemented $30$ times to obtain estimates for $\hat{\sigma}_1$, $\hat{\vu}_1$, and $\hat{\vv}_1$. Similarly, $100$ samples are used to generate the Hutchinson estimate $\hat{E}$.
    In the first panel, we record $\nicefrac{\left\vert\hat{\sigma}_1-\sigma_1\right\vert}{\sigma_1}$.
    In the second panel, we record $1-\nicefrac{\left\langle\hat{\vu}_1,\vu_1\right\rangle}{\left\Vert\hat{\vu}_1\right\Vert_2\left\Vert\vu_1\right\Vert_2}$, and similarly for $\vv_1$.
    In the third panel, we record $\nicefrac{\left\vert\hat{E}-E\right\vert}{E}$.
    In the fourth panel, we record the time taken to exactly compute the Jacobian and its singular value decomposition (Exact) and the time taken to perform $20$ power iterations and $20$ Hutchinson approximations (Approximate).}
    \label{fig:computational_approximation}
\end{figure}

\paragraph{Approximate Implementation.}

To improve the tractability of computing GMOs, we can sidestep the materialization of $\mJ_{\vz}$ entirely.
Modern deep learning frameworks support fast JVPs and VJPs in roughly $\mathcal{O}(C_f)$ time, with spatial memory comparable to that of a standard forward or backward pass.

For computing approximations for $\vu$, $\vv$ and $\sigma$, we can use power iteration. 
Power iteration works by repeatedly applying the matrix $\mJ_{\vz}^\top \mJ_{\vz}$ to a random initialization vector. 
In practice, evaluating $\mJ_{\vz}^\top (\mJ_{\vz} \vw)$ amounts to one JVP followed by one VJP. 
For $k$ iterations, the time complexity is $\mathcal{O}\left(k\cdot C_f\right)$.

For computing an approximate value of the squared Frobenius norm, we can use Hutchinson's estimator.
The squared Frobenius norm equals the trace of the Gram matrix $\left\Vert\mJ_{\vz}\right\Vert_F^2 = \mathrm{tr}(\mJ_{\vz}^\top \mJ_{\vz})$. 
Hutchinson's trace estimator approximates this via $\frac{1}{m} \sum_{i=1}^m\left\Vert\mJ_{\vz}\vw_i\right\Vert_2^2$, where $\vw_i$ are standard normal vectors. 
Calculating this requires $m$ independent JVPs, yielding a time complexity of $\mathcal{O}\left(m\cdot C_f\right)$.

Consequently, the total time complexity for computing GMOs reduces to $\mathcal{O}\left((k+m)\cdot C_f\right)$. 
Similarly, the memory complexity drops to $\mathcal{O}(d+h)$, since we only have to store the vectors involved in the JVP or VJP computations.

The computational efficiency of the approximations is highlighted in the fourth panel of \Cref{fig:computational_approximation}, which shows that the approximate implementation of \Cref{alg:gmos} is around $2$ times faster than its exact implementation.
In particular, it makes the computation of GMOs tractable in settings where memory constraints would otherwise prevent it.

Importantly, with the three panels of \Cref{fig:computational_approximation}, we ensure that we do not lose much by utilizing this approximate implementation. The method is therefore insensitive to the power-iteration count above roughly ten iterations, and we use $20$.
In \Cref{fig:approximate_singular_vectors}, we qualitatively demonstrate that these approximations are sound by visually comparing the exact and approximate left singular vectors.

\section{Experiments}\label{sec:experiments}

In this section, we examine the characteristics of GMOs in comparison to standard outputs and how Neon with GMOs differs from Neon using standard outputs.
% We demonstrate that GMOs are more effective than perturbing with Gaussian noise (\Cref{sec:ablation}), improve the FID, precision and recall of generative models (\Cref{sec:fid_precision_recall}), improve the computational efficiency of Neon (\Cref{sec:effect_on_compute}), transfer between models (\Cref{sec:transfer}), and improve one-step generative models more effectively than standard outputs (\Cref{sec:improve_sota}).
% Only in \Cref{sec:effect_on_compute} is the exact implementation of \Cref{alg:gmos} being utilized.

\subsection{GMOs Outperform Standard Outputs and Gaussian Perturbed Outputs}\label{sec:ablation}

Here, we empirically verify that applying the Neon self-training to GMOs is more effective than applying it to standard outputs. 
In particular, we isolate this benefit to the geometric structure of GMOs rather than to the mere effect of injecting noise.
We do so by applying Neon to standard outputs perturbed with Gaussian noise of an equivalent magnitude.

To construct the outputs for the equivalent Gaussian baseline, we match the GMO perturbation amplitudes on a per-sample basis.
For a given standard output and its corresponding GMO, we compute the root mean square ($\gamma$) of their difference to measure the exact perturbation amplitude. 
We then draw isotropic Gaussian noise scaled by $\gamma$ and add it to the standard output.
This ensures that the Gaussian-perturbed outputs have the exact same total perturbation energy as the GMOs, but lack the directional guidance derived from the input-output Jacobian.

\begin{figure}[t]
    \centering
    \begin{subfigure}[b]{0.58\textwidth}
        \centering
        \resizebox{\linewidth}{!}{%
        \begin{tikzpicture}
        \begin{groupplot}[
            group style={
                group size=2 by 1,
                horizontal sep=1.2cm,
            },
            width=5cm,
            height=4.8cm,
            xlabel={$\mathcal{B}$},
            tick label style={font=\footnotesize},
            label style={font=\small},
            title style={font=\small},
            every axis plot/.append style={
                line width=1.3pt,
                mark=none,
                line cap=round,
                line join=round
            },
        ]

        \nextgroupplot[
            ylabel={FID}
        ]
        \addplot[color=class1] table [x=compute, y=0, col sep=comma] {data/b2_min_fid.csv};
        \addplot[color=class2] table [x=compute, y=0.05, col sep=comma] {data/b2_min_fid.csv};
        \addplot[color=class3] table [x=compute, y=0.2, col sep=comma] {data/b2_min_fid.csv};
        \addplot[color=class4, line width=1pt, dashed] table [x=compute, y=0.05, col sep=comma] {data/b2-gaussian_min_fid.csv};

        \nextgroupplot[
            ylabel={$w$},
            legend style={
                at={(0.95,0.95)}, 
                anchor=north east, 
                font=\scriptsize,
                fill=white, 
                fill opacity=0.8, 
                draw opacity=1, 
                text opacity=1
            }
        ]
        \addplot[color=class1] table [x=compute, y=0.0, col sep=comma] {data/b2_w_at_min_fid.csv};
        \addlegendentry{$\alpha=0.0$}
        \addplot[color=class2] table [x=compute, y=0.05, col sep=comma] {data/b2_w_at_min_fid.csv};
        \addlegendentry{$\alpha=0.05$}
        \addplot[color=class3] table [x=compute, y=0.2, col sep=comma] {data/b2_w_at_min_fid.csv};
        \addlegendentry{$\alpha=0.2$}
        \addplot[color=class4] table [x=compute, y=0.05, col sep=comma, dashed] {data/b2-gaussian_w_at_min_fid.csv};
        \addlegendentry{$\gamma=0.05$}

        \end{groupplot}
        \end{tikzpicture}}
    \end{subfigure}
    \hfill
    \begin{subfigure}[b]{0.09\textwidth}
        \includegraphics[width=\textwidth]{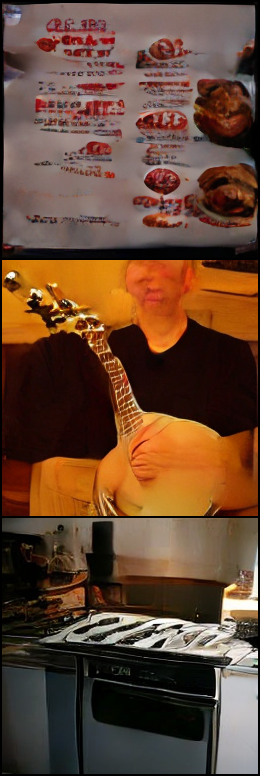}
        \caption*{\scriptsize $\alpha=0.0$}
    \end{subfigure}
    \begin{subfigure}[b]{0.09\textwidth}
        \includegraphics[width=\textwidth]{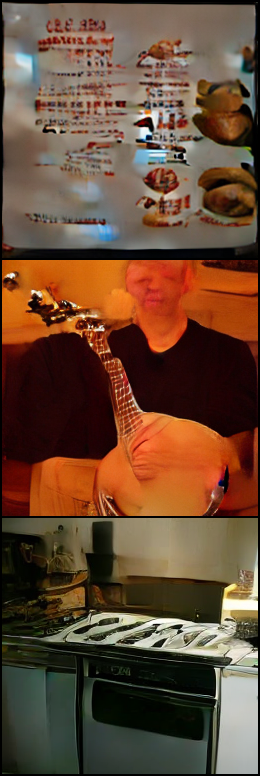}
        \caption*{\scriptsize $\alpha=0.05$}
    \end{subfigure}
    \begin{subfigure}[b]{0.09\textwidth}
        \includegraphics[width=\textwidth]{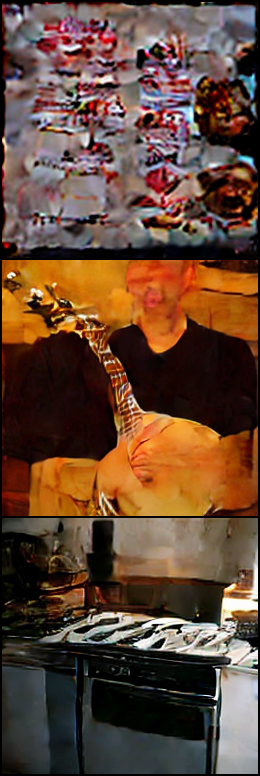}
        \caption*{\scriptsize $\gamma=0.05$}
    \end{subfigure}
    \caption{
    \textbf{GMOs provide meaningful perturbations to a model's outputs that improve the effectiveness of Neon.}
    Here, we consider applying the Neon self-training algorithm to the MeanFlow~\citep{gengMeanFlowsOnestep2025} SiT-B/2 model~\citep{maSiTExploringFlow2024} trained on ImageNet256~\citep{krizhevskyImageNetClassificationDeep2012}.
    We generated collections of $30,000$ GMOs for alpha values $0.0$ (standard outputs), $0.05$, and $0.2$.
    We then finetuned the pre-trained checkpoint with a maximum compute budget of $1.8\times10^6$.
    At regular checkpoints, we apply Neon with different weight-merging parameters $w$ in the range $[0,2]$.
    In the first and second panels, we show the minimum FID value achieved at each fine-tuning checkpoint with the corresponding weight-merging parameter $w$, respectively.
    In the third panel, we visually compare standard outputs to GMOs and the Gaussian baseline (see \Cref{fig:gaussian_examples} for more examples).
    }
    \label{fig:b2_imagenet256}
\end{figure}

In \Cref{fig:b2_imagenet256}, we see that GMOs with $\alpha$ equal to $0.05$ provide the best-performing generative model and outperform the Gaussian noise baseline.

Qualitatively, with the third panel of \Cref{fig:b2_imagenet256}, we see that the perturbations of GMOs are more similar to the standard output, whereas Gaussian noise seems to create large distortions in the output.\footnote{The data space in the MeanFlow is the latent space of a VAE tokenizer~\citep{rombach2022high}.}

\subsection{The Effect of GMOs on FID, Precision, Recall, Density and Coverage}\label{sec:fid_precision_recall}

Here, we investigate the underlying mechanics of how GMOs improve generative model performance by examining FID, precision, recall, density, and coverage~\citep{heuselGANsTrainedTwo2017,Kynkaanniemi2019improved,naeemReliableFidelityDiversity2020}.
We evaluate an IMM one-step generator~\citep{zhouInductiveMomentMatching2025} trained on ImageNet256~\citep{krizhevskyImageNetClassificationDeep2012} across a range of GMO coefficients $\alpha$.
For each evaluated curve, we report the optimum performance achieved across a fine-tuning budget $\mathcal{B}$ and merge weight $w$.
Alongside precision and recall, we report density and coverage, their outlier-robust refinements~\citep{naeemReliableFidelityDiversity2020}.
Recall is inflated by a few generated outliers with large neighborhoods, and precision by outliers in the real data.

As illustrated in the left panel of \Cref{fig:gmo_alpha_sweep}, applying GMOs yields a net improvement in FID compared to standard Neon outputs.
The FID score drops from its baseline ($\alpha$ equal to zero) to reach an optimal minimum around $\alpha$ equal to $0.2$ before slightly increasing at higher perturbation strengths.
The precision, recall, density, and coverage as a function of the GMO coefficient provides better insights into how this improvement is achieved.
The center and right panels show that as $\alpha$ increases from $0$ to $0.4$, the model's precision steadily increases while its recall steadily decreases -- a pattern that, taken at face value, would suggest GMOs trade diversity for quality.
The outlier-robust measures tell a different story: across all architectures (\Cref{tab:prdc}), density and coverage are maintained or improved from standard-outputs Neon to GMOs, and where the recall estimate dips under GMOs, most visibly on IMM ($-.007$ relative to standard-outputs Neon), coverage instead rises ($+.021$).
The FID improvements of GMOs therefore reflect higher robust fidelity together with preserved or improved robust diversity.
GMOs improve image quality and diversity simultaneously, rather than trading one for the other.

\begin{figure}[ht]
    \centering
    \resizebox{\textwidth}{!}{%
    \begin{tikzpicture}
    \begin{groupplot}[
        group style={
            group size=5 by 1,
            horizontal sep=1.4cm,
        },
        width=5cm,
        height=4.2cm,
        xmin=0,
        xmax=0.4,
        xlabel={$\alpha$},
        tick label style={font=\footnotesize},
        label style={font=\small},
        ylabel style={font=\small},
        every axis plot/.append style={
            line width=1.3pt,
            mark=none,
            line cap=round,
            line join=round
        },
    ]

    % --- FID ---
    \nextgroupplot[
        ylabel={FID},
    ]

    \addplot[color=class1, smooth]
        table [x=alpha, y=fid_mean, col sep=comma] {figures/imm_alpha_sweep.csv};

    % --- Precision ---
    \nextgroupplot[
        ylabel={Precision},
    ]

    \addplot[color=class2, smooth]
        table [x=alpha, y=p_mean, col sep=comma] {figures/imm_alpha_sweep.csv};

    % --- Recall ---
    \nextgroupplot[
        ylabel={Recall},
    ]

    \addplot[color=class3, smooth]
        table [x=alpha, y=r_mean, col sep=comma] {figures/imm_alpha_sweep.csv};

    % --- Density (outlier-robust fidelity) ---
    \nextgroupplot[
        ylabel={Density},
    ]

    \addplot[color=class4, smooth]
        table [x=alpha, y=d_mean, col sep=comma] {figures/imm_alpha_prdc.csv};

    % --- Coverage (outlier-robust diversity) ---
    \nextgroupplot[
        ylabel={Coverage},
    ]

    \addplot[color=class5, smooth]
        table [x=alpha, y=c_mean, col sep=comma] {figures/imm_alpha_prdc.csv};

    \end{groupplot}
    \end{tikzpicture}}

    \caption{
    \textbf{Neon with GMOs improves image quality and diversity.}
    For the IMM 1-step generator~\citep{zhouInductiveMomentMatching2025} trained on ImageNet256~\citep{krizhevskyImageNetClassificationDeep2012}, we plot FID, precision, recall, density, and coverage ~\citep{heuselGANsTrainedTwo2017,Kynkaanniemi2019improved} as a function of the GMO coefficient $\alpha$.
    Here, $\alpha$ equal to zero corresponds to Neon~\citep{alemohammadNeonNegativeExtrapolation2026} applied to standard outputs, while $\alpha>0$ corresponds to Neon applied to $\alpha$-GMOs.
    Each curve reports the per-$\alpha$ optimum over fine-tuning budget $\mathcal{B}$ and merge weight $w$.
    For more details, refer to Appendix \ref{sec:exp_details-sota}.}
    \label{fig:gmo_alpha_sweep}
\end{figure}

\subsection{GMOs Transfer Across Model Sizes and Inference Strategies}\label{sec:transfer}

Here, we explore whether GMOs can be transferred across model sizes and inference strategies.
In the left panel of \Cref{fig:transfer}, we explore the former with MeanFlow models~\citep{gengMeanFlowsOnestep2025} trained on ImageNet256~\citep{krizhevskyImageNetClassificationDeep2012}.
More specifically, GMOs computed from a SiT-B/2 can be successfully used to improve the performance of Neon applied to a larger SiT-L/2 model.
Although it should be noted that using GMOs from the SiT-B/2 model to improve the SiT-L/2 is more challenging, as the FID becomes more sensitive to the merging parameter $w$.

Similarly, in the right panel of \Cref{fig:transfer}, we show that GMOs extracted from a one-step IMM~\citep{zhouInductiveMomentMatching2025} generative model can be used to improve a two-step IMM on CIFAR10~\citep{krizhevskyLearningMultipleLayers2009}.
These results are significant because computing GMOs for larger generative models or those with more demanding inference strategies is more computationally expensive.

\begin{figure}[ht]
    \centering
    \begin{tikzpicture}
    \begin{groupplot}[
        group style={
            group size=2 by 1,
            horizontal sep=2cm,
        },
        width=6cm,
        height=4.8cm,
        xlabel={$w$},
        tick label style={font=\footnotesize},
        label style={font=\small},
        title style={font=\small},
        every axis plot/.append style={
            line width=1.3pt,
            mark=none,
            line cap=round,
            line join=round
        }
    ]
    \nextgroupplot[
        ylabel={FID},
        legend style={
            at={(0.95,0.95)}, 
            anchor=north east, 
            font=\scriptsize,
            fill=white, 
            fill opacity=0.8, 
            draw opacity=1, 
            text opacity=1
        }
    ]
    % Use the first column by index: its CSV header contains a UTF-8 BOM.
    \addplot[color=class1] table [x index=0, y=neon_base, col sep=comma] {data/meanflow_transfer.csv};
    \addlegendentry{Neon w/ SiT-L/2 Outputs}
    \addplot[color=class2] table [x index=0, y=neon_l2_gmo, col sep=comma] {data/meanflow_transfer.csv};
    \addlegendentry{Neon w/ SiT-L/2 GMOs}
    \addplot[color=class3] table [x index=0, y=neon_b2_gmo, col sep=comma] {data/meanflow_transfer.csv};
    \addlegendentry{Neon w/ SiT-B/2 GMOs}
    
    \nextgroupplot[
        ylabel={FID},
        legend style={
            at={(0.95,0.95)}, 
            anchor=north east, 
            font=\scriptsize,
            fill=white, 
            fill opacity=0.8, 
            draw opacity=1, 
            text opacity=1
        }
    ]

    \addplot[color=class1] table [x=w, y=vanilla_2step, col sep=comma] {data/imm_transfer_best_epoch_sweep.csv};
    \addlegendentry{Neon w/ 2-step Outputs}
    \addplot[color=class2] table [x=w, y=spectral_2step_a0.01, col sep=comma] {data/imm_transfer_best_epoch_sweep.csv};
    \addlegendentry{Neon w/ 2-step GMOs}
    \addplot[color=class3] table [x=w, y=spectral_1step_a0.01, col sep=comma] {data/imm_transfer_best_epoch_sweep.csv};
    \addlegendentry{Neon w/ 1-step GMOs}
    
    \end{groupplot}
    \end{tikzpicture}
    \caption{
    \textbf{GMOs are transferable across generative models of different sizes and inference strategies.}
    In the left panel, we consider using ImageNet256 GMOs from a SiT-B/2 MeanFlow model to improve a SiT-L/2 MeanFlow model using Neon.
    We compare this to Neon applied directly to the SiT-L/2 model's outputs and GMOs.
    A computational fine tuning budget of $1.2\times10^6$ is used in every case, and GMOs are generated with $\alpha$ equal to $0.1$.
    In the right panel, we consider using CIFAR10 GMOs from a one-step IMM model to improve a 2-step IMM model using Neon.
    We compare this to Neon applied directly to the two-step IMM model's outputs and GMOs.
    For more experimental detail on this particular experiment, refer to Appendix \ref{sec:exp_details-transfer}.
    }
    \label{fig:transfer}
\end{figure}

\subsection{GMOs Improve State-of-the-Art Generative Models}\label{sec:improve_sota}

Here, we demonstrate that Neon with GMOs can improve the performance -- measured by FID~\citep{heuselGANsTrainedTwo2017} -- more effectively than Neon with standard outputs.

As detailed in \Cref{tab:sota}, we evaluate the impact of integrating GMOs into the Neon self-training algorithm across several one-step generative models, including IMM~\citep{zhouInductiveMomentMatching2025}, MeanFlow~\citep{gengMeanFlowsOnestep2025}, and AlphaFlow~\citep{zhangAlphaFlowUnderstandingImproving2026} architectures. 
The evaluations are conducted on models pre-trained on ImageNet256~\citep{krizhevskyImageNetClassificationDeep2012}. 
The results clearly indicate that self-training with GMOs consistently yields superior generative performance compared to self-training with standard outputs.  
Across all evaluated model scales, the application of GMOs pushes the FID below both the pre-trained base model baseline and the standard Neon baseline. As shown in Table~1, Neon with GMOs achieves a lower FID than Neon with standard outputs across all five evaluated architectures. Measured relative to the FID reduction that standard Neon achieves over the base model, GMOs increase this reduction by 17--105\% across the five architectures, nearly doubling it on IMM and more than doubling it on AlphaFlow SiT-B/2 (see Table~5).

Furthermore, the strength of the negative signal is visible in the finetuning dynamics. {Models can attain their optimal FID using GMOs with a smaller fine-tuning budget ($\mathcal{B}$) than is required to reach optimal performance with standard outputs. We quantify this across a broader sweep of $\alpha$ values on a Meanflow model trained on CIFAR10; we find that increasing $\alpha$ systematically lowers both the optimal fine tuning budget $\mathcal{B}$ and the optimal merge weight $w$ required by Neon (Appendix~\ref{sec:appendix_compute}).
This confirms that the targeted signal provided by GMOs not only raises the ultimate performance ceiling but also reaches the optimal checkpoint sooner, because GMOs induce the mode-seeking degradation faster. We show that GMOs statistically significantly improve generative model self-training across all architectures tested (see \Cref{tab:significance}.)

\begin{table}[ht]
    \centering
    \caption{
    \textbf{The utilization of GMOs in the Neon self-training algorithm yields better generative models.}
    Here, we implement the Neon self-training algorithm on various one-step generative models trained on ImageNet256.
    We consider Neon on the standard output of these generators, or on the corresponding GMOs, across various compute levels (as a percentage of pre-training compute), weight-merging parameters $w$, and $\alpha$ values for GMOs (refer to Appendix \ref{sec:exp_details-sota} for more details).
    Among the best-performing configurations, we report the average and standard deviation of the FID across five random seeds and $50,000$ output samples.
    We provide a repository \href{https://github.com/PatrickBats/Geometrically-Modified-Outputs}{\textbf{here}} containing the model checkpoints obtained using GMOs.
    }
    \label{tab:sota}
    \vspace{0.5em}
    \resizebox{\textwidth}{!}{%
    \begin{tabular}{lcccccccc}
        \toprule
        Architecture & Base FID & \multicolumn{3}{c}{\textbf{Neon w/ Standard Outputs}} & \multicolumn{4}{c}{\textbf{Neon w/ GMOs}} \\
        \cmidrule(lr){3-5} \cmidrule(lr){6-9}
        & & $\mathcal{B}$ & $w$ & FID & $\mathcal{B}$ & $w$ & $\alpha$ & FID \\
        \midrule    
        IMM & $8.34$ & $2.5\times10^6(6.1\times10^{-3}\%)$ & $1.6$ & $7.32 (\pm 0.07)$ & $2.87\times 10^6(7.0\times10^{-3}\%)$ & $1.6$ & $0.2$ & $\mathbf{6.25(\pm0.03)}$ \\
        MeanFlow SiT-B/2 & $6.08$ & $7.2\times10^5 (0.25\%)$ & $0.8$ & $5.70 (\pm 0.01)$ & $4.8\times10^5$ $(0.17\%)$ & $1.2$ & $0.05$ & $\mathbf{5.60(\pm0.01)}$ \\
        MeanFlow SiT-L/2 & $3.97$ & $1.8\times10^6(0.63\%)$ & $0.5$ & $3.74(\pm0.02)$ & $1.8\times10^6 (0.63\%)$ & $0.3$ & $0.1$ & $\mathbf{3.70(\pm0.02)}$ \\
        AlphaFlow SiT-B/2 & $5.55$ & $4.8\times10^5(0.16\%)$ & $0.6$ & $5.36(\pm0.02)$ & $4.8\times10^5(0.16\%)$ & $0.8$ & $0.1$ & $\mathbf{5.16(\pm0.02)}$ \\
        AlphaFlow SiT-XL/2 & $2.93$ & $3\times10^5(0.1\%)$ & $1.2$ & $2.64(\pm0.02)$ & $3\times10^5(0.1\%)$ & $1.1$ & $0.05$ & $\mathbf{2.59(\pm0.02)}$ \\
        \bottomrule
    \end{tabular}}
\end{table}

\subsection{GMOs Improve Negative Guidance Beyond Neon}\label{sec:sims}

GMOs operate purely in data space and are agnostic to the self-training algorithm that utilizes them.
We chose Neon as the primary algorithm because it is the strongest and most efficient negative guidance method, requiring no additional model at inference and no sampling overhead.
To show that our contribution is not solely tied to Neon, we evaluate a second, structurally different algorithm: SIMS-style guidance~\citep{alemohammadSelfimprovingDiffusionModels2024} keeps the finetuned model in the sampling loop and extrapolates predictions at inference time, computing $(1+\omega)\,f_{\theta} - \omega\,f_{\tilde{\theta}}$ at every sampling step instead of merging weights once.

On IMM, guidance with the model finetuned on standard outputs improves the base FID by $0.75$ (from $8.34\pm.06$ to $7.59\pm.06$), while the same guidance with the model finetuned on GMOs improves it by $1.42$ (to $6.92\pm.05$), nearly twice the improvement.
% $\alpha=0.1$ dose is again the optimum: $\alpha=0.2$ also beats standard outputs but loses to $0.1$ and requires a weaker $\omega$, the same dose-strength trade-off observed in weight space. 
GMOs therefore strengthen both consumers of negative signals that we test, weight-space extrapolation and output-space guidance.

\section{Discussion}\label{sec:discussion}

\paragraph{Summary.}

In this paper, we introduced Geometrically Modified Outputs (GMOs) and demonstrated that they improve the performance of the Neon self-training algorithm for one-step generative models.
GMOs are an augmentation technique that modifies model outputs by reweighting the singular values of the generator's input-output Jacobian. 
This structural augmentation provides a stronger and more targeted negative signal for self-training algorithms, such as Neon and SIMS. Through empirical evaluations, we demonstrated that integrating GMOs into multiple self-training frameworks consistently yields superior generative performance. 

\paragraph{Relation to other self-improvement methods.}
GMOs should be viewed as an enhancement to negative-signal
construction, rather than a competing standalone self-training
framework. They modify the synthetic outputs used to train
the negative model while retaining the downstream fine-tuning
and guidance procedures. We demonstrate their utility with
both Neon's weight-space extrapolation and SIMS-style
inference-time guidance.

The computational requirements of the downstream method remain
important. SIMS~\citep{alemohammadSelfimprovingDiffusionModels2024}
and Autoguidance~\citep{karrasGuidingDiffusionModel2024}
require auxiliary-model predictions during sampling, while
DDO~\citep{zhengDirectDiscriminativeOptimization2025}
uses multiple self-play rounds and reports a training budget
of approximately $12\%$ of pretraining compute. We primarily evaluate GMOs with
Neon~\citep{alemohammadNeonNegativeExtrapolation2026},
whose merged model requires no additional inference-time
evaluations. In this setting, GMOs add offline sample-construction
cost while preserving Neon's sampling efficiency.
Their effectiveness under both Neon and SIMS motivates
investigating whether geometry-based output augmentation can
also strengthen other negative-signal methods, including
DDO and Autoguidance.

\paragraph{Limitations and Future Directions.} 

Our results suggest that Geometrically Modified Outputs (GMOs) can improve self-training by leveraging the generator's geometric structure, although several directions remain open for future work.

First, GMOs require specifying $\alpha$. Although outputs for multiple $\alpha$ values can be generated in parallel, applying Neon to each is costly. In our experiments, a fixed $\alpha=0.1$ without per-model tuning outperforms standard-outputs Neon on four of five ImageNet-trained architectures and is within seed noise on the fifth. Generating GMOs at additional $\alpha$ values is nearly free because they reuse the same probe computation (Algorithm 2); only evaluation and finetuning scale with the number of values tested. However, a search-free heuristic for estimating the optimal $\alpha$ for a given model and dataset would be valuable. Second, we explore GMOs only in the image domain; extending them to other modalities and architectures remains future work.  

% GMOS are most natural for one-step generative models, where the relevant input-output Jacobian is easier to access. 
% Although the field of one-step generative models is yielding increasingly more capable models and we show evidence that GMOs can work for few-step generators (see \Cref{fig:transfer}), exploring GMOs on few-step generators is a promising direction of future work.

\newpage
\paragraph{Acknowledgements}
This work was supported by ONR grant N00014-23-1-2714, DOE grant DE-SC0020345, DOI grant 140D0423C0076, and a Google Cloud Computing Award.

\newpage

\bibliographystyle{iclr2027_conference}
\bibliography{references}

\begin{thebibliography}{37}
\providecommand{\natexlab}[1]{#1}
\providecommand{\url}[1]{\texttt{#1}}
\expandafter\ifx\csname urlstyle\endcsname\relax
  \providecommand{\doi}[1]{doi: #1}\else
  \providecommand{\doi}{doi: \begingroup \urlstyle{rm}\Url}\fi

\bibitem[AI(2026)]{TrendsArtificialIntelligence}
Epoch AI.
\newblock {Trends in Artificial Intelligence}, February 2026.
\newblock URL \url{https://epoch.ai/trends}.

\bibitem[Alemohammad et~al.(2024{\natexlab{a}})Alemohammad, {Casco-Rodriguez}, Luzi, Humayun, Babaei, LeJeune, Siahkoohi, and Baraniuk]{alemohammadSelfconsumingGenerativeModels2024}
Sina Alemohammad, Josue {Casco-Rodriguez}, Lorenzo Luzi, Ahmed~Imtiaz Humayun, Hossein Babaei, Daniel LeJeune, Ali Siahkoohi, and Richard Baraniuk.
\newblock {Self-Consuming Generative Models Go MAD}.
\newblock In \emph{{International Conference on Learning Representations}}, 2024{\natexlab{a}}.

\bibitem[Alemohammad et~al.(2024{\natexlab{b}})Alemohammad, Humayun, Agarwal, Collomosse, and Baraniuk]{alemohammadSelfimprovingDiffusionModels2024}
Sina Alemohammad, Ahmed~Imtiaz Humayun, Shruti Agarwal, John Collomosse, and Richard Baraniuk.
\newblock {Self-Improving Diffusion Models With Synthetic Data}.
\newblock \emph{arXiv:2408.16333}, 2024{\natexlab{b}}.

\bibitem[Alemohammad et~al.(2026)Alemohammad, Wang, and Baraniuk]{alemohammadNeonNegativeExtrapolation2026}
Sina Alemohammad, Zhangyang Wang, and Richard Baraniuk.
\newblock Neon: {{Negative Extrapolation From Self-training Improves Image Generation}}.
\newblock In \emph{{{International Conference}} on {{Learning Representations}}}, 2026.

\bibitem[Batsell et~al.(2026)Batsell, Walker, and Baraniuk]{batsellGeometricPerspectiveRecursive2026}
Patrick Batsell, Thomas Walker, and Richard Baraniuk.
\newblock {A Geometric Perspective on Recursive Synthetic Training}.
\newblock In \emph{{ICLR Workshop on Deep Generative Model in Machine Learning: Theory, Principle and Efficacy}}, 2026.

\bibitem[Deng et~al.(2026)Deng, Li, Li, Du, and He]{dengGenerativeModelingDrifting2026}
Mingyang Deng, He~Li, Tianhong Li, Yilun Du, and Kaiming He.
\newblock {Generative Modeling Via Drifting}.
\newblock \emph{arXiv:2602.04770}, 2026.

\bibitem[Feng et~al.(2025)Feng, Dohmatob, Yang, Charton, and Kempe]{fengModelCollapseScaling2025}
Yunzhen Feng, Elvis Dohmatob, Pu~Yang, Francois Charton, and Julia Kempe.
\newblock {Beyond Model Collapse: Scaling up With Synthesized Data Requires Verification}.
\newblock In \emph{{International Conference on Learning Representations}}, 2025.

\bibitem[Frans et~al.(2025)Frans, Hafner, Levine, and Abbeel]{fransOneStepDiffusion2025}
Kevin Frans, Danijar Hafner, Sergey Levine, and Pieter Abbeel.
\newblock {One Step Diffusion Via Shortcut Models}.
\newblock In \emph{{International Conference on Learning Representations}}, 2025.

\bibitem[Geng et~al.(2025)Geng, Deng, Bai, Kolter, and He]{gengMeanFlowsOnestep2025}
Zhengyang Geng, Mingyang Deng, Xingjian Bai, J~Zico Kolter, and Kaiming He.
\newblock {Mean Flows For One-step Generative Modeling}.
\newblock In \emph{{Advances in Neural Information Processing Systems}}, 2025.

\bibitem[He et~al.(2023)He, Sun, Yu, Xue, Zhang, Torr, Bai, and Qi]{heSyntheticDataGenerative2023}
Ruifei He, Shuyang Sun, Xin Yu, Chuhui Xue, Wenqing Zhang, Philip Torr, Song Bai, and Xiaojuan Qi.
\newblock {Is Synthetic Data From Generative Models Ready For Image Recognition?}
\newblock In \emph{{International Conference on Learning Representations}}, 2023.

\bibitem[Henighan et~al.(2020)Henighan, Kaplan, Katz, Chen, Hesse, Jackson, Jun, Brown, Dhariwal, Gray, Hallacy, Mann, Radford, Ramesh, Ryder, Ziegler, Schulman, Amodei, and McCandlish]{henighan2020scalinglawsautoregressivegenerative}
Tom Henighan, Jared Kaplan, Mor Katz, Mark Chen, Christopher Hesse, Jacob Jackson, Heewoo Jun, Tom~B. Brown, Prafulla Dhariwal, Scott Gray, Chris Hallacy, Benjamin Mann, Alec Radford, Aditya Ramesh, Nick Ryder, Daniel~M. Ziegler, John Schulman, Dario Amodei, and Sam McCandlish.
\newblock {Scaling Laws for Autoregressive Generative Modeling}.
\newblock \emph{arXiv:2010.14701}, 2020.

\bibitem[Heusel et~al.(2017)Heusel, Ramsauer, Unterthiner, Nessler, and Hochreiter]{heuselGANsTrainedTwo2017}
Martin Heusel, Hubert Ramsauer, Thomas Unterthiner, Bernhard Nessler, and Sepp Hochreiter.
\newblock {GANs Trained by a Two Time-scale Update Rule Converge to a Local Nash Equilibrium}.
\newblock In I.~Guyon, U.~Von Luxburg, S.~Bengio, H.~Wallach, R.~Fergus, S.~Vishwanathan, and R.~Garnett (eds.), \emph{{Advances in Neural Information Processing Systems}}, 2017.

\bibitem[Kaplan et~al.(2020)Kaplan, McCandlish, Henighan, Brown, Chess, Child, Gray, Radford, Wu, and Amodei]{kaplanScalingLawsNeural2020}
Jared Kaplan, Sam McCandlish, Tom Henighan, Tom~B. Brown, Benjamin Chess, Rewon Child, Scott Gray, Alec Radford, Jeffrey Wu, and Dario Amodei.
\newblock {Scaling Laws for Neural Language Models}.
\newblock \emph{arXiv:2001.08361}, 2020.

\bibitem[Karras et~al.(2024)Karras, Aittala, Kynk{\"a}{\"a}nniemi, Lehtinen, Aila, and Laine]{karrasGuidingDiffusionModel2024}
Tero Karras, Miika Aittala, Tuomas Kynk{\"a}{\"a}nniemi, Jaakko Lehtinen, Timo Aila, and Samuli Laine.
\newblock {Guiding a Diffusion Model With a Bad Version of Itself}.
\newblock In \emph{{Advances in Neural Information Processing Systems}}, 2024.

\bibitem[Kim et~al.(2023)Kim, Kim, Kwon, Kang, and Moon]{kimRefiningGenerativeProcess2023}
Dongjun Kim, Yeongmin Kim, Se~Jung Kwon, Wanmo Kang, and Il-Chul Moon.
\newblock {Refining Generative Process With Discriminator Guidance in Score-based Diffusion Models}.
\newblock \emph{arXiv:2211.17091}, 2023.

\bibitem[Krizhevsky \& Hinton(2009)Krizhevsky and Hinton]{krizhevskyLearningMultipleLayers2009}
Alex Krizhevsky and Geoffrey Hinton.
\newblock {Learning Multiple Layers of Features from Tiny Images}.
\newblock Technical report, University of Toronto, 2009.

\bibitem[Krizhevsky et~al.(2012)Krizhevsky, Sutskever, and Hinton]{krizhevskyImageNetClassificationDeep2012}
Alex Krizhevsky, Ilya Sutskever, and Geoffrey~E Hinton.
\newblock {ImageNet Classification with Deep Convolutional Neural Networks}.
\newblock In \emph{{Advances in Neural Information Processing Systems}}, 2012.

\bibitem[Kynk\"{a}\"{a}nniemi et~al.(2019)Kynk\"{a}\"{a}nniemi, Karras, Laine, Lehtinen, and Aila]{Kynkaanniemi2019improved}
Tuomas Kynk\"{a}\"{a}nniemi, Tero Karras, Samuli Laine, Jaakko Lehtinen, and Timo Aila.
\newblock {Improved Precision and Recall Metric for Assessing Generative Models}.
\newblock In \emph{Advances in Neural Information Processing Systems}, volume~32. Curran Associates, Inc., 2019.

\bibitem[Ma et~al.(2024)Ma, Goldstein, Albergo, Boffi, {Vanden-Eijnden}, and Xie]{maSiTExploringFlow2024}
Nanye Ma, Mark Goldstein, Michael~S Albergo, Nicholas~M Boffi, Eric {Vanden-Eijnden}, and Saining Xie.
\newblock {SiT: Exploring Flow and Diffusion-based Generative Models With Scalable Interpolant Transformers}.
\newblock In \emph{{European Conference on Computer Vision}}. Springer, 2024.

\bibitem[Muennighoff et~al.(2025)Muennighoff, Rush, Barak, Scao, Piktus, Tazi, Pyysalo, Wolf, and Raffel]{muennighoff2025scalingdataconstrainedlanguagemodels}
Niklas Muennighoff, Alexander~M. Rush, Boaz Barak, Teven~Le Scao, Aleksandra Piktus, Nouamane Tazi, Sampo Pyysalo, Thomas Wolf, and Colin Raffel.
\newblock {Scaling Data-Constrained Language Models}.
\newblock \emph{arXiv:2305.16264}, 2025.

\bibitem[Naeem et~al.(2020)Naeem, Oh, Uh, Choi, and Yoo]{naeemReliableFidelityDiversity2020}
Muhammad~Ferjad Naeem, Seong~Joon Oh, Youngjung Uh, Yunjey Choi, and Jaejun Yoo.
\newblock Reliable fidelity and diversity metrics for generative models.
\newblock In \emph{International Conference on Machine Learning}, 2020.

\bibitem[Radford et~al.(2021)Radford, Kim, Hallacy, Ramesh, Goh, Agarwal, Sastry, Askell, Mishkin, Clark, et~al.]{radford2021learning}
Alec Radford, Jong~Wook Kim, Chris Hallacy, Aditya Ramesh, Gabriel Goh, Sandhini Agarwal, Girish Sastry, Amanda Askell, Pamela Mishkin, Jack Clark, et~al.
\newblock {Learning Transferable Visual Models From Natural Language Supervision}.
\newblock In \emph{{International Conference on Machine Learning}}, 2021.

\bibitem[Rombach et~al.(2022)Rombach, Blattmann, Lorenz, Esser, and Ommer]{rombach2022high}
Robin Rombach, Andreas Blattmann, Dominik Lorenz, Patrick Esser, and Bj{\"o}rn Ommer.
\newblock {High-resolution Image Synthesis With Latent Diffusion Models}.
\newblock In \emph{{IEEE/CVF Conference on Computer Vision and Pattern Recognition}}, 2022.

\bibitem[Ronneberger et~al.(2015)Ronneberger, Fischer, and Brox]{ronnebergerUNetConvolutionalNetworks2015}
Olaf Ronneberger, Philipp Fischer, and Thomas Brox.
\newblock {U-Net: Convolutional Networks for Biomedical Image Segmentation}.
\newblock In \emph{{International Conference on Medical Image Computing and Computer-assisted Intervention}}. Springer, 2015.

\bibitem[Shumailov et~al.(2024)Shumailov, Shumaylov, Zhao, Papernot, Anderson, and Gal]{shumailovAIModelsCollapse2024}
Ilia Shumailov, Zakhar Shumaylov, Yiren Zhao, Nicolas Papernot, Ross Anderson, and Yarin Gal.
\newblock {AI Models Collapse When Trained on Recursively Generated Data}.
\newblock \emph{Nature}, 631\penalty0 (8022), 2024.

\bibitem[Silver et~al.(2018)Silver, Hubert, Schrittwieser, Antonoglou, Lai, Guez, Lanctot, Sifre, Kumaran, Graepel, Lillicrap, Simonyan, and Hassabis]{silverGeneralReinforcementLearning2018}
David Silver, Thomas Hubert, Julian Schrittwieser, Ioannis Antonoglou, Matthew Lai, Arthur Guez, Marc Lanctot, Laurent Sifre, Dharshan Kumaran, Thore Graepel, Timothy Lillicrap, Karen Simonyan, and Demis Hassabis.
\newblock {A General Reinforcement Learning Algorithm That Masters Chess, Shogi, and Go Through Self-play}.
\newblock \emph{Science}, 362\penalty0 (6419), 2018.

\bibitem[Song \& Ermon(2019)Song and Ermon]{song2019generative}
Yang Song and Stefano Ermon.
\newblock {Generative Modeling by Estimating Gradients of the Data Distribution}.
\newblock 2019.

\bibitem[Song et~al.(2023)Song, Dhariwal, Chen, and Sutskever]{song2023consistencymodels}
Yang Song, Prafulla Dhariwal, Mark Chen, and Ilya Sutskever.
\newblock {Consistency Models}.
\newblock \emph{arXiv:2303.01469}, 2023.

\bibitem[van~der Maaten \& Hinton(2008)van~der Maaten and Hinton]{maatenVisualizingDataUsing2008}
Laurens van~der Maaten and Geoffrey Hinton.
\newblock {Visualizing Data Using t-SNE}.
\newblock \emph{{Journal of Machine Learning Research}}, 9\penalty0 (86), 2008.

\bibitem[Villalobos et~al.(2024)Villalobos, Ho, Sevilla, Besiroglu, Heim, and Hobbhahn]{villalobosWillWeRun2024}
Pablo Villalobos, Anson Ho, Jaime Sevilla, Tamay Besiroglu, Lennart Heim, and Marius Hobbhahn.
\newblock {Will We Run Out of Data? Limits of LLM Scaling Based on Human-generated Data}, 2024.

\bibitem[Wang et~al.(2023)Wang, Pang, Du, Lin, Liu, and Yan]{wangBetterDiffusionModels2023}
Zekai Wang, Tianyu Pang, Chao Du, Min Lin, Weiwei Liu, and Shuicheng Yan.
\newblock {Better Diffusion Models Further Improve Adversarial Training}.
\newblock In \emph{{International Conference on Machine Learning}}. PMLR, 2023.

\bibitem[Yang et~al.(2026)Yang, Geng, Ju, Tian, and Wang]{yang2026representation}
Jiawei Yang, Zhengyang Geng, Xuan Ju, Yonglong Tian, and Yue Wang.
\newblock {Representation Fr\'echet Loss for Visual Generation}.
\newblock \emph{arXiv:2604.28190}, 2026.

\bibitem[Yuan et~al.(2024)Yuan, Chen, Ji, and Gu]{yuanSelfplayFinetuningDiffusion2024}
Huizhuo Yuan, Zixiang Chen, Kaixuan Ji, and Quanquan Gu.
\newblock {Self-play Fine-tuning of Diffusion Models For Text-to-image Generation}.
\newblock In \emph{{Advances in Neural Information Processing Systems}}. Curran Associates Inc., 2024.

\bibitem[Yue et~al.(2026)Yue, Jia, Hou, and Goldstein]{yue2026image}
Kaiyu Yue, Menglin Jia, Ji~Hou, and Tom Goldstein.
\newblock {Image Generation With a Sphere Encoder}.
\newblock \emph{arXiv:2602.15030}, 2026.

\bibitem[Zhang et~al.(2026)Zhang, Siarohin, Menapace, Vasilkovsky, Tulyakov, Qu, and Skorokhodov]{zhangAlphaFlowUnderstandingImproving2026}
Huijie Zhang, Aliaksandr Siarohin, Willi Menapace, Michael Vasilkovsky, Sergey Tulyakov, Qing Qu, and Ivan Skorokhodov.
\newblock {AlphaFlow: Understanding and Improving MeanFlow Models}.
\newblock In \emph{{International Conference on Learning Representations}}, 2026.

\bibitem[Zheng et~al.(2025)Zheng, Chen, Chen, He, Liu, Zhu, and Zhang]{zhengDirectDiscriminativeOptimization2025}
Kaiwen Zheng, Yongxin Chen, Huayu Chen, Guande He, Ming-Yu Liu, Jun Zhu, and Qinsheng Zhang.
\newblock {Direct Discriminative Optimization: Your Likelihood-based Visual Generative Model Is Secretly a GAN Discriminator}.
\newblock In \emph{{International Conference on Machine Learning}}, 2025.

\bibitem[Zhou et~al.(2025)Zhou, Ermon, and Song]{zhouInductiveMomentMatching2025}
Linqi Zhou, Stefano Ermon, and Jiaming Song.
\newblock {Inductive Moment Matching}.
\newblock In \emph{{International Conference on Machine Learning}}, 2025.

\end{thebibliography}

\newpage
\appendix

\section{\texorpdfstring{Proof of \Cref{thm:gmo_algorithm}}{Proof of Theorem 1}}\label{sec:proof}

The standard output of the generator can be decomposed as $\vs=\mJ_{\vz}\vz+\vb_{\vz}$.
Using a singular value decomposition, the Jacobian-vector product $\mJ_{\vz}\vz$ can be expanded as a sum of its singular components. 
Let $r$ be the rank of $\mJ_{\vz}$, then
\begin{equation*}
    \mJ_{\vz}\vz=\sum_{k=1}^{r} \sigma^{(k)}_{\vz}\left(\vv^{(k)\top}_{\vz}\vz\right)\vu^{(k)}_{\vz}
\end{equation*}
Then, by construction, we have
\begin{align*}
    \tilde{\mJ}_{\vz}\vz&=\tilde{\sigma}^{(1)}_{\vz}\left(\vv^{(1)\top}_{\vz}\vz\right)\vu^{(1)}_{\vz}+\sum_{k=2}^{r} \tilde{\sigma}^{(k)}_{\vz}\left(\vv^{(k)\top}_{\vz}\vz\right)\vu^{(k)}_{\vz}\\&=\tilde{\sigma}^{(1)}_{\vz}\left(\vv^{(1)\top}_{\vz}\vz\right)\vu^{(1)}_{\vz}+\sum_{k=2}^{r} \sqrt{1-\alpha}\sigma^{(k)}_{\vz}\left(\vv^{(k)\top}_{\vz}\vz\right)\vu^{(k)}_{\vz}
\end{align*}
Then, since
\begin{equation*}
    \sum_{k=2}^{r} \sigma^{(k)}_{\vz}\left(\vv^{(k)\top}_{\vz} \vz\right)\vu^{(k)}_{\vz}=\mJ_{\vz}\vz - \sigma^{(1)}_{\vz}\left(\vv^{(1)\top}_{\vz}\vz\right)\vu^{(1)}_{\vz},
\end{equation*}
we can write
\begin{equation*}
    \sum_{k=2}^{r}\sqrt{1-\alpha}\sigma^{(k)}_{\vz}\left(\vv^{(k)\top}_{\vz}\vz\right)\vu^{(k)}_{\vz}=\sqrt{1-\alpha}\left(\vs-\vb_{\vz}\right)-\sqrt{1-\alpha}\sigma^{(1)}_{\vz}\left(\vv^{(1)\top}_{\vz}\vz\right)\vu^{(1)}_{\vz}.
\end{equation*}
Meaning,
\begin{align*}
    \tilde{\mJ}_{\vz}\vz&=\tilde{\sigma}^{(1)}_{\vz}\left(\vv^{(1)\top}_{\vz}\vz\right)\vu^{(1)}_{\vz}+\sqrt{1-\alpha}\left(\vs-\vb_{\vz}\right)-\sqrt{1-\alpha}\sigma^{(1)}_{\vz}\left(\vv^{(1)\top}_{\vz} \vz\right)\vu^{(1)}_{\vz}\\&=\sqrt{1-\alpha}\left(\vs-\vb_{\vz}\right)+\left(\tilde{\sigma}^{(1)}_{\vz}-\sqrt{1-\alpha}\sigma^{(1)}_{\vz}\right)\left(\vv^{(1)\top}_{\vz}\vz\right)\vu^{(1)}_{\vz}
\end{align*}
Therefore,
\begin{equation*}
    \tilde{\vs}=\tilde{\mJ}_{\vz}\vz+\vb_{\vz}=\sqrt{1-\alpha}\left(\vs-\vb_{\vz}\right)+\left(\tilde{\sigma}^{(1)}_{\vz}-\sqrt{1-\alpha}\sigma^{(1)}_{\vz}\right)\left(\vv^{(1)\top}_{\vz}\vz\right)\vu^{(1)}_{\vz}+\vb_{\vz}.
\end{equation*}
This is precisely the output generated by \Cref{alg:gmos}, thus the proof is complete. \qed

\subsection{Spectral Energy Redistribution}
\label{app:spectral_energy}

The reweighting in Section 3.1 admits a direct interpretation
as a redistribution of spectral energy. For a fixed latent
vector $\vz$, write $\sigma_k=\sigma_{\vz}^{(k)}$ and
$\tilde{\sigma}_k=\tilde{\sigma}_{\vz}^{(k)}$, and let
$E=\|\mJ_{\vz}\|_F^2=\sum_{k=1}^{r}\sigma_k^2$.
For $\alpha\in[0,1]$, the prescribed transformation satisfies
\[
\tilde{\sigma}_1^2
=
\sigma_1^2+\alpha\sum_{k=2}^{r}\sigma_k^2,
\qquad
\tilde{\sigma}_k^2=(1-\alpha)\sigma_k^2
\quad (k\geq 2).
\]
Thus, the leading component receives exactly the spectral
energy removed from the trailing components. Consequently,
\[
\|\tilde{\mJ}_{\vz}\|_F^2
=
\tilde{\sigma}_1^2+\sum_{k=2}^{r}\tilde{\sigma}_k^2
=
\|\mJ_{\vz}\|_F^2.
\]

When $E>0$, define the normalized spectral energies
$p_k=\sigma_k^2/E$ and $\tilde{p}_k=\tilde{\sigma}_k^2/E$.
The transformation can then be written as
\[
\tilde{\mathbf p}
=
(1-\alpha)\mathbf p+\alpha\mathbf e_1,
\]
where $\mathbf e_1=(1,0,\ldots,0)^\top$.
Hence, $\alpha$ controls a linear interpolation in normalized
squared singular values between the original spectrum and
one with all spectral energy in the leading component.

The singular directions and affine offset $\vb_{\vz}$ are
retained. At $\alpha=0$, the original output is recovered.
At $\alpha=1$, the modified linear term has rank at most one,
while the offset remains unchanged. The preserved quantity
is the squared Frobenius norm of the constructed matrix,
not necessarily the norm of the generated output.

This provides a design rationale for the GMO reweighting:
it introduces a controlled concentration of the local spectrum,
motivated by the spectral concentration observed during
self-training, while preserving total spectral energy. 

\section{GMOs are Mode Seeking}\label{sec:mode_seeking}

In \citet{alemohammadNeonNegativeExtrapolation2026}, Neon is shown to theoretically work when finetuning against model outputs that are ``mode-seeking''.
Informally, this means that the model outputs are concentrated in high-density regions of the learned data distribution.
With \Cref{fig:mode_seeking}, we show that GMOs amplify this effect.
In the top row of \Cref{fig:mode_seeking}, we visualize the CLIP embeddings~\citep{radford2021learning} of GMOs across a range of $\alpha$ values using t-SNE.
It can be seen that the embeddings become increasingly concentrated for larger values of $\alpha$, indicating that the images are focusing on the same regions of the data distribution.

With the bottom row of \Cref{fig:mode_seeking}, we visualize GMOs for each corresponding value of $\alpha$.
Qualitatively, we observe that the perturbations concentrate on distortions in the output.
We provide more examples in \Cref{fig:mode_seeking2}.
In particular, the bottom row of \Cref{fig:mode_seeking2} shows that when the standard output has no clear distortions, the corresponding GMO looks relatively unchanged.
This supports the idea that GMOs amplify the negative signal in model outputs.

\begin{figure}[ht]
    \centering
    \begin{subfigure}[b]{0.18\textwidth}
        \centering
        \includegraphics[width=\textwidth]{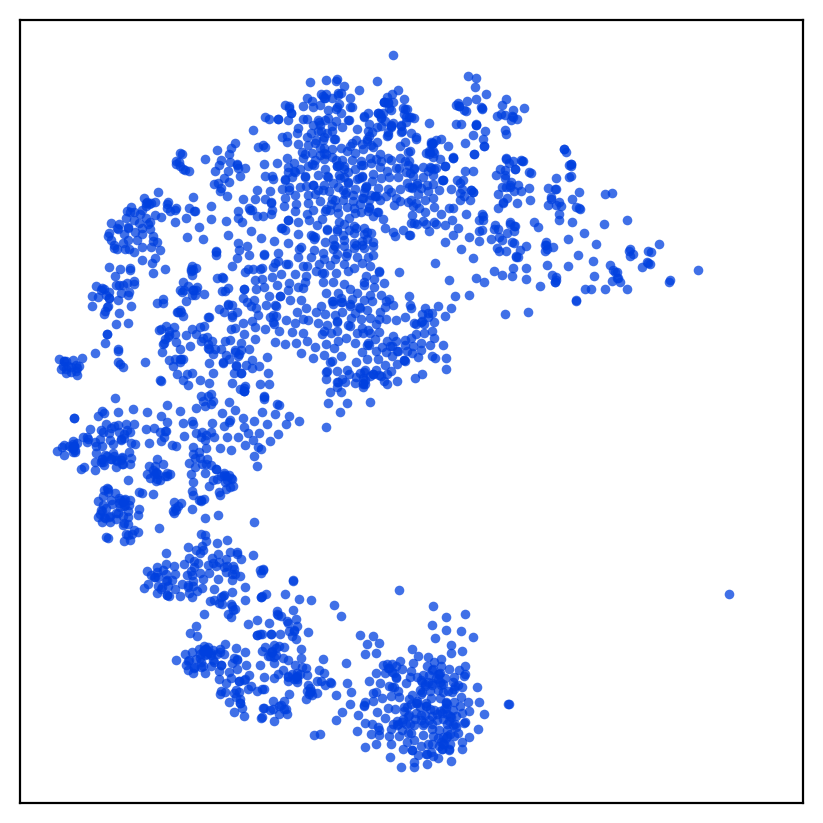}
    \end{subfigure}
    \hfill
    \begin{subfigure}[b]{0.18\textwidth}
        \centering
        \includegraphics[width=\textwidth]{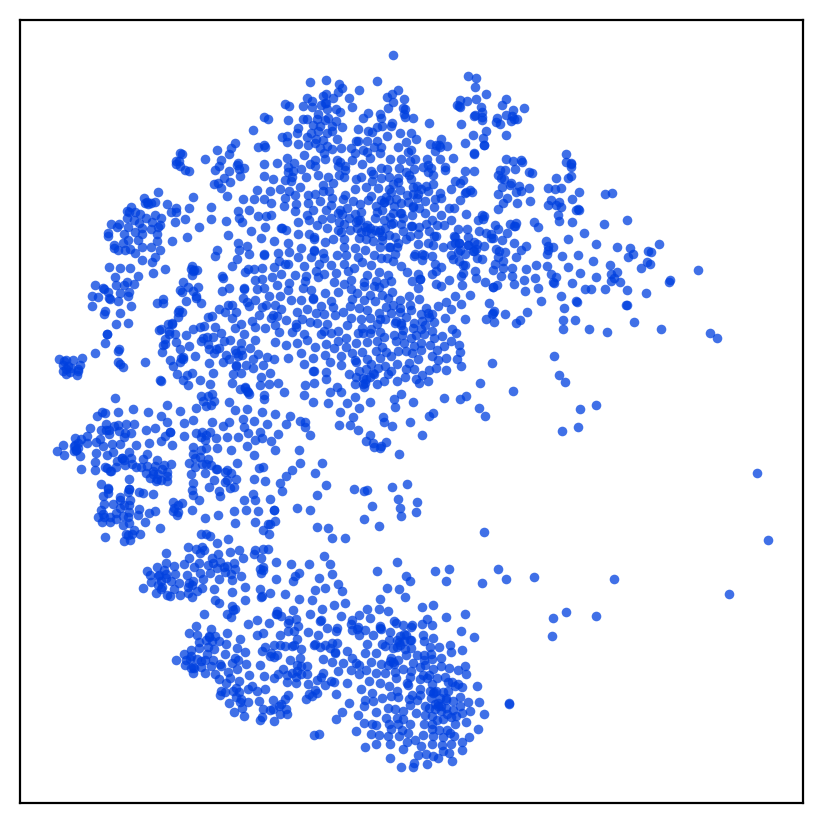}
    \end{subfigure}
    \hfill
    \begin{subfigure}[b]{0.18\textwidth}
        \centering
        \includegraphics[width=\textwidth]{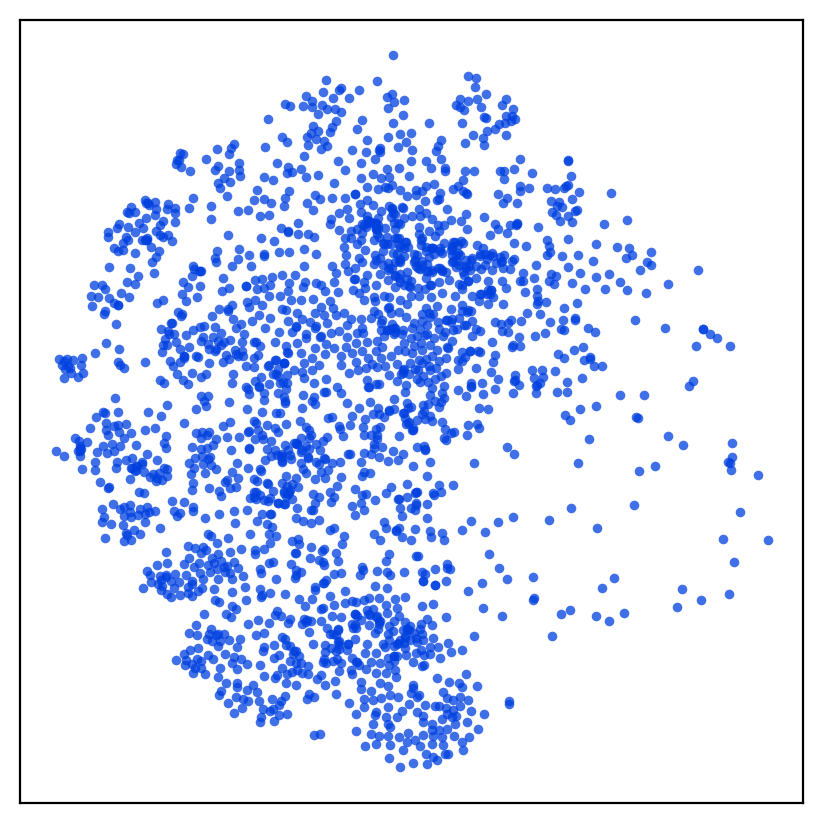}
    \end{subfigure}
    \hfill
    \begin{subfigure}[b]{0.18\textwidth}
        \centering
        \includegraphics[width=\textwidth]{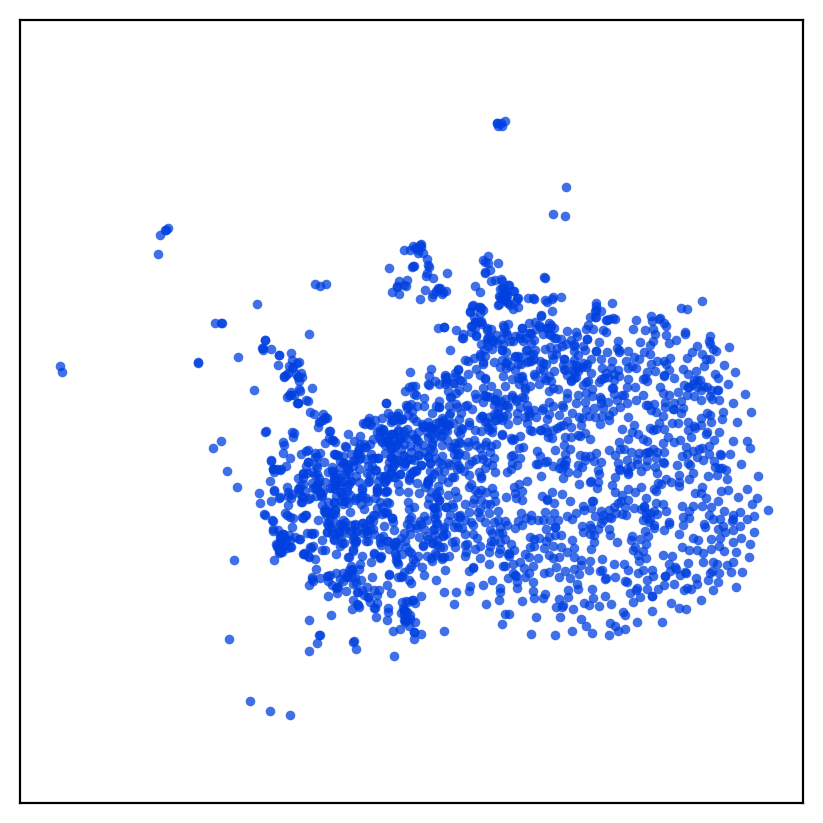}
    \end{subfigure}
    \hfill
    \begin{subfigure}[b]{0.18\textwidth}
        \centering
        \includegraphics[width=\textwidth]{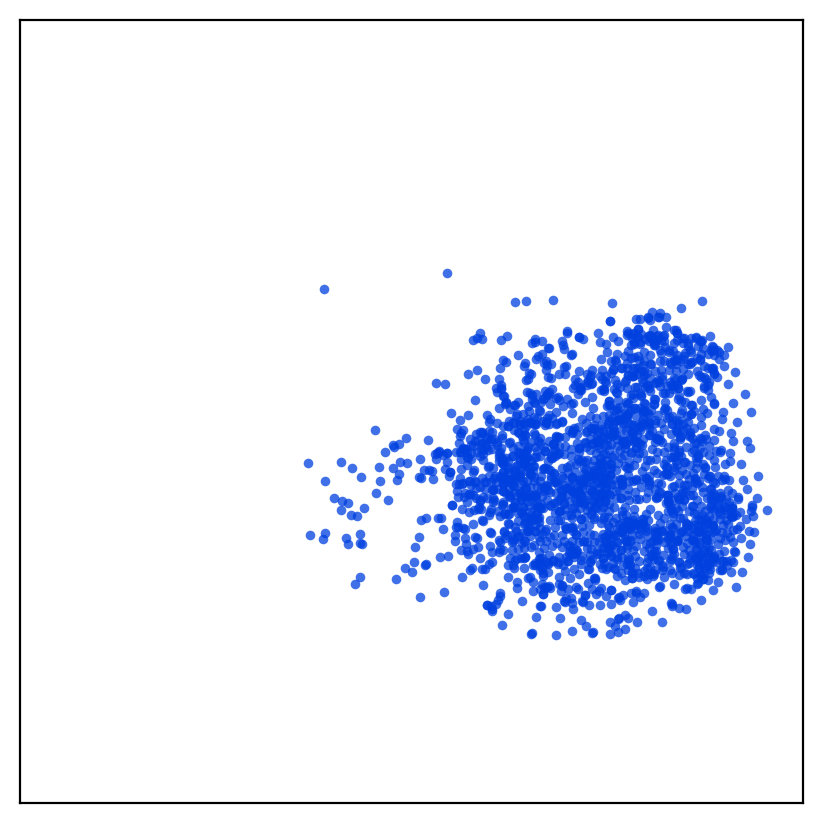}
    \end{subfigure}

    \begin{subfigure}[b]{0.18\textwidth}
        \centering
        \includegraphics[width=\textwidth]{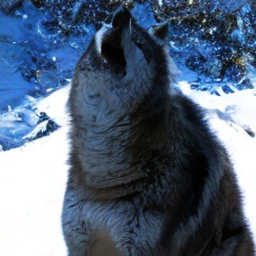}
        \caption*{$\alpha=0.0$}
    \end{subfigure}
    \hfill
    \begin{subfigure}[b]{0.18\textwidth}
        \centering
        \includegraphics[width=\textwidth]{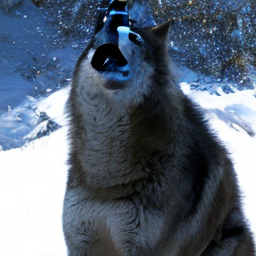}
        \caption*{$\alpha=0.1$}
    \end{subfigure}
    \hfill
    \begin{subfigure}[b]{0.18\textwidth}
        \centering
        \includegraphics[width=\textwidth]{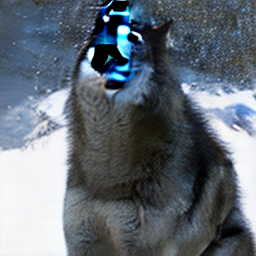}
        \caption*{$\alpha=0.2$}
    \end{subfigure}
    \hfill
    \begin{subfigure}[b]{0.18\textwidth}
        \centering
        \includegraphics[width=\textwidth]{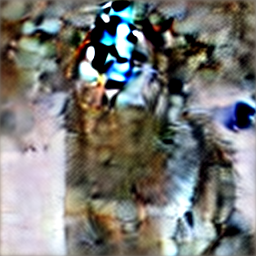}
        \caption*{$\alpha=0.6$}
    \end{subfigure}
    \hfill
    \begin{subfigure}[b]{0.18\textwidth}
        \centering
        \includegraphics[width=\textwidth]{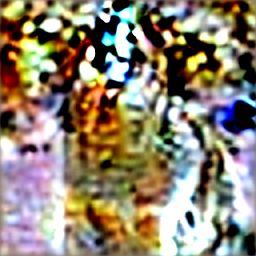}
        \caption*{$\alpha=1.0$}
    \end{subfigure}
    \caption{
    \textbf{GMOs amplify the ``mode-seeking'' nature of model outputs and their errors.}
    Here we consider a collection of $2048$ GMOs for $\alpha$ values $0.0$, $0.1$, $0.2$, $0.6$, and $1.0$ generated using AlphaFlow~\citep{zhangAlphaFlowUnderstandingImproving2026} SiT-XL/2~\citep{maSiTExploringFlow2024} model trained on ImageNet256~\citep{krizhevskyImageNetClassificationDeep2012}.
    In the top row, we visualize a t-SNE projection~\citep{maatenVisualizingDataUsing2008} of the CLIP embeddings~\citep{radford2021learning} of the GMOs.
    In the bottom row, we visualize the GMOs for a specific latent vector.
    }
    \label{fig:mode_seeking}
\end{figure}

\section{The Effect of GMOs on the Compute and Weight-merging of Neon}\label{sec:appendix_compute}

The performance of the Neon self-training algorithm is dependent on the amount of compute used for finetuning $\mathcal{B}$ and the weight-merging parameter $w$.
In \Cref{fig:compute_weight_merging}, we explore how GMOs affect these hyperparameters for a UNet generative model~\citep{ronnebergerUNetConvolutionalNetworks2015} training on CIFAR10~\citep{krizhevskyLearningMultipleLayers2009} using the MeanFlow framework~\citep{gengMeanFlowsOnestep2025}.
We observe that as the value of $\alpha$ increases, the optimal values for $\mathcal{B}$ and $w$ decrease.
This supports the claim that GMOs improve Neon's effectiveness: the more directed degradation induced by GMOs means the optimal checkpoint is reached with less finetuning.

To quantify the complete cost of the pipeline, we follow Neon's analysis~\citep{alemohammadNeonNegativeExtrapolation2026} and report the total cost as a percentage of the model's pretraining compute. Unlike the finetuning budgets in \Cref{tab:sota}, the totals in \Cref{tab:compute} also include the upfront cost of GMO generation.
Across all models, the complete pipeline costs between $0.01\%$ and $1.1\%$ of pretraining compute.

\begin{table}[ht]
    \centering
    \caption{\textbf{Total pipeline cost as a percentage of pretraining compute, including the upfront cost of GMO generation.}}
    \label{tab:compute}
    \vspace{0.5em}
    \begin{tabular}{lcc}
        \toprule
        Model & Standard Neon (\% of pretraining) & Neon+GMOs (\% of pretraining) \\
        \midrule
        IMM               & $0.007\%$ & $0.01\%$ \\
        AlphaFlow-XL/2    & $0.10\%$  & $0.51\%$ \\
        AlphaFlow-B/2     & $0.16\%$  & $0.57\%$ \\
        MeanFlow-B/2      & $0.26\%$  & $0.61\%$ \\
        MeanFlow-L/2      & $0.63\%$  & $1.06\%$ \\
        \bottomrule
    \end{tabular}
\end{table}

\begin{figure}[ht]
    \centering
    \begin{tikzpicture}
        \begin{groupplot}[
            group style={
                group size=3 by 1,
                horizontal sep=0.75cm,
            },
            width=4.5cm,
            height=4.5cm,
            xlabel={$w$},
            title style={align=center},
            colormap/viridis,
            view={0}{90},
            enlargelimits=false,
            scaled y ticks=false,
            y tick label style={/pgf/number format/sci, /pgf/number format/precision=1},
            zmin=2.25,
            zmax=3.45, 
            point meta min=2.25,
            point meta max=3.45,
            ytick={4000000,5000000,6000000},
            xtick={1.8,2.0,2.2},
            tick label style={font=\footnotesize},
            label style={font=\small},
            title style={font=\footnotesize},
        ]
        \nextgroupplot[
            ylabel={$\mathcal{B}$},
            title={$\alpha=0.0$},
            title style={yshift=-6pt}
        ]
        \addplot3[
            surf,
            shader=flat,
            mesh/cols=9,
        ] table [x=w, y=compute, z=fid, col sep=comma] {data/alpha_0p0.csv};
        \addplot3[only marks, mark=*, red, mark size=2pt] coordinates {(2.0, 6000000, 3)};
        
        \nextgroupplot[
            yticklabels={},
            title={$\alpha=0.05$},
            title style={yshift=-6pt}
        ]
        \addplot3[
            surf,
            shader=flat,
            mesh/cols=9,
        ] table [x=w, y=compute, z=fid, col sep=comma] {data/alpha_0p05.csv};
        \addplot3[only marks, mark=*, red, mark size=2pt] coordinates {(1.9, 5000000, 3)};
        
        \nextgroupplot[
            colorbar,
            colorbar style={
                ylabel={FID},
                yticklabel style={/pgf/number format/fixed, /pgf/number format/precision=1}
            },
            yticklabels={},
            title={$\alpha=0.1$},
            title style={yshift=-6pt}
        ]
        \addplot3[
            surf,
            shader=flat,
            mesh/cols=9,
        ] table [x=w, y=compute, z=fid, col sep=comma] {data/alpha_0p1.csv};
        \addplot3[only marks, mark=*, red, mark size=2pt] coordinates {(1.8, 4000000, 3)};

        \end{groupplot}
    \end{tikzpicture}
    \caption{
    \textbf{GMOs can reduce the finetuning required to achieve optimal results using Neon.}
    Here we apply Neon to a UNet generative model~\citep{ronnebergerUNetConvolutionalNetworks2015} training on CIFAR10~\citep{krizhevskyLearningMultipleLayers2009} using the MeanFlow framework~\citep{gengMeanFlowsOnestep2025}.
    We monitor the value of FID for a range of compute levels $\mathcal{B}$ and weight-merging parameters $w$.
    With red markers, we indicate which combination of these hyperparameters yields the model with the lowest FID score.
    }
    \label{fig:compute_weight_merging}
\end{figure}

\section{Experimental Details}\label{sec:experimental_details}

\subsection{\texorpdfstring{\Cref{fig:transfer}}{Transfer experiments}}\label{sec:exp_details-transfer}

To obtain the result of the right panel of Figure \ref{fig:transfer}, we generate $10,000$ GMOs from 1-step and 2-step IMM~\citep{zhouInductiveMomentMatching2025} models trained on CIFAR10~\citep{krizhevskyLearningMultipleLayers2009} at an $\alpha$ value of $0.1$.
This process takes approximately 6 hours using 8 NVIDIA A100-SXM4-80GB GPUs.
The 2-step base model is then finetuned using a computational budget of $2.56\times10^5$.
For reference, a computational budget $16,000$ times larger was used to pre-train the base checkpoint~\citep{zhouInductiveMomentMatching2025}.

Throughout the finetuning we take 8 checkpoints and evaluate each using weight-merging parameters in the range $[0,1]$ along a $0.1$-step interval.
In the right panel of Figure \ref{fig:transfer}, for each finetuning dataset, we visualize the evaluation curves that yielded the lowest FID scores.
When using standard outputs from the 2-step model, the optimal FID was seen at a computational budget of $1.6\times10^5$.
When using GMOs from the 2-step model, the optimal FID was seen at a computational budget of $9.6\times10^4$.
When using GMOs from the 1-step model, the optimal FID was seen at a computational budget of $1.28\times10^5$.
For completeness, in Figure \ref{fig:transfer_complete}, we provide the evaluation curves for finetuning checkpoint.

\begin{figure}[ht]
    \centering
    \begin{tikzpicture}
    \begin{groupplot}[
        group style={
            group size=4 by 2,
            horizontal sep=0.8cm,
        },
        width=4cm,
        height=4cm,
        tick label style={font=\footnotesize},
        label style={font=\small},
        title style={font=\footnotesize},
        ymin=1.7,
        ymax=2.5,
        every axis plot/.append style={
            line width=1.1pt,
            mark=none,
            line cap=round,
            line join=round
        }
    ]
    \nextgroupplot[
        ylabel={FID},
        title={\footnotesize $\mathcal{B}=3.2\times10^4$},
        title style={yshift=-6pt},
        legend to name=sharedlegend,
        legend columns=3,
        legend style={
            font=\footnotesize,
            /tikz/every even column/.append style={column sep=0.5cm},
        }
    ]
    \addplot[color=class1] table [x=w, y=vanilla_2step, col sep=comma] {data/imm_transfer_results_epoch_250.csv};
    \addlegendentry{Neon w/ 2-step Outputs}
    \addplot[color=class2] table [x=w, y=spectral_1step_a0.01, col sep=comma] {data/imm_transfer_results_epoch_250.csv};
    \addlegendentry{Neon w/ 2-step GMOs}
    \addplot[color=class3] table [x=w, y=spectral_2step_a0.01, col sep=comma] {data/imm_transfer_results_epoch_250.csv};
    \addlegendentry{Neon w/ 1-step Outputs}
    
    \nextgroupplot[
        title={$\mathcal{B}=6.4\times10^4$},
        title style={yshift=-6pt}
    ]
    \addplot[color=class1] table [x=w, y=vanilla_2step, col sep=comma] {data/imm_transfer_results_epoch_500.csv};
    \addplot[color=class2] table [x=w, y=spectral_1step_a0.01, col sep=comma] {data/imm_transfer_results_epoch_500.csv};
    \addplot[color=class3] table [x=w, y=spectral_2step_a0.01, col sep=comma] {data/imm_transfer_results_epoch_500.csv};

    \nextgroupplot[
        title={$\mathcal{B}=9.6\times10^4$},
        title style={yshift=-6pt}
    ]
    \addplot[color=class1] table [x=w, y=vanilla_2step, col sep=comma] {data/imm_transfer_results_epoch_750.csv};
    \addplot[color=class2] table [x=w, y=spectral_1step_a0.01, col sep=comma] {data/imm_transfer_results_epoch_750.csv};
    \addplot[color=class3] table [x=w, y=spectral_2step_a0.01, col sep=comma] {data/imm_transfer_results_epoch_750.csv};

    \nextgroupplot[
        title={$\mathcal{B}=1.28\times10^5$},
        title style={yshift=-6pt},
        ylabel={\phantom{FID}},
        yticklabel pos=right,
        yticklabels={\phantom{2}}
    ]
    \addplot[color=class1] table [x=w, y=vanilla_2step, col sep=comma] {data/imm_transfer_results_epoch_1000.csv};
    \addplot[color=class2] table [x=w, y=spectral_1step_a0.01, col sep=comma] {data/imm_transfer_results_epoch_1000.csv};
    \addplot[color=class3] table [x=w, y=spectral_2step_a0.01, col sep=comma] {data/imm_transfer_results_epoch_1000.csv};

    \nextgroupplot[
        ylabel={FID},
        xlabel={$w$},
        title={$\mathcal{B}=1.6\times10^5$},
        title style={yshift=-6pt}
    ]
    \addplot[color=class1] table [x=w, y=vanilla_2step, col sep=comma] {data/imm_transfer_results_epoch_1250.csv};
    \addplot[color=class2] table [x=w, y=spectral_1step_a0.01, col sep=comma] {data/imm_transfer_results_epoch_1250.csv};
    \addplot[color=class3] table [x=w, y=spectral_2step_a0.01, col sep=comma] {data/imm_transfer_results_epoch_1250.csv};
    
    \nextgroupplot[
        xlabel={$w$},
        title={$\mathcal{B}=1.92\times10^5$},
        title style={yshift=-6pt}
    ]
    \addplot[color=class1] table [x=w, y=vanilla_2step, col sep=comma] {data/imm_transfer_results_epoch_1500.csv};
    \addplot[color=class2] table [x=w, y=spectral_1step_a0.01, col sep=comma] {data/imm_transfer_results_epoch_1500.csv};
    \addplot[color=class3] table [x=w, y=spectral_2step_a0.01, col sep=comma] {data/imm_transfer_results_epoch_1500.csv};

    \nextgroupplot[
        xlabel={$w$},
        title={$\mathcal{B}=2.24\times10^5$},
        title style={yshift=-6pt}
    ]
    \addplot[color=class1] table [x=w, y=vanilla_2step, col sep=comma] {data/imm_transfer_results_epoch_1750.csv};
    \addplot[color=class2] table [x=w, y=spectral_1step_a0.01, col sep=comma] {data/imm_transfer_results_epoch_1750.csv};
    \addplot[color=class3] table [x=w, y=spectral_2step_a0.01, col sep=comma] {data/imm_transfer_results_epoch_1750.csv};

    \nextgroupplot[
        xlabel={$w$},
        title={$\mathcal{B}=2.56\times10^5$},
        title style={yshift=-6pt},
        ylabel={\phantom{FID}},
        yticklabel pos=right,
        yticklabels={\phantom{2}}
    ]
    \addplot[color=class1] table [x=w, y=vanilla_2step, col sep=comma] {data/imm_transfer_results_epoch_2000.csv};
    \addplot[color=class2] table [x=w, y=spectral_1step_a0.01, col sep=comma] {data/imm_transfer_results_epoch_2000.csv};
    \addplot[color=class3] table [x=w, y=spectral_2step_a0.01, col sep=comma] {data/imm_transfer_results_epoch_2000.csv};
    
    \end{groupplot}
    \end{tikzpicture}

    \vspace{0.3cm}
    
    \ref{sharedlegend}
    
    \caption{
    The complete set of evaluation curves for the experiment shown in the right panel of Figure \ref{fig:transfer}, and described in Appendix \ref{sec:exp_details-transfer}.
    }
    \label{fig:transfer_complete}
\end{figure}

\subsection{\texorpdfstring{\Cref{tab:sota}}{State-of-the-art experiments}}\label{sec:exp_details-sota}

Here we detail the hyperparameter sweeps that were used to obtain the results of \Cref{tab:sota}.
The following computation times are based on 8 NVIDIA RTX A6000 GPUs.

\begin{itemize}
    \item \textbf{MeanFlow SiT-B/2:} We generate $30,000$ standard outputs and GMOs for $\alpha$ values $0.05$ and $0.1$.
    This takes approximately 4 hours.
    The base model is finetuned for 48 epochs on each of these data sets, and we evaluate 6 equally spaced checkpoints. 
    Each round of finetuning takes 1 hour.
    For each checkpoint, we evaluate the FID using $50,000$ samples and $w$ values ranging from $0.0$ to $1.5$ at $ 0.1$-step increments.
    Each FID computation takes approximately 2 minutes.
    \item \textbf{MeanFlow SiT-L/2:}
    We generate $30,000$ standard outputs and GMOs for $\alpha$ values $0.1$ and $0.2$.
    This takes approximately 7 hours.
    The base model is finetuned for 60 epochs on each of these data sets, and we evaluate 6 equally spaced checkpoints.
    Each round of finetuning takes 3 hours.
    For each checkpoint, we evaluate the FID using $50,000$ samples and $w$ values ranging from $0.0$ to $1.5$ at $0.1$-step increments.
    Each FID computation takes approximately 3 minutes.
    \item \textbf{AlphaFlow SiT-B/2:} We generate $30,000$ standard outputs and GMOs for $\alpha$ values $0.1$ and $0.2$.
    This takes approximately 4 hours.
    The base model is finetuned for 32 epochs on each of these data sets, and we take checkpoints every 4 epochs.
    Each round of finetuning takes less than 1 hour.
    For each checkpoint, we evaluate the FID using $50,000$ samples and $w$ values ranging from $0.0$ to $1.5$ at $ 0.1$-step increments.
    Each FID computation takes approximately 2 minutes.
    \item \textbf{AlphaFlow SiT-XL/2:}
    We generate $30,000$ standard outputs and GMOs for $\alpha$ values $0.05$ and $0.1$.
    This takes approximately 10 hours.
    The base model is finetuned for 15 epochs on each of these data sets, and we take checkpoints every 5 epochs.
    Each round of finetuning takes less than 1 hour.
    For each checkpoint, we evaluate the FID using $50,000$ samples and $w$ values ranging from $0.0$ to $1.6$ at $ 0.1$-step increments.
    Each FID computation takes approximately 4 minutes.
    \item \textbf{IMM (DiT-XL/2):}
    We generate $30{,}000$ standard outputs and GMOs for $\alpha$ values of $0.1$ and $0.2$.
    This takes approximately 5 hours.
    The base model is finetuned for $109$ epochs on each dataset, and we evaluate 8 equally spaced checkpoints. Each round of finetuning takes approximately 5 hours. For each checkpoint, we evaluate FID using $50{,}000$ samples and $w$ values ranging from $0.0$ to $1.8$ in increments of $0.2$. Each FID computation takes approximately 10 minutes.
    
\end{itemize}

In \Cref{fig:hyperparameter_sweeps}, we consider each individual model as a row, and show in the left panel what the minimum FID value is for the sweep over $w$ at each compute level.
In the right panel, we show the value of $w$ for which the minimum FID was obtained at each compute level.

\subsection{SIMS-style guidance}
\label{sec:exp_details-sims}

Here we detail the experiments in \Cref{sec:sims}.
The following computation times are based on
8 NVIDIA RTX A6000 GPUs.

\begin{itemize}
    \item \textbf{IMM (DiT-XL/2):}
    We use synthetic datasets containing $50{,}000$
    standard outputs or GMOs with $\alpha$ values $0.1$
    and $0.2$.
    For the main sweep, we evaluate 5 checkpoints for
    standard outputs and 8 checkpoints for GMOs with
    $\alpha=0.1$, using guidance strengths
    $\omega\in\{0.5,1.0,1.6\}$.
    The auxiliary models used for the reported comparison
    are finetuned for approximately $31$ epochs on their
    respective datasets, requiring approximately
    $2.5$ hours per model.
    Each FID evaluation uses $50{,}000$ samples,
    one-step sampling, and classifier-free guidance
    scale $1.5$, and takes approximately $6$ minutes.
    We additionally evaluate $\alpha=0.2$ using
    $\omega\in\{0.5,1.0,1.6,2.4\}$; its lowest observed
    FID is with GMOs at $7.535$ 
\end{itemize}

\begin{table}[ht]
    \centering
    \caption{SIMS-style guidance on IMM ImageNet256.
    FID is reported as mean $\pm$ sample standard deviation
    across five evaluations of fixed checkpoints,
    using $50{,}000$ samples per evaluation.
    Both guided variants use $\omega=1.6$.}
    \label{tab:sims_confirmation}
    \begin{tabular}{lc}
        \toprule
        Method & FID $\downarrow$ \\
        \midrule
        Base IMM & $8.342 \pm 0.058$ \\
        SIMS, standard outputs & $7.591 \pm 0.057$ \\
        SIMS, GMOs ($\alpha=0.1$)
            & $\mathbf{6.920 \pm 0.048}$ \\
        \bottomrule
    \end{tabular}
\end{table}

\Cref{tab:prdc} reports precision, recall, density, and coverage for all architectures of \Cref{tab:sota} (five seeds, mean $\pm$ std).
Density and coverage, the outlier-robust fidelity and diversity measures~\citep{naeemReliableFidelityDiversity2020}, are maintained or improved from standard-outputs Neon to GMOs on every architecture, showing that GMOs improve image quality and diversity simultaneously rather than trading one for the other (see \Cref{sec:fid_precision_recall}).
\Cref{tab:significance} reports the corresponding significance analysis for the FID results of \Cref{tab:sota}.

\begin{table}[ht]
    \centering
    \caption{\textbf{Precision, recall, density, and coverage (5 seeds, mean $\pm$ std).}
    Density and coverage are maintained or improved from standard-outputs Neon to GMOs on every architecture.}
    \label{tab:prdc}
    \vspace{0.5em}
    \setlength{\tabcolsep}{5.5pt}
    \begin{tabular}{llcccc}
        \toprule
        Architecture & Method & Precision & Recall & Density & Coverage \\
        \midrule
        IMM (DiT-XL/2)     & Base     & $.585\pm.002$ & $.647\pm.001$ & $.652\pm.001$ & $.662\pm.003$ \\
                           & Neon-std & $.598\pm.002$ & $.654\pm.002$ & $.695\pm.001$ & $.693\pm.002$ \\
                           & GMOs     & $.615\pm.003$ & $.647\pm.002$ & $\mathbf{.728\pm.003}$ & $\mathbf{.714\pm.002}$ \\
        \midrule
        MeanFlow SiT-B/2   & Base     & $.717\pm.002$ & $.452\pm.002$ & $1.081\pm.006$ & $.749\pm.001$ \\
                           & Neon-std & $.720\pm.001$ & $.452\pm.003$ & $1.110\pm.006$ & $.755\pm.001$ \\
                           & GMOs     & $.725\pm.001$ & $.451\pm.003$ & $\mathbf{1.124\pm.007}$ & $\mathbf{.759\pm.002}$ \\
        \midrule
        MeanFlow SiT-L/2   & Base     & $.752\pm.001$ & $.494\pm.002$ & $1.214\pm.002$ & $.828\pm.001$ \\
                           & Neon-std & $.764\pm.000$ & $.486\pm.002$ & $1.264\pm.002$ & $.837\pm.001$ \\
                           & GMOs     & $.763\pm.001$ & $.483\pm.002$ & $\mathbf{1.267\pm.002}$ & $\mathbf{.839\pm.001}$ \\
        \midrule
        AlphaFlow SiT-B/2  & Base     & $.748\pm.001$ & $.450\pm.001$ & $1.210\pm.004$ & $.794\pm.001$ \\
                           & Neon-std & $.743\pm.002$ & $.454\pm.003$ & $1.197\pm.003$ & $.791\pm.002$ \\
                           & GMOs     & $.740\pm.001$ & $.455\pm.003$ & $1.197\pm.003$ & $.791\pm.002$ \\
        \midrule
        AlphaFlow SiT-XL/2 & Base     & $.695\pm.001$ & $.596\pm.001$ & $1.005\pm.002$ & $.811\pm.001$ \\
                           & Neon-std & $.701\pm.001$ & $.593\pm.003$ & $1.025\pm.003$ & $.820\pm.002$ \\
                           & GMOs     & $.705\pm.000$ & $.589\pm.002$ & $\mathbf{1.034\pm.003}$ & $\mathbf{.821\pm.001}$ \\
        \bottomrule
    \end{tabular}
\end{table}

\begin{table}[ht]
    \centering
    \caption{\textbf{Statistical significance of the GMO improvement over standard-outputs Neon} (5 seeds, mean $\pm$ std).
    The final column reports the additional FID reduction of GMOs as a percentage of the reduction Neon itself achieves over the base model.}
    \label{tab:significance}
    \vspace{0.5em}
    \resizebox{\textwidth}{!}{%
    \begin{tabular}{lccccc}
        \toprule
        Architecture & Base FID & Neon-std FID & GMOs FID & Welch $p$ & GMOs' reduction, \% of Neon's \\
        \midrule
        IMM (DiT-XL/2)      & $8.34$ & $7.32\pm.07$ & $\mathbf{6.25\pm.03}$ & ${\sim}10^{-9}$    & $96\%$  \\
        MeanFlow SiT-B/2    & $6.08$ & $5.70\pm.01$ & $\mathbf{5.60\pm.01}$ & $2.6\times10^{-7}$ & $26\%$  \\
        MeanFlow SiT-L/2    & $3.97$ & $3.74\pm.02$ & $\mathbf{3.70\pm.02}$ & $0.013$            & $17\%$  \\
        AlphaFlow SiT-B/2   & $5.55$ & $5.36\pm.02$ & $\mathbf{5.16\pm.02}$ & $2.6\times10^{-7}$ & $105\%$ \\
        AlphaFlow SiT-XL/2  & $2.93$ & $2.64\pm.02$ & $\mathbf{2.59\pm.02}$ & $0.004$            & $17\%$  \\
        \bottomrule
    \end{tabular}}
\end{table}
% Removed an unmatched closing brace present in edit.tex.

\section{Model Licenses}\label{sec:licenses}

IMM model weights are obtained through the imm GitHub repository\footnote{\url{https://github.com/lumalabs/imm}} under CC BY-NC-SA 4.0 License.
MeanFlow model weights are obtained through the MeanFlow GitHub repository\footnote{\url{https://github.com/zhuyu-cs/MeanFlow}} under the MIT license.
AlphaFlow model weights are obtained through the AlphaFlow GitHub repository\footnote{\url{https://github.com/snap-research/alphaflow}} under the Snap Inc. Non-Commercial license.

\begin{figure}[ht]
    \centering
    \begin{tikzpicture}
    \begin{groupplot}[
        group style={
            group size=2 by 5,
            horizontal sep=2cm,
            vertical sep=1.5cm,
            group name=plots
        },
        width=6cm,
        height=4.0cm,
        xlabel={$\mathcal{B}$},
        tick label style={font=\footnotesize},
        label style={font=\small},
        title style={font=\small},
        every axis plot/.append style={
            line width=1.3pt,
            mark=none,
            line cap=round,
            line join=round
        }
    ]

    % ================= ROW 1: MeanFlow B-2 =================
    \nextgroupplot[
        ylabel={FID},
    ]
    \addplot[color=class1] table [x=compute, y=0.0, col sep=comma]
        {data/meanflow-b2-sweep-fid.csv};
    \addplot[color=class2] table [x=compute, y=0.05, col sep=comma]
        {data/meanflow-b2-sweep-fid.csv};
    \addplot[color=class3] table [x=compute, y=0.1, col sep=comma]
        {data/meanflow-b2-sweep-fid.csv};

    \nextgroupplot[
        ylabel={$w$},
        legend style={
            at={(0.95,0.95)},
            anchor=north east,
            font=\scriptsize,
            fill=white,
            fill opacity=0.8,
            draw opacity=1,
            text opacity=1
        }
    ]
    \addplot[color=class1] table [x=compute, y=0.0, col sep=comma]
        {data/meanflow-b2-sweep-w.csv};
    \addlegendentry{$\alpha=0.0$}

    \addplot[color=class2] table [x=compute, y=0.05, col sep=comma]
        {data/meanflow-b2-sweep-w.csv};
    \addlegendentry{$\alpha=0.05$}

    \addplot[color=class3] table [x=compute, y=0.1, col sep=comma]
        {data/meanflow-b2-sweep-w.csv};
    \addlegendentry{$\alpha=0.1$}

    % ================= ROW 2: MeanFlow L-2 =================
    \nextgroupplot[
        ylabel={FID},
    ]
    \addplot[color=class1] table [x=compute, y=0.0, col sep=comma]
        {data/meanflow-l2-sweep-fid.csv};
    \addplot[color=class3] table [x=compute, y=0.1, col sep=comma]
        {data/meanflow-l2-sweep-fid.csv};
    \addplot[color=class4] table [x=compute, y=0.2, col sep=comma]
        {data/meanflow-l2-sweep-fid.csv};

    \nextgroupplot[
        ylabel={$w$},
        legend style={
            at={(0.95,0.95)},
            anchor=north east,
            font=\scriptsize,
            fill=white,
            fill opacity=0.8,
            draw opacity=1,
            text opacity=1
        }
    ]
    \addplot[color=class1] table [x=compute, y=0.0, col sep=comma]
        {data/meanflow-l2-sweep-w.csv};
    \addlegendentry{$\alpha=0.0$}

    \addplot[color=class3] table [x=compute, y=0.1, col sep=comma]
        {data/meanflow-l2-sweep-w.csv};
    \addlegendentry{$\alpha=0.1$}

    \addplot[color=class4] table [x=compute, y=0.2, col sep=comma]
        {data/meanflow-l2-sweep-w.csv};
    \addlegendentry{$\alpha=0.2$}

    % ================= ROW 3: AlphaFlow B-2 =================
    \nextgroupplot[
        ylabel={FID},
    ]
    \addplot[color=class1] table [x=compute, y=0.0, col sep=comma]
        {data/alphaflow-b2-sweep-fid.csv};
    \addplot[color=class3] table [x=compute, y=0.1, col sep=comma]
        {data/alphaflow-b2-sweep-fid.csv};
    \addplot[color=class4] table [x=compute, y=0.2, col sep=comma]
        {data/alphaflow-b2-sweep-fid.csv};

    \nextgroupplot[
        ylabel={$w$},
        legend style={
            at={(0.95,0.95)},
            anchor=north east,
            font=\scriptsize,
            fill=white,
            fill opacity=0.8,
            draw opacity=1,
            text opacity=1
        }
    ]
    \addplot[color=class1] table [x=compute, y=0.0, col sep=comma]
        {data/alphaflow-b2-sweep-w.csv};
    \addlegendentry{$\alpha=0.0$}

    \addplot[color=class3, line width=1pt] table [x=compute, y=0.1, col sep=comma]
        {data/alphaflow-b2-sweep-w.csv};
    \addlegendentry{$\alpha=0.1$}

    \addplot[color=class4, line width=1pt] table [x=compute, y=0.2, col sep=comma]
        {data/alphaflow-b2-sweep-w.csv};
    \addlegendentry{$\alpha=0.2$}

    % ================= ROW 4: AlphaFlow XL-2 =================
    \nextgroupplot[
        ylabel={FID},
    ]
    \addplot[color=class1] table [x=compute, y=0.0, col sep=comma]
        {data/alphaflow-xl2-sweep-fid.csv};
    \addplot[color=class2] table [x=compute, y=0.05, col sep=comma]
        {data/alphaflow-xl2-sweep-fid.csv};
    \addplot[color=class3] table [x=compute, y=0.1, col sep=comma]
        {data/alphaflow-xl2-sweep-fid.csv};

    \nextgroupplot[
        ylabel={$w$},
        legend style={
            at={(0.95,0.95)},
            anchor=north east,
            font=\scriptsize,
            fill=white,
            fill opacity=0.8,
            draw opacity=1,
            text opacity=1
        }
    ]
    \addplot[color=class1] table [x=compute, y=0.0, col sep=comma]
        {data/alphaflow-xl2-sweep-w.csv};
    \addlegendentry{$\alpha=0.0$}

    \addplot[color=class2] table [x=compute, y=0.05, col sep=comma]
        {data/alphaflow-xl2-sweep-w.csv};
    \addlegendentry{$\alpha=0.05$}

    \addplot[color=class3] table [x=compute, y=0.1, col sep=comma]
        {data/alphaflow-xl2-sweep-w.csv};
    \addlegendentry{$\alpha=0.1$}

    % ================= ROW 5: IMM 1-step =================
    \nextgroupplot[
        ylabel={FID},
    ]
    \addplot[color=class1] table [x=compute, y=0.0, col sep=comma]
        {data/imm-1step-sweep-fid.csv};
    \addplot[color=class3] table [x=compute, y=0.1, col sep=comma]
        {data/imm-1step-sweep-fid.csv};
    \addplot[color=class4] table [x=compute, y=0.2, col sep=comma]
        {data/imm-1step-sweep-fid.csv};

    \nextgroupplot[
        ylabel={$w$},
        legend style={
            at={(0.95,0.95)},
            anchor=north east,
            font=\scriptsize,
            fill=white,
            fill opacity=0.8,
            draw opacity=1,
            text opacity=1
        }
    ]
    \addplot[color=class1] table [x=compute, y=0.0, col sep=comma]
        {data/imm-1step-sweep-w.csv};
    \addlegendentry{$\alpha=0.0$}

    \addplot[color=class3] table [x=compute, y=0.1, col sep=comma]
        {data/imm-1step-sweep-w.csv};
    \addlegendentry{$\alpha=0.1$}

    \addplot[color=class4] table [x=compute, y=0.2, col sep=comma]
        {data/imm-1step-sweep-w.csv};
    \addlegendentry{$\alpha=0.2$}

    \end{groupplot}

    \path (plots c1r1.south west) -- (plots c2r1.south east)
        node[midway, below=0.5cm, font=\small] {MeanFlow B-2};
    \path (plots c1r2.south west) -- (plots c2r2.south east)
        node[midway, below=0.5cm, font=\small] {MeanFlow L-2};
    \path (plots c1r3.south west) -- (plots c2r3.south east)
        node[midway, below=0.5cm, font=\small] {AlphaFlow B-2};
    \path (plots c1r4.south west) -- (plots c2r4.south east)
        node[midway, below=0.5cm, font=\small] {AlphaFlow XL-2};
    \path (plots c1r5.south west) -- (plots c2r5.south east)
        node[midway, below=0.5cm, font=\small] {IMM one-step};
    
    \end{tikzpicture}

    \caption{
        Hyperparameter sweeps described in Appendix~\ref{sec:exp_details-sota}
        that yielded the results in \Cref{tab:sota}.
    }
    \label{fig:hyperparameter_sweeps}
\end{figure}

% =========== % More examples
\begin{figure}[ht]
    \centering

    \makebox[0.32\textwidth]{\textbf{Standard Outputs}}\hfill
    \makebox[0.32\textwidth]{\textbf{Perturbation}}\hfill
    \makebox[0.32\textwidth]{\textbf{Geometrically Modified Outputs}}

    \vspace{0.5em}

    \begin{subfigure}[b]{\textwidth}
        \centering
        \includegraphics[width=0.32\textwidth]{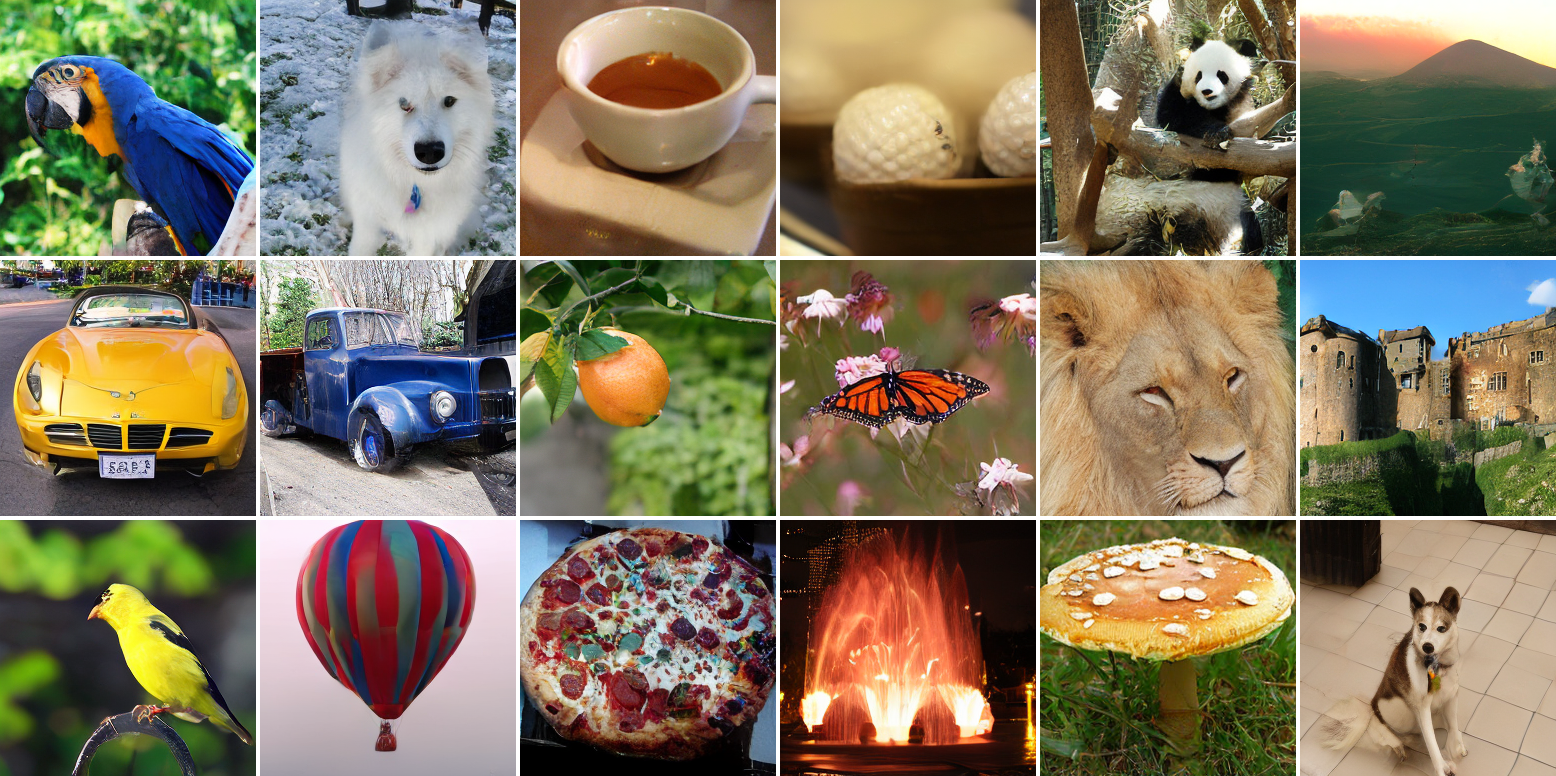}\hfill
        \includegraphics[width=0.32\textwidth]{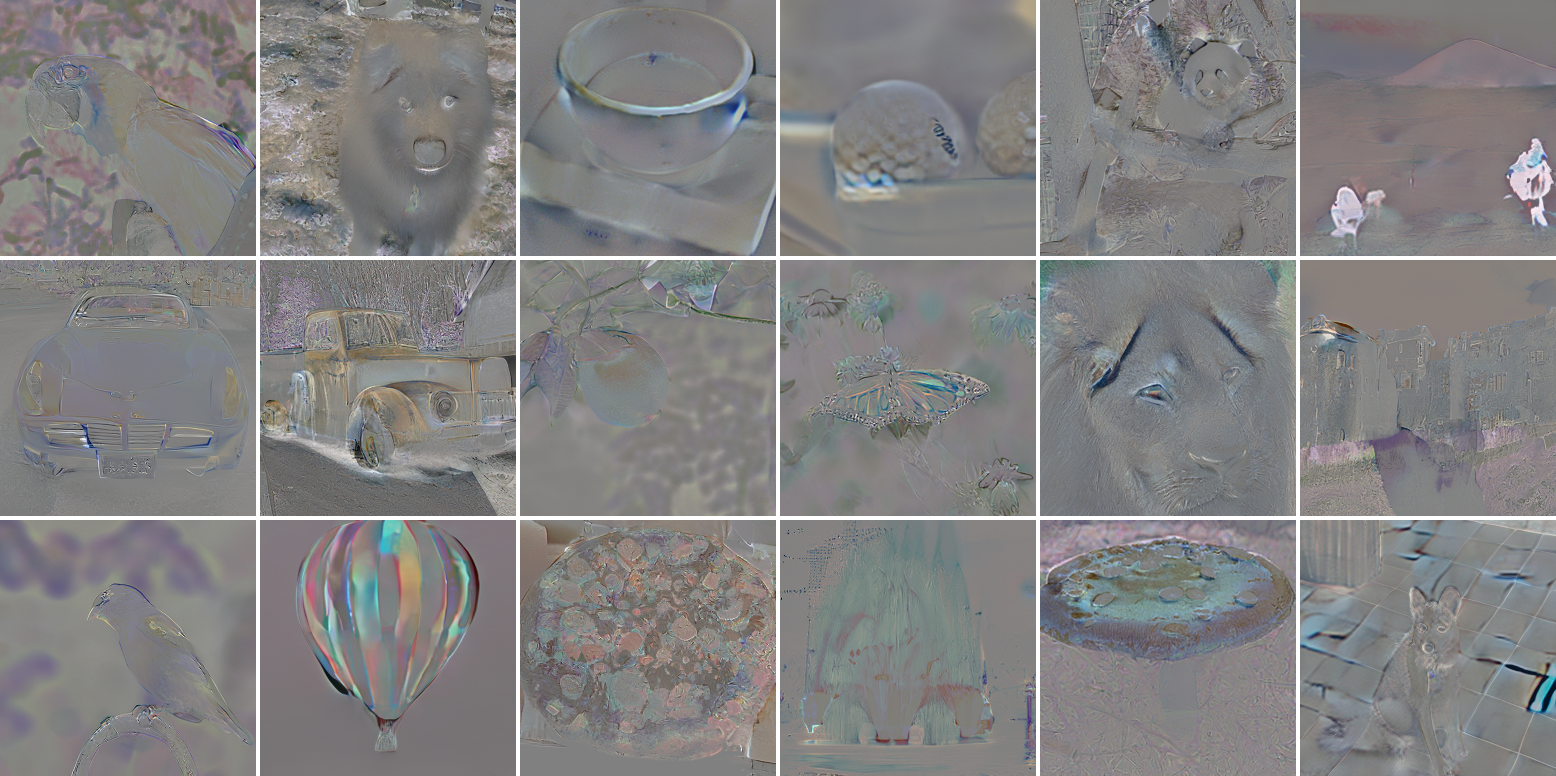}\hfill
        \includegraphics[width=0.32\textwidth]{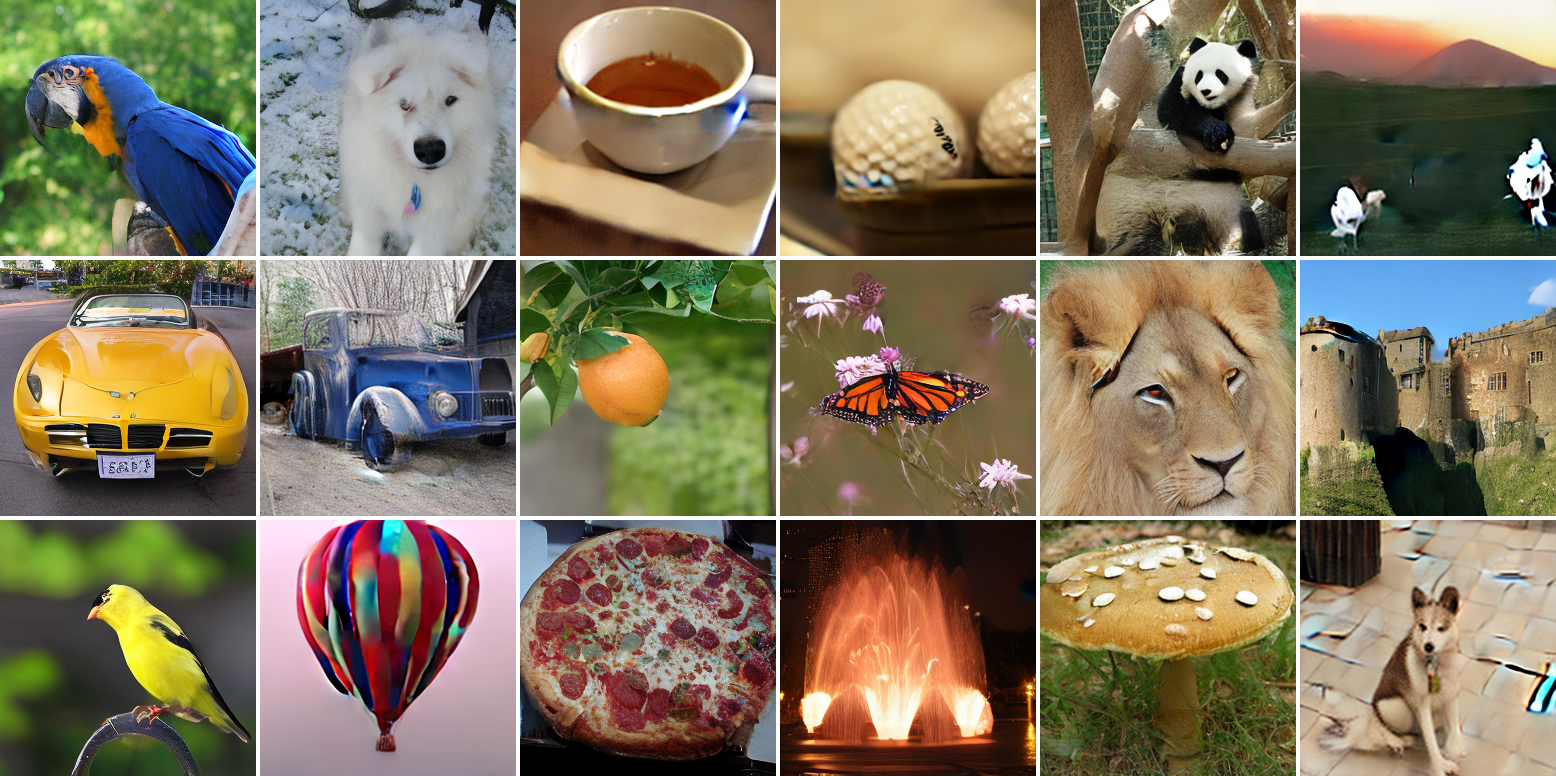}
        \caption*{$\alpha = 0.1$}
    \end{subfigure}

    \vspace{0.3em}

    \begin{subfigure}[b]{\textwidth}
        \centering
        \includegraphics[width=0.32\textwidth]{figures/per_alpha_grids/alpha_0.1/grid_standard.png}\hfill
        \includegraphics[width=0.32\textwidth]{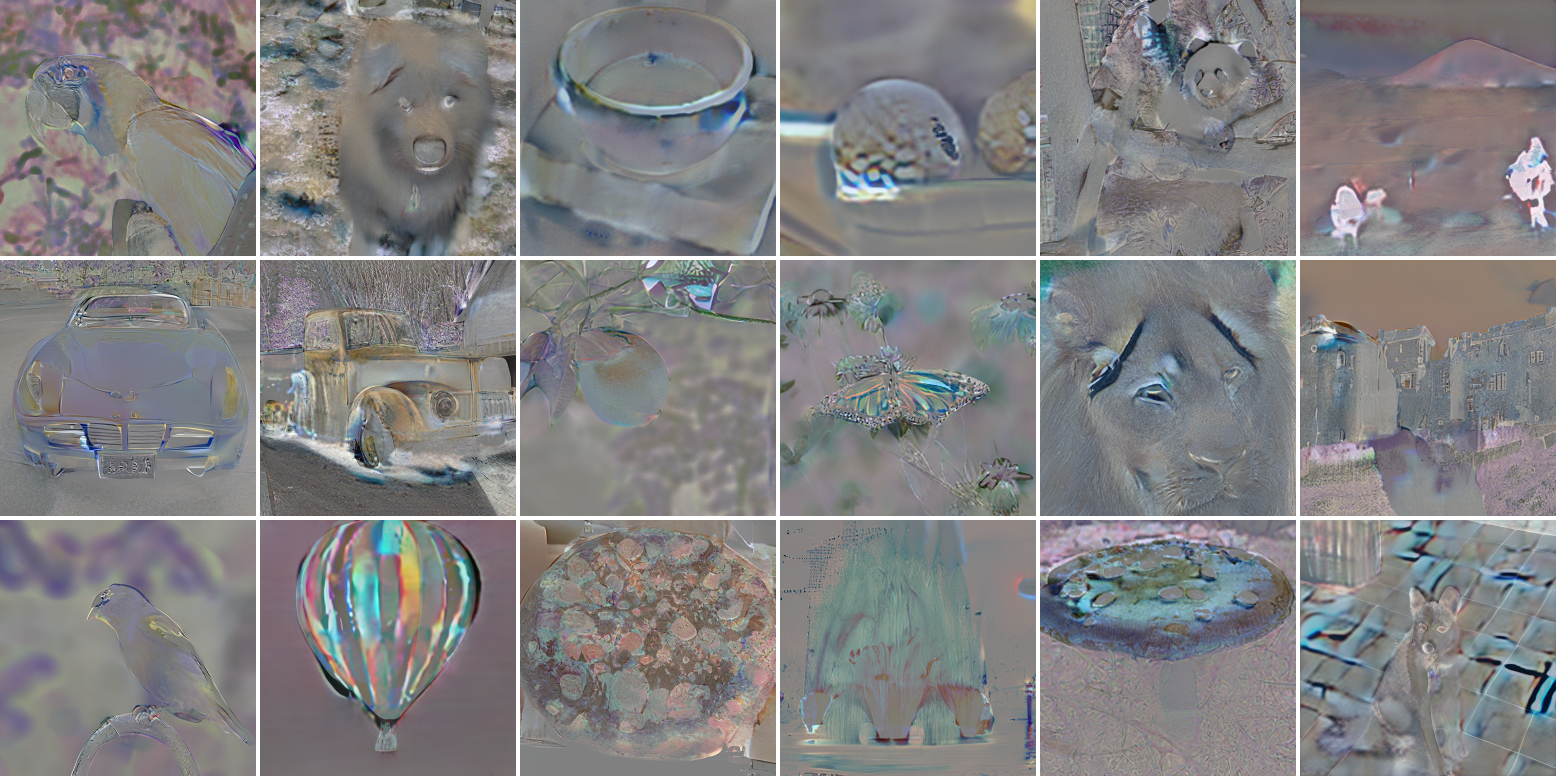}\hfill
        \includegraphics[width=0.32\textwidth]{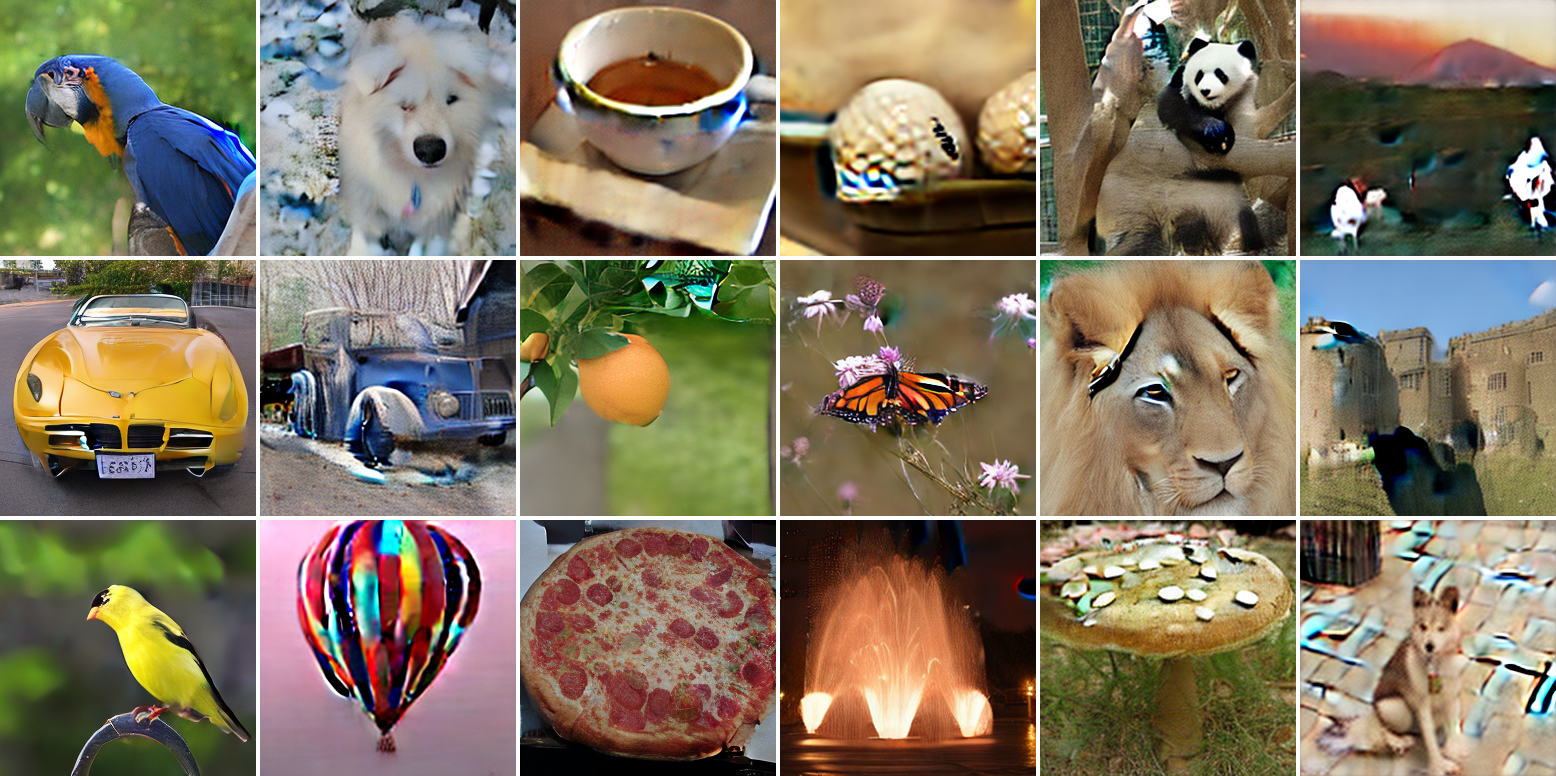}
        \caption*{$\alpha = 0.2$}
    \end{subfigure}

    \vspace{0.3em}

    \begin{subfigure}[b]{\textwidth}
        \centering
        \includegraphics[width=0.32\textwidth]{figures/per_alpha_grids/alpha_0.1/grid_standard.png}\hfill
        \includegraphics[width=0.32\textwidth]{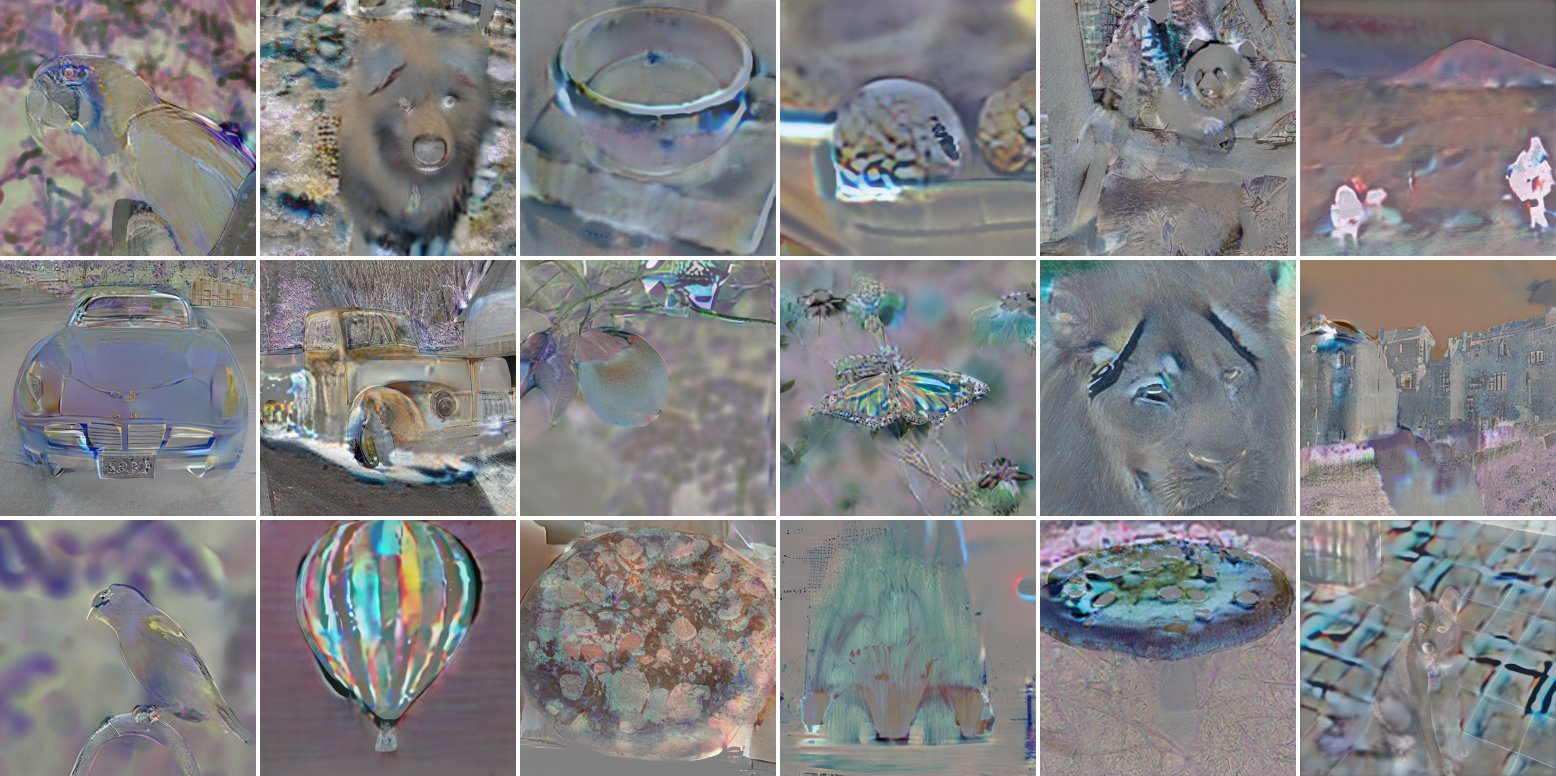}\hfill
        \includegraphics[width=0.32\textwidth]{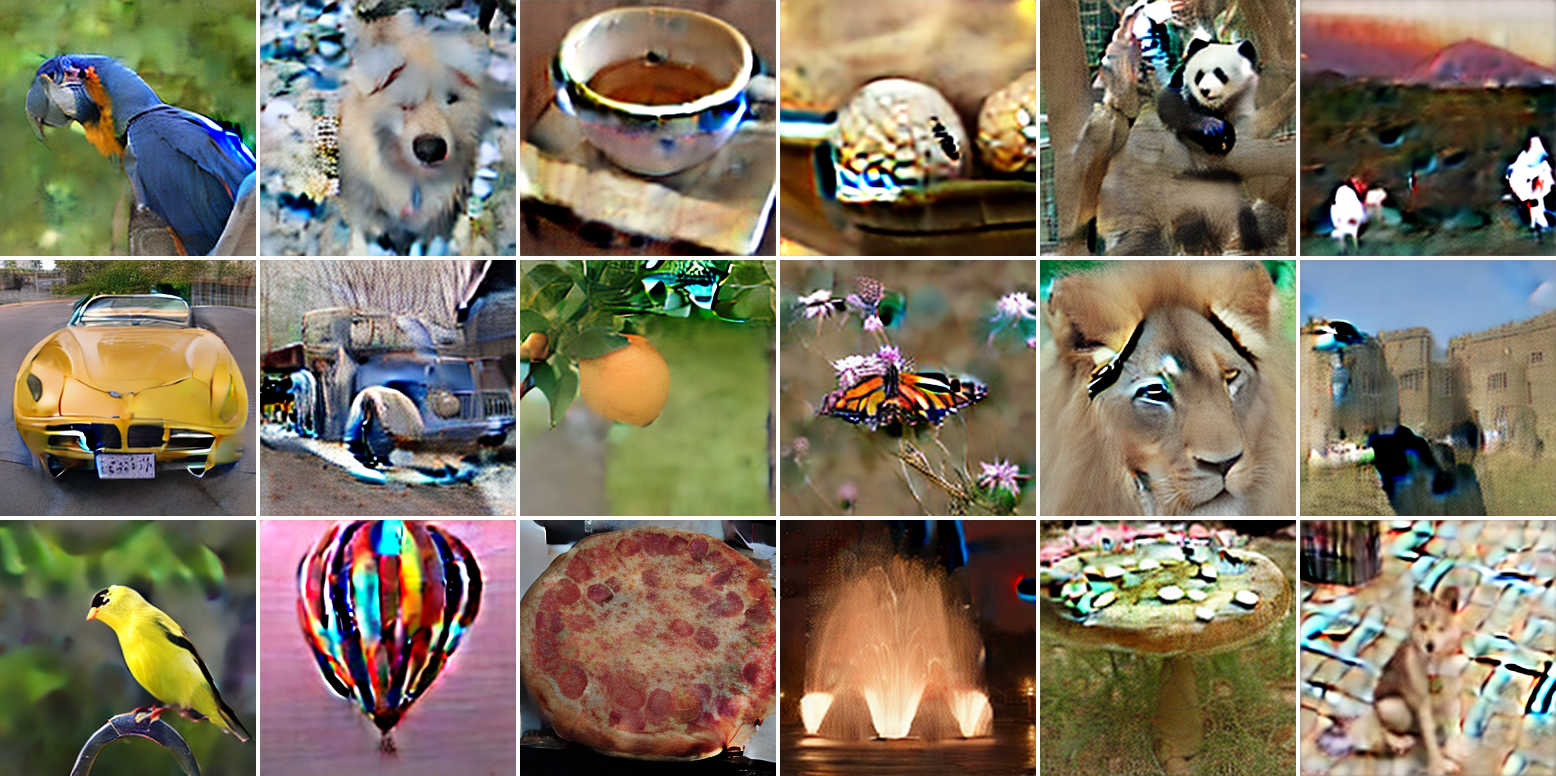}
        \caption*{$\alpha = 0.3$}
    \end{subfigure}

    \vspace{0.3em}

    \begin{subfigure}[b]{\textwidth}
        \centering
        \includegraphics[width=0.32\textwidth]{figures/per_alpha_grids/alpha_0.1/grid_standard.png}\hfill
        \includegraphics[width=0.32\textwidth]{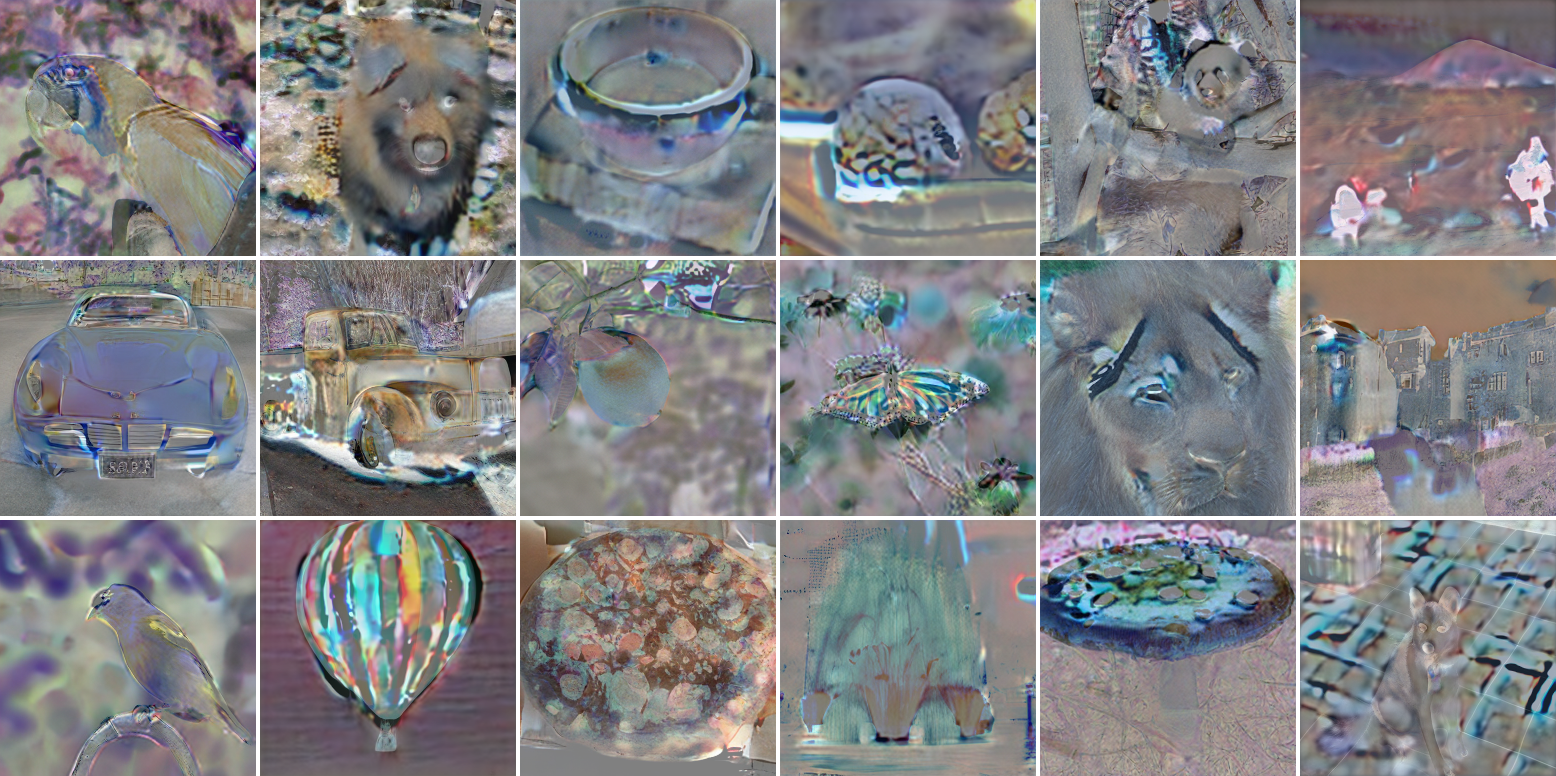}\hfill
        \includegraphics[width=0.32\textwidth]{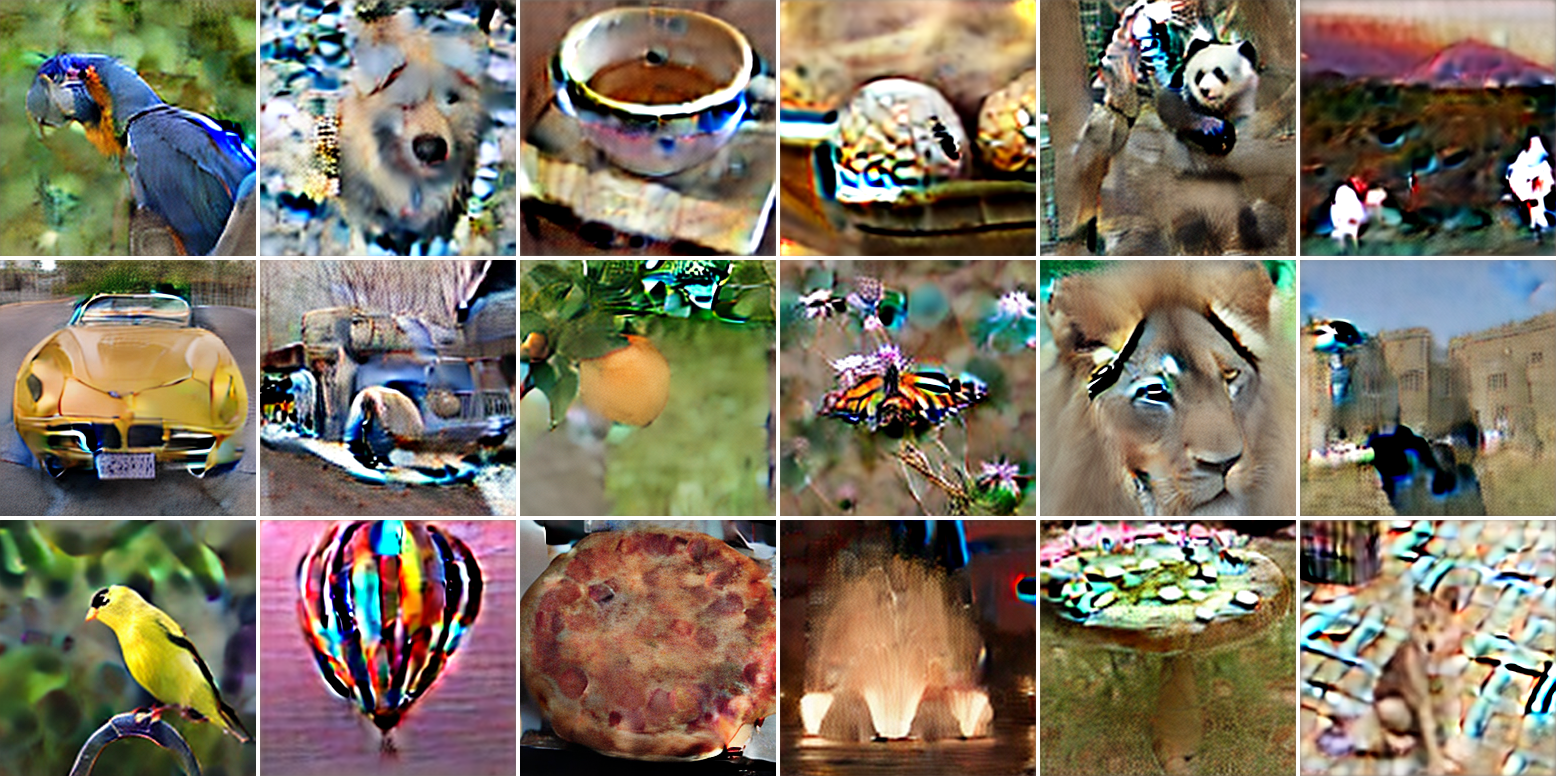}
        \caption*{$\alpha = 0.4$}
    \end{subfigure}

    \vspace{0.3em}

    \begin{subfigure}[b]{\textwidth}
        \centering
        \includegraphics[width=0.32\textwidth]{figures/per_alpha_grids/alpha_0.1/grid_standard.png}\hfill
        \includegraphics[width=0.32\textwidth]{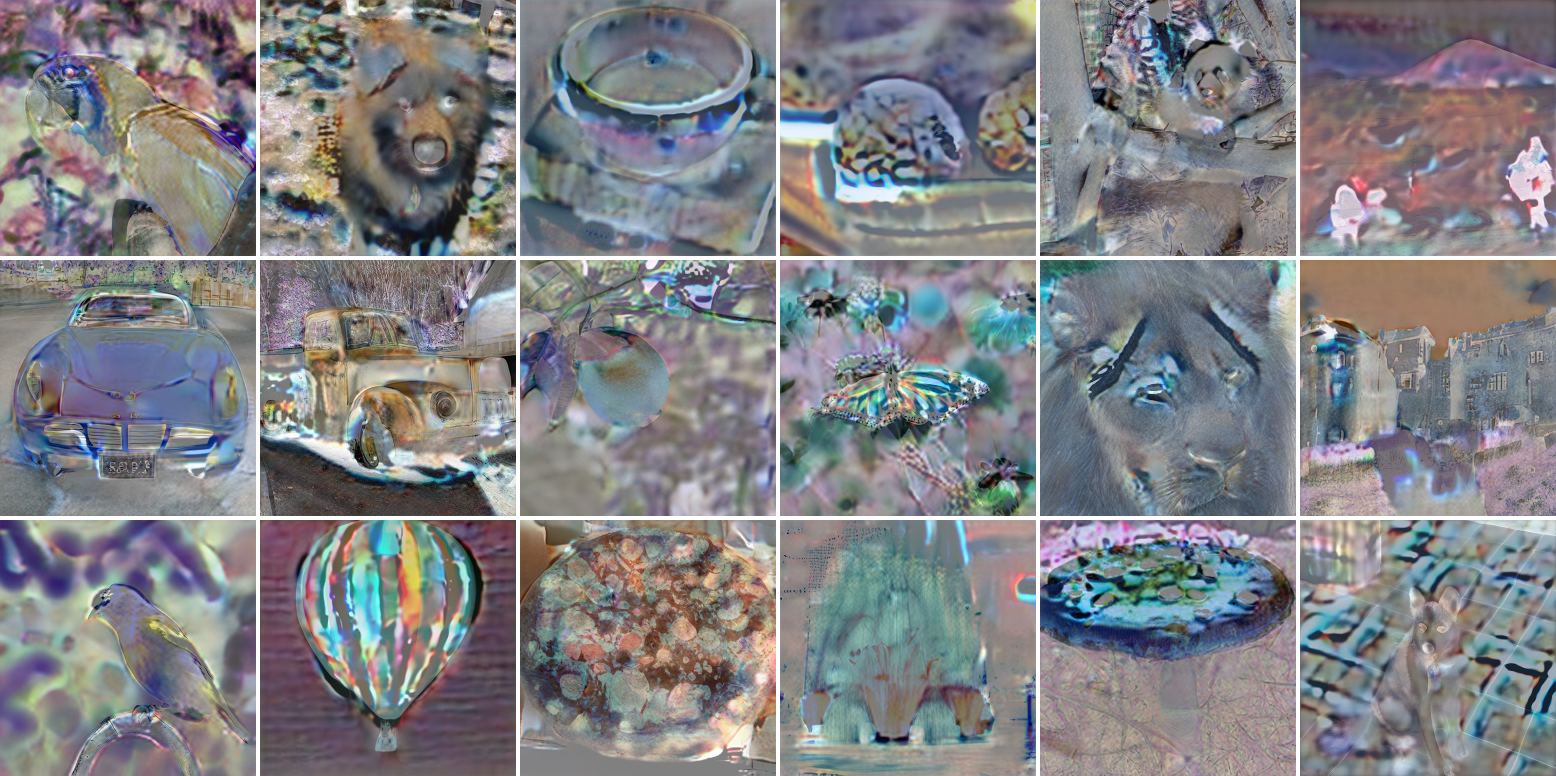}\hfill
        \includegraphics[width=0.32\textwidth]{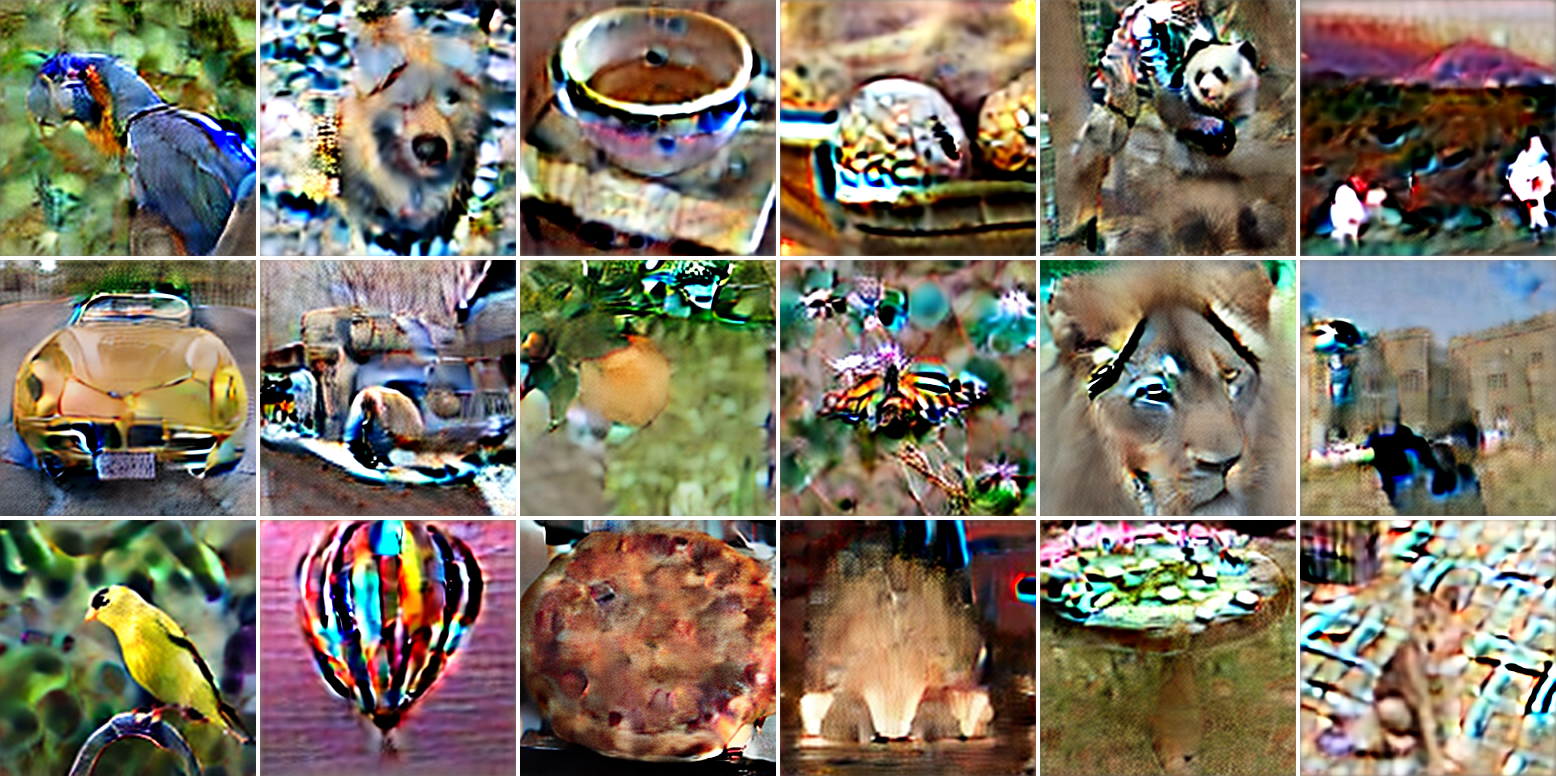}
        \caption*{$\alpha = 0.5$}
    \end{subfigure}

    \caption{
        Additional examples of those shown in Figure~\ref{fig:model_outputs}
        at different GMO~$\alpha$ levels.
    }
    \label{fig:model_outputs_more}
\end{figure}

% --- PART 1 (First 3 Rows) ---
\begin{figure}[p] % [p] puts it on its own dedicated page
    \centering
    \begin{tabular}{@{}cc@{}}
    \textbf{\large Standard Outputs} & \textbf{\large GMO Outputs} \\[0.5em]
    
    \includegraphics[width=0.46\textwidth]{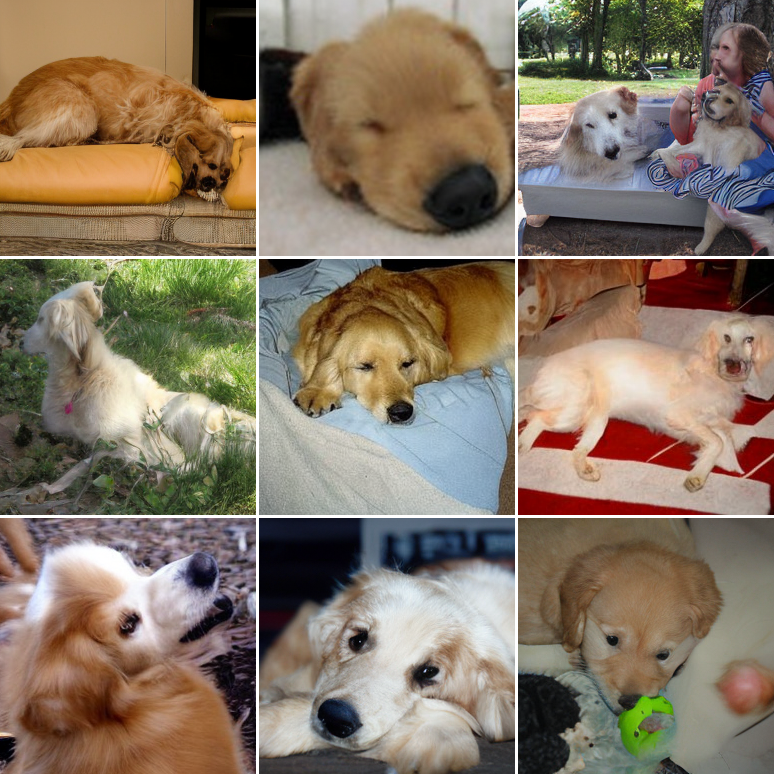} &
    \includegraphics[width=0.46\textwidth]{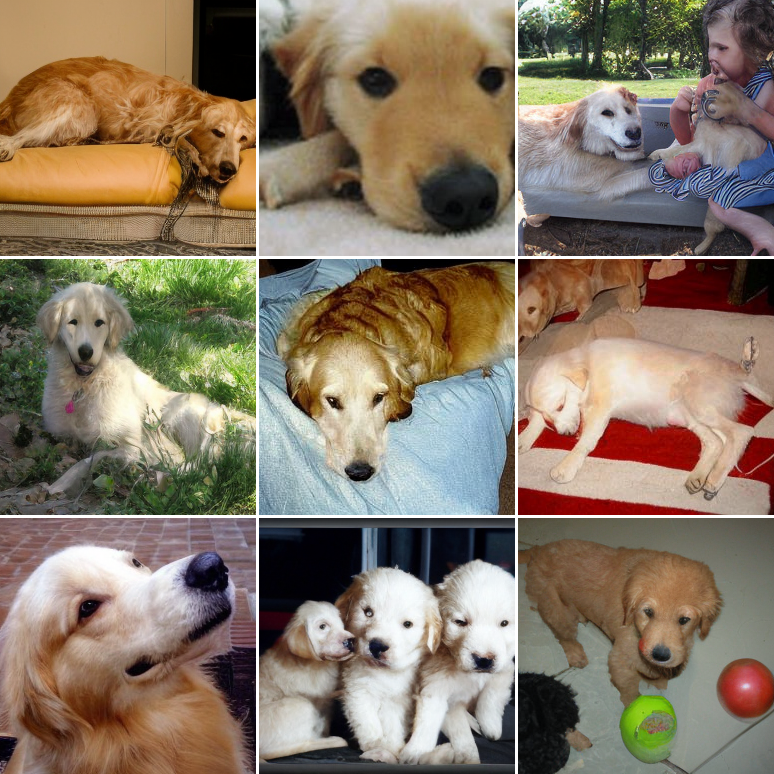} \\[-0.2em]
    \multicolumn{2}{c}{\small Golden Retriever} \\[1.0em]

    \includegraphics[width=0.46\textwidth]{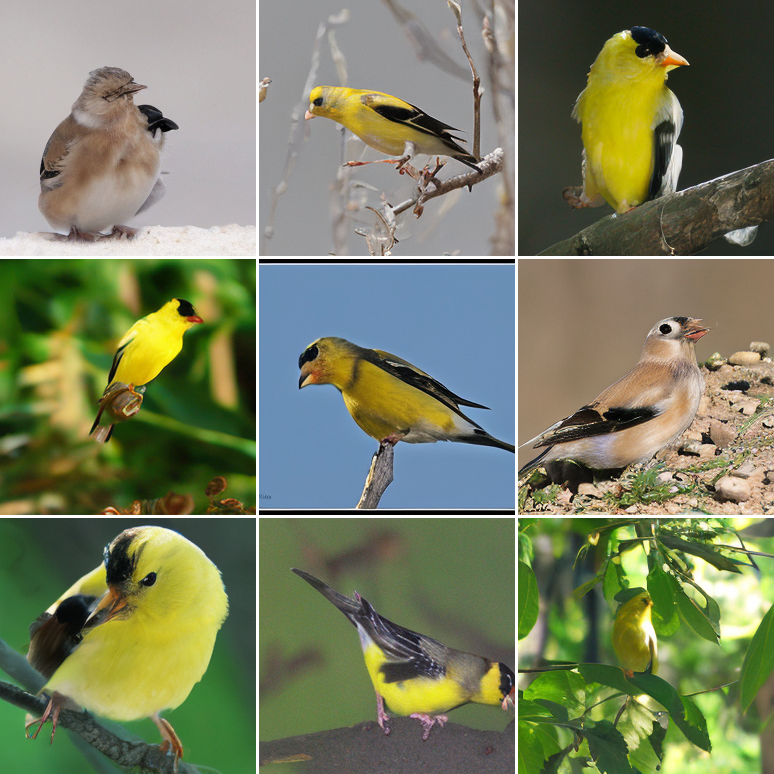} &
    \includegraphics[width=0.46\textwidth]{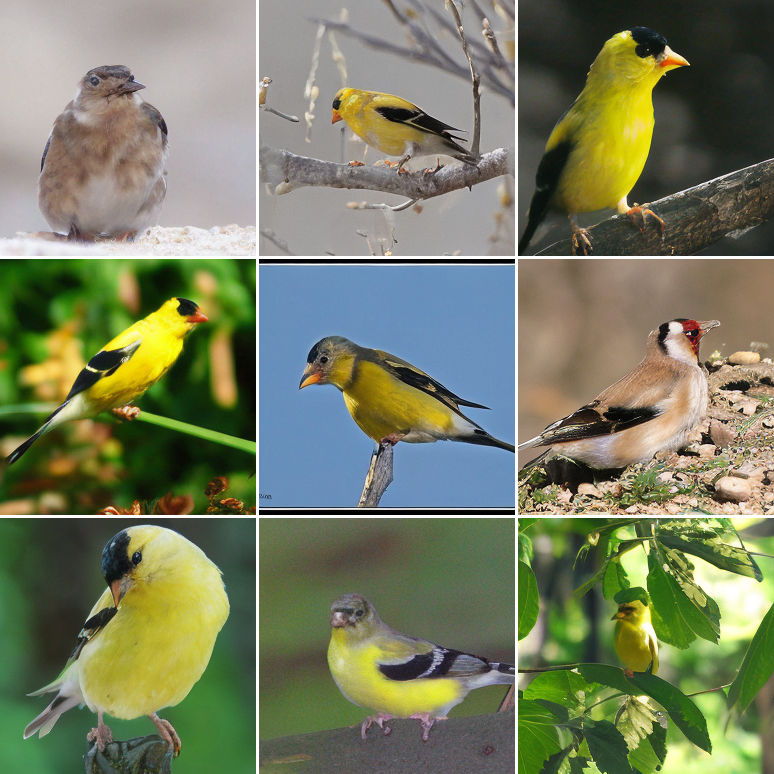} \\[-0.2em]
    \multicolumn{2}{c}{\small Goldfinch} \\[1.0em]

    \includegraphics[width=0.46\textwidth]{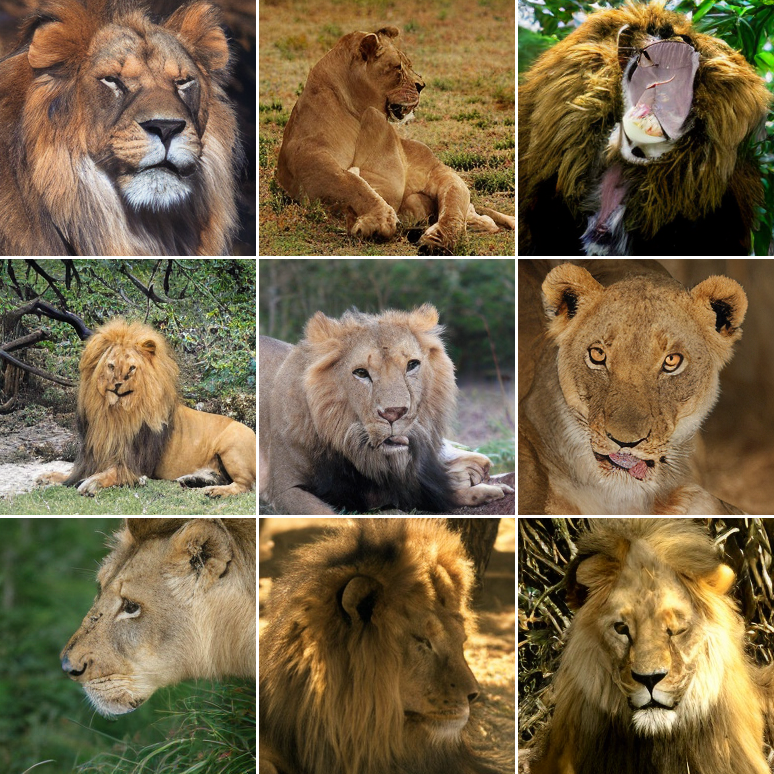} &
    \includegraphics[width=0.46\textwidth]{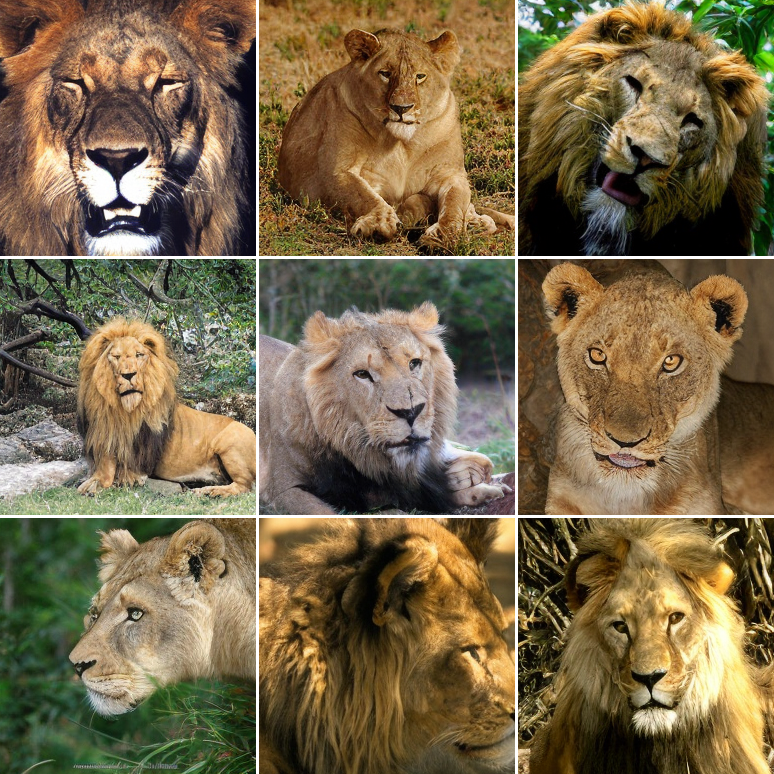} \\[-0.2em]
    \multicolumn{2}{c}{\small Lion}
    \end{tabular}
    \caption{Qualitative comparison on IMM ImageNet-256 (Part I).}
\end{figure}

\clearpage % Forces the next figure onto the next page

% --- PART 2 (Remaining 2 Rows) ---
\begin{figure}[t!]
    \centering
    \addtocounter{figure}{-1} % Keeps the figure number as 12
    \begin{tabular}{@{}cc@{}}
    \textbf{\large Standard Outputs} & \textbf{\large GMO Outputs} \\[0.5em]
    
    \includegraphics[width=0.48\textwidth]{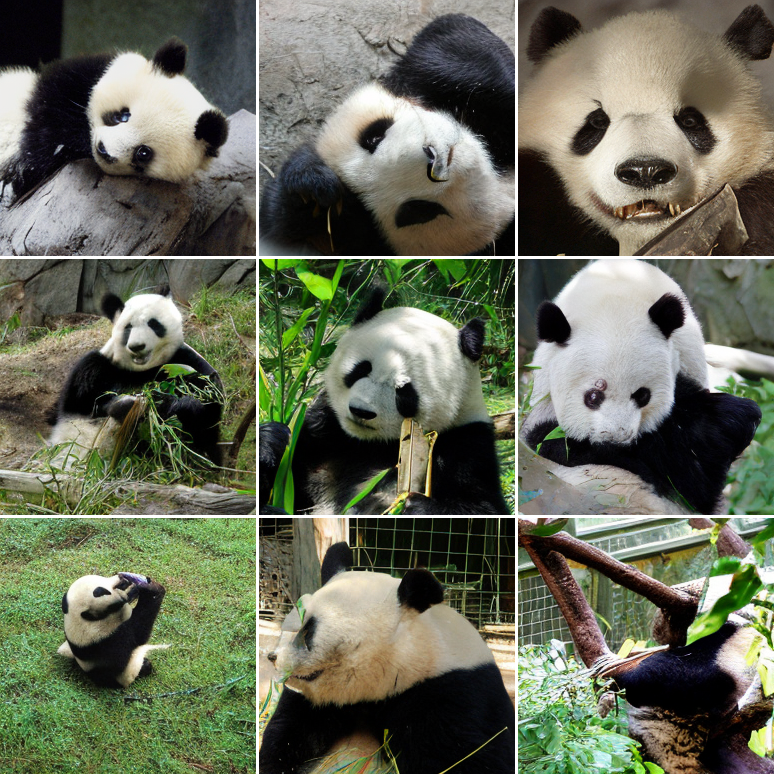} &
    \includegraphics[width=0.48\textwidth]{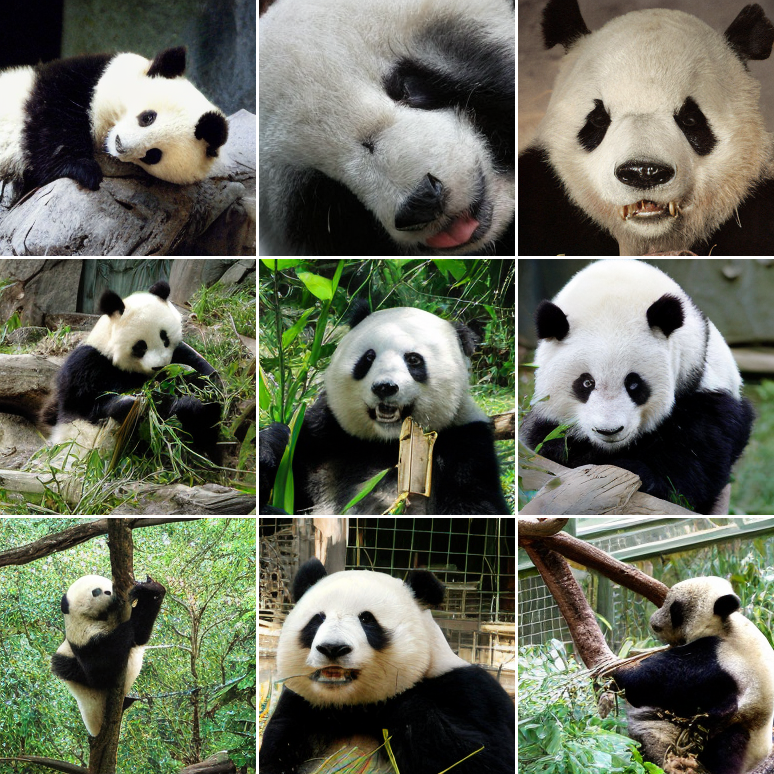} \\[-0.2em]
    \multicolumn{2}{c}{\small Giant Panda} \\[1.0em]

    \includegraphics[width=0.48\textwidth]{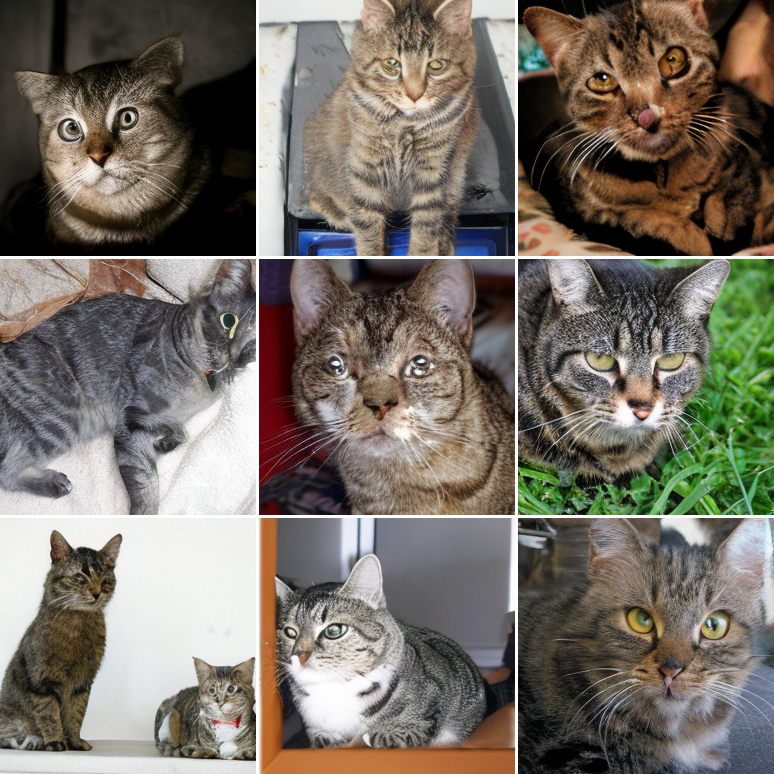} &
    \includegraphics[width=0.48\textwidth]{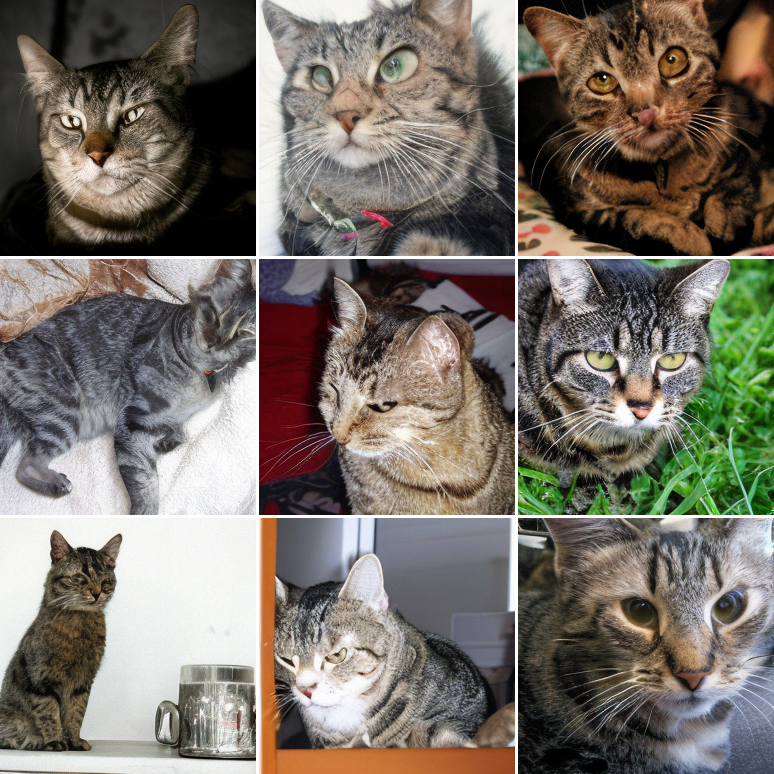} \\[-0.2em]
    \multicolumn{2}{c}{\small Tabby Cat}
    \end{tabular}
    \caption{Qualitative comparison on IMM ImageNet-256 (Part II).}
\end{figure}

% \begin{figure}[ht]
%   \centering
%   \begin{subfigure}[b]{0.32\textwidth}
%     \centering
%     \includegraphics[width=\textwidth]{figures/grid_standard_seed1.png}
%   \end{subfigure}
%   \hfill
%   \begin{subfigure}[b]{0.32\textwidth}
%     \centering
%     \includegraphics[width=\textwidth]{figures/grid_perturbation_a0.1_seed1.png}
%   \end{subfigure}
%   \hfill
%   \begin{subfigure}[b]{0.32\textwidth}
%     \centering
%     \includegraphics[width=\textwidth]{figures/grid_gmo_a0.1_seed1.png}
%   \end{subfigure}
  
%   \begin{subfigure}[b]{0.32\textwidth}
%     \centering
%     \includegraphics[width=\textwidth]{figures/grid_standard_seed2.png}
%     \caption*{Standard Outputs}
%   \end{subfigure}
%   \hfill
%   \begin{subfigure}[b]{0.32\textwidth}
%     \centering
%     \includegraphics[width=\textwidth]{figures/grid_perturbation_a0.1_seed2.png}
%     \caption*{Perturbation}
%   \end{subfigure}
%   \hfill
%   \begin{subfigure}[b]{0.32\textwidth}
%     \centering
%     \includegraphics[width=\textwidth]{figures/grid_gmo_a0.1_seed2.png}
%     \caption*{Geometrically Modified Outputs}
%   \end{subfigure}
%   \caption{
%   Additional examples of those shown in \Cref{fig:model_outputs}.
%   }
%   \label{fig:model_outputs_more}
% \end{figure}

\begin{figure}[ht]
    \centering
    \begin{subfigure}[b]{0.18\textwidth}
        \centering
        \includegraphics[width=\textwidth]{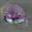}
    \end{subfigure}
    \hfill
    \begin{subfigure}[b]{0.18\textwidth}
        \centering
        \includegraphics[width=\textwidth]{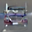}
    \end{subfigure}
    \hfill
    \begin{subfigure}[b]{0.18\textwidth}
        \centering
        \caption*{\small Exact}
        \includegraphics[width=\textwidth]{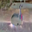}
    \end{subfigure}
    \hfill
    \begin{subfigure}[b]{0.18\textwidth}
        \centering
        \includegraphics[width=\textwidth]{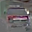}
    \end{subfigure}
    \hfill
    \begin{subfigure}[b]{0.18\textwidth}
        \centering
        \includegraphics[width=\textwidth]{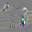}
    \end{subfigure}

    \begin{subfigure}[b]{0.18\textwidth}
        \centering
        \includegraphics[width=\textwidth]{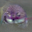}
    \end{subfigure}
    \hfill
    \begin{subfigure}[b]{0.18\textwidth}
        \centering
        \includegraphics[width=\textwidth]{figures/exact_u1_latent1.png}
    \end{subfigure}
    \hfill
    \begin{subfigure}[b]{0.18\textwidth}
        \centering
        \caption*{\small Approximate}
        \includegraphics[width=\textwidth]{figures/exact_u1_latent2.png}
    \end{subfigure}
    \hfill
    \begin{subfigure}[b]{0.18\textwidth}
        \centering
        \includegraphics[width=\textwidth]{figures/exact_u1_latent3.png}
    \end{subfigure}
    \hfill
    \begin{subfigure}[b]{0.18\textwidth}
        \centering
        \includegraphics[width=\textwidth]{figures/exact_u1_latent4.png}
    \end{subfigure}
    \caption{
    In \Cref{fig:computational_approximation}, we quantitatively demonstrate that power iteration can be used to accurately approximate the singular vectors of the input-output Jacobians of generative models.
    Here, we qualitatively support this by showing the exact (top) and approximate top left singular vectors of these input-output Jacobians.
    }
    \label{fig:approximate_singular_vectors}
\end{figure}

\begin{figure}[ht]
    \centering
    \begin{subfigure}[b]{0.18\textwidth}
        \centering
        \includegraphics[width=\textwidth]{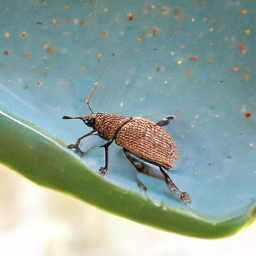}
    \end{subfigure}
    \hfill
    \begin{subfigure}[b]{0.18\textwidth}
        \centering
        \includegraphics[width=\textwidth]{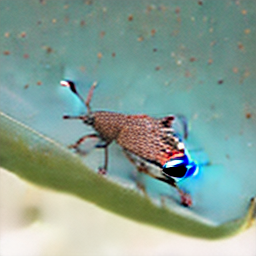}
    \end{subfigure}
    \hfill
    \begin{subfigure}[b]{0.18\textwidth}
        \centering
        \includegraphics[width=\textwidth]{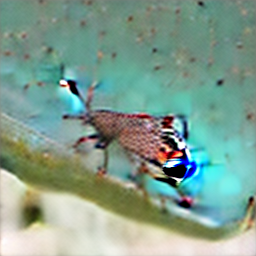}
    \end{subfigure}
    \hfill
    \begin{subfigure}[b]{0.18\textwidth}
        \centering
        \includegraphics[width=\textwidth]{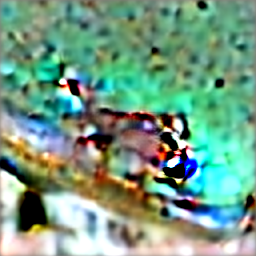}
    \end{subfigure}
    \hfill
    \begin{subfigure}[b]{0.18\textwidth}
        \centering
        \includegraphics[width=\textwidth]{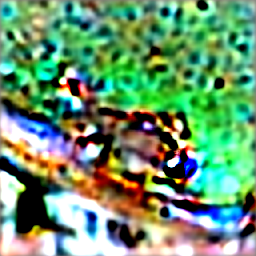}
    \end{subfigure}

    \begin{subfigure}[b]{0.18\textwidth}
        \centering
        \includegraphics[width=\textwidth]{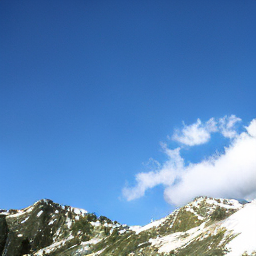}
        \caption*{$\alpha=0.0$}
    \end{subfigure}
    \hfill
    \begin{subfigure}[b]{0.18\textwidth}
        \centering
        \includegraphics[width=\textwidth]{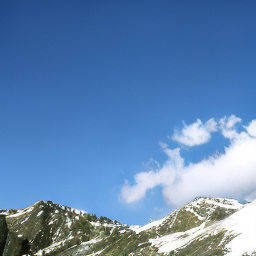}
        \caption*{$\alpha=0.1$}
    \end{subfigure}
    \hfill
    \begin{subfigure}[b]{0.18\textwidth}
        \centering
        \includegraphics[width=\textwidth]{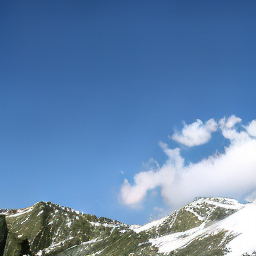}
        \caption*{$\alpha=0.2$}
    \end{subfigure}
    \hfill
    \begin{subfigure}[b]{0.18\textwidth}
        \centering
        \includegraphics[width=\textwidth]{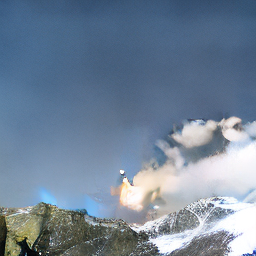}
        \caption*{$\alpha=0.6$}
    \end{subfigure}
    \hfill
    \begin{subfigure}[b]{0.18\textwidth}
        \centering
        \includegraphics[width=\textwidth]{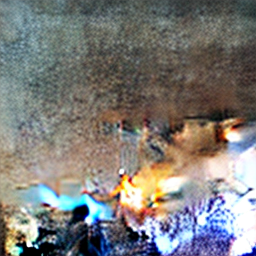}
        \caption*{$\alpha=1.0$}
    \end{subfigure}
    \caption{
    Here we provide additional examples to that illustrated in the bottom row of \Cref{fig:mode_seeking}.
    }
    \label{fig:mode_seeking2}
\end{figure}

\begin{figure}[ht]
    \centering
    \begin{subfigure}[b]{0.32\textwidth}
        \centering
        \includegraphics[width=\textwidth]{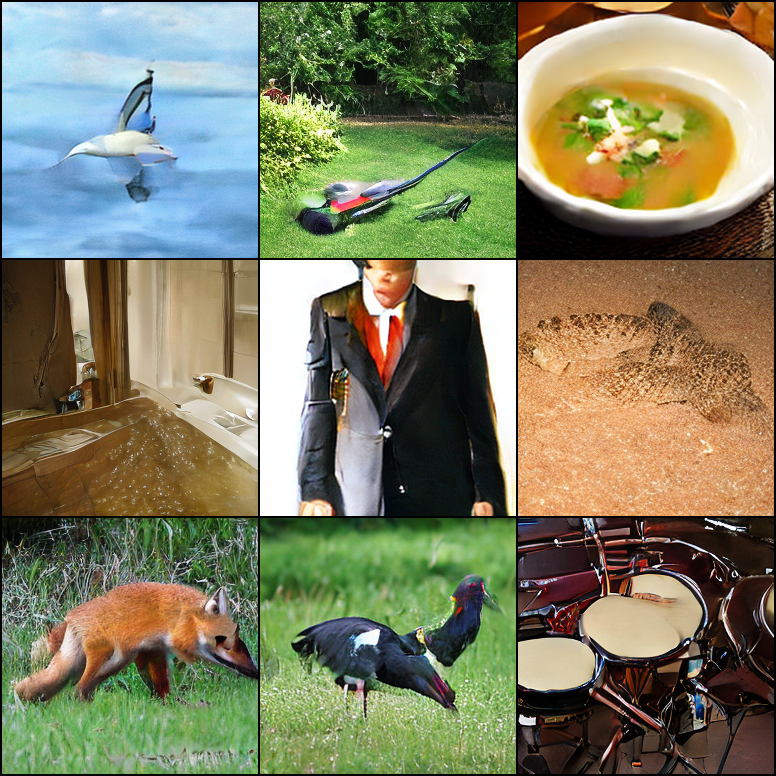}
        \caption*{$\alpha=0.0$}
    \end{subfigure}
    \hfill
    \begin{subfigure}[b]{0.32\textwidth}
        \centering
        \includegraphics[width=\textwidth]{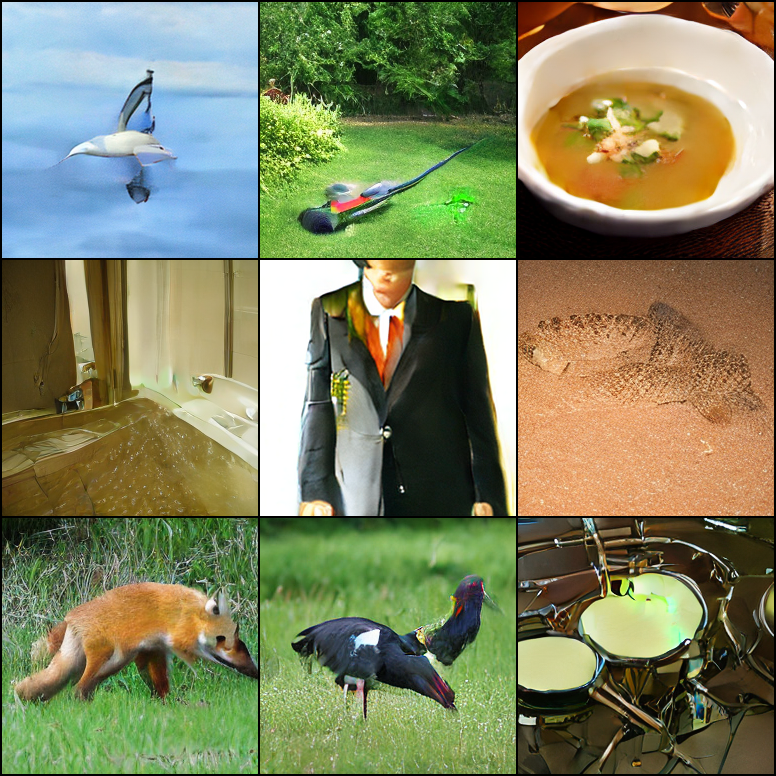}
        \caption*{$\alpha=0.05$}
    \end{subfigure}
    \hfill
    \begin{subfigure}[b]{0.32\textwidth}
        \centering
        \includegraphics[width=\textwidth]{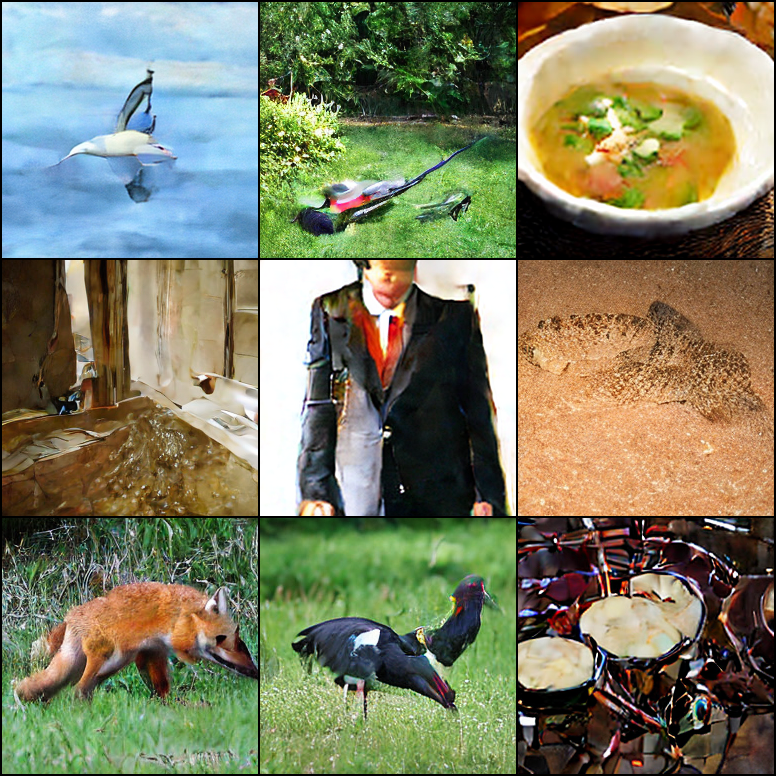}
        \caption*{$\gamma=0.05$}
    \end{subfigure}
    \caption{
    Here we provide additional examples to complement the third panel of \Cref{fig:b2_imagenet256}.
    }
    \label{fig:gaussian_examples}
\end{figure}

% NeurIPS checklist omitted in the ICLR conversion.

\end{document}